\crefname{hypothesis}{Hypothesis}{Hypotheses}
\title{STARK denoises spatial transcriptomics images via \\
adaptive regularization}
\author{Sharvaj~Kubal\footnotemark[1] \footnotemark[5]
\and Naomi~Graham\footnotemark[2] \footnotemark[3] \footnotemark[5]
\and Matthieu~Heitz\footnotemark[1]
\and Andrew~Warren\footnotemark[1]
\and Michael~P.~Friedlander\footnotemark[1] \footnotemark[2]
\and Yaniv~Plan\footnotemark[1]
\and Geoffrey~Schiebinger\footnotemark[1] \footnotemark[4]
}
\newcommand{\lsc}{l}
\newcommand*{\addFileDependency}[1]{%
  \typeout{(#1)}%
  \@addtofilelist{#1}%
  \IfFileExists{#1}{}{\typeout{No file #1.}}%
}
\newcommand{\bR}{\mathbb{R}}
\newcommand{\fro}[1]{\|#1\|_{F}}
\newcommand{\norm}[1]{\|#1\|}
\newcommand{\euc}[1]{\left| #1 \right|}
\newcommand{\op}[1]{\|#1\|_{\mathrm{op}}}
\newcommand{\argmin}{\mathop{\arg \min}}
\newcommand{\Jinf}{\mathbf{J}_\infty}
\newcommand{\indicator}{\mathds{1}}
\newcommand{\minimize}{\mathop{\mathrm{minimize}}}
\newcommand{\tr}{\mathop{\mathrm{tr}}}
\begin{document}

\maketitle

\begingroup
\renewcommand\thefootnote{\fnsymbol{footnote}}
\footnotetext[1]{Department of Mathematics, University of British Columbia.}
\footnotetext[2]{Department of Computer Science, University of British Columbia.}
\footnotetext[3]{Department of Mathematics and Mathematical Statistics, Ume\r{a} University.}
\footnotetext[4]{Premium Research Institute for Human Metaverse Medicine (WPI-PRIMe), The University of Osaka.}
\footnotetext[5]{Co-first authors.}
\endgroup

\begin{abstract}
We present an approach to denoising spatial transcriptomics images that is particularly effective for uncovering cell identities in the regime of ultra-low sequencing depths, and also allows for interpolation of gene expression.
The method -- Spatial Transcriptomics via Adaptive Regularization and Kernels (STARK) -- augments kernel ridge regression with an incrementally adaptive graph Laplacian regularizer. In each iteration, we (1) perform kernel ridge regression with a fixed graph to update the image, and (2) update the graph based on the new image.  The kernel ridge regression step involves reducing the infinite dimensional problem on a space of images to finite dimensions via a modified representer theorem.  Starting with a purely spatial graph, and updating it as we improve our image makes the graph more robust to noise in low sequencing depth regimes. We show that the aforementioned approach optimizes a block-convex objective through an alternating minimization scheme wherein the sub-problems have closed form expressions that are easily computed. This perspective allows us to prove convergence of the iterates to a stationary point of this non-convex objective. Statistically, such stationary points converge to the ground truth with rate $\mathcal{O}(R^{-1/2})$ where $R$ is the number of reads.
In numerical experiments on real spatial transcriptomics data, the denoising performance of STARK, evaluated in terms of label transfer accuracy, shows consistent improvement over the competing methods tested.

\end{abstract}

\begin{keywords}
  Non-parametric regression, image denoising, block-coordinate descent, Kurdyka-Łojasiewicz property, spatial transcriptomics, single-cell RNA sequencing
\end{keywords}

\begin{AMS}
  62G08, 46N60, 65K10, 92-04 
\end{AMS}

\section{Introduction}
\label{sec: Introduction}

The substantial success of {\em spatial transcriptomics}~\cite{marx_MethodYearSpatially_2021} has led to massive new datasets, with high dimensional \emph{images of gene expression}. 
The tissue of interest is placed on a chip that captures RNA molecules from individual cells; material from each pixel is tagged with a unique \lq DNA barcode' that allows everything to be sequenced in parallel. 

Recent technological breakthroughs (e.g. Stereo-seq~\cite{chen_SpatiotemporalTranscriptomicAtlas_2022}) have dramatically increased the area of tissue which can be profiled and have achieved finer spatial resolutions.
As new technologies increase the scale of what can be profiled, the raw cost of sequencing RNA from cells over large areas will quickly become the limiting factor.
The cost of sequencing is paid on a \textit{per read} basis, where a \emph{read} refers to a captured RNA molecule being sequenced, and large scales typically require a large number of reads. 
Reducing the number of reads as a cost reduction strategy results in excessive noise in the measured gene expression data, including the well-known problem of \textit{dropouts} \cite{li_AccurateRobustImputation_2018}, where the observed \emph{counts matrix} contains excessive zero entries. Hence, there is a need to develop denoising algorithms for spatial transcriptomics.

In this work, we present an adaptive regularization scheme for denoising spatial transcriptomics images in the regime where the number of reads per pixel, also known as \emph{sequencing depth}, is extremely low---around 100 reads per pixel. Our method is based on kernel ridge regression and employs ideas from \citet{facciolo_exemplar-based_2009}, \citet{peyre_NonlocalRegularizationInverse_2011a} and \citet{pang_GraphLaplacianRegularization_2017} to achieve accurate image recovery with provable asymptotic guarantees.

\begin{figure}
    \centering
    \includegraphics[width=0.8\textwidth, trim=0 40 0 0, clip]{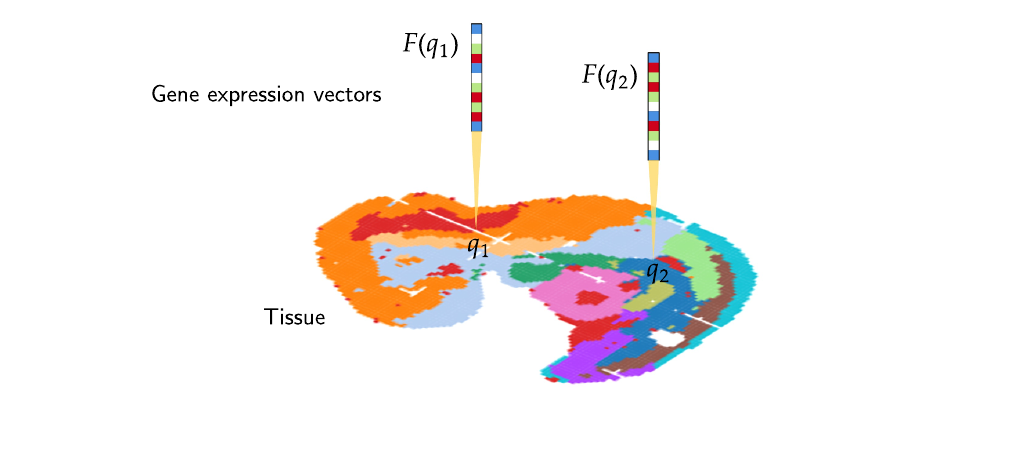}
    \caption{\textbf{Gene expression image}. Gene expression vectors at each spatial location in the tissue encode the relative abundances of mRNAs at those locations.}
    \label{fig:gene expression image}
\end{figure}

\subsection{Mathematical formulation}

 In this section, we formulate the problem of denoising a {gene expression image}.

\subsubsection{Gene expression images and denoising}
Our main object of interest is the transcriptional state of the cells in a tissue.
{Consider a spatial domain $\mathcal{Q}$ in $\mathbb{R}^{2}$ (with straightforward extensions to $\mathbb{R}^n, n\geq 3$). A gene expression image (or \textit{spatial transcriptomics image}) $F: \mathcal{Q} \to \mathbb{R}^d$ associates to each spatial location $q \in \mathcal{Q}$, a \emph{gene expression vector} $F(q) \in \mathbb{R}^d$, where $d$ denotes the number of genes in the genome\footnote{This is usually around $d \approx 10^4$.}; see \Cref{fig:gene expression image}. The vector $F(q)$ encodes the relative abundances of mRNA transcripts for each gene at $q$, and, in fact, lives in the probability simplex $\Delta_d \subseteq \mathbb{R}^d$ of non-negative vectors that sum to 1.}

Given a ground truth gene expression image $F^\star$ and a finite set of pixels $\{ q_{1}, \dots,q_{m} \} \subseteq \mathcal{Q}$, we obtain noisy measurements 
\begin{equation*}
Y_{i} = F^\star(q_{i}) + V_{i},
\end{equation*}
where $V_i$ denotes measurement noise. This gives a noisy gene expression matrix $\mathbf{Y} \in \mathbb{R}^{m \times d}$ whose rows are $Y_{1}, \dots, Y_{m}$. 
The \emph{denoising} problem can be summarized as
\begin{equation}
\begin{aligned}
\text{Estimate} \quad & F^\star(q_{1}), \dots, F^\star(q_{m}) \\
\text{from}\quad & Y_{1}, \dots, Y_{m}.
\end{aligned}\label{eq: denoising_problem}
\end{equation}
A more general problem is that of interpolating $F^\star(q)$ at arbitrary locations $q \in \mathcal{Q}$; here, one is effectively \emph{learning} $F^\star$ in the sense of nonparametric regression:
\begin{equation}
\begin{aligned}
\text{Estimate} \quad &F^\star \,\,\text{on } \mathcal{Q} \\
\text{from}\quad & Y_{1}, \dots, Y_{m}.
\end{aligned}\label{eq: interpolation_problem}
\end{equation}

Image denoising~\cite{rudin_NonlinearTotalVariation_1992, chambolle_FirstOrderPrimalDualAlgorithm_2011, buades_ReviewImageDenoising_2005} and nonparametric regression~\cite{geer_EmpiricalProcessesMEstimation_2000, bartlett_LocalRademacherComplexities_2005, tsybakov2003introduction} have been studied extensively in general settings, but there exist additional challenges in the context of spatial transcriptomics. Gene expression vectors are high-dimensional ($d \simeq 10^4$), and contain important biological information on cell identities that must be retained. Next, gene expression images are susceptible to high levels of multinomial noise in low-reads regimes~\cite{heimberg_LowDimensionalityGene_2016, townes_FeatureSelectionDimension_2019}, leading to very sparse observed matrices $\mathbf{Y}$. Lastly, on a more practical note, the pixel locations $\{q_1, \dots, q_m\}$ can be irregularly placed, depending on particular sequencing platforms such as 10x Genomics’ Visium~\cite{stahl_VisualizationAnalysisGene_2016}, Slide-seq~\cite{rodriques_SlideseqScalableTechnology_2019}. As a result, many image denoising techniques that work on matrix or tensor data (encoding a regular grid of pixels) do not work out of the box.

\subsubsection{Multinomial sequencing noise}\label{subsubsec: noise models}

Multinomial sequencing models are standard in the literature~\cite{heimberg_LowDimensionalityGene_2016, townes_FeatureSelectionDimension_2019}, and are closely related to the popular Poisson model~\cite{wang_GeneExpressionDistribution_2018, zhang_DeterminingSequencingDepth_2020}.
{The noisy expression vectors $Y_i$ are generated via a two-step procedure. In the first step, we select the number of measurements for each pixel:}
\begin{equation}
\label{eq: mult_pixelwise_reads}
(R_{1},\dots,R_{m}) \sim \mathrm{Multinomial}(R, \mathbf{u}),
\end{equation}
where $R$ denotes the total number of reads,  $R_i$ denotes the number of reads for pixel~$i$, and 
the vector $\mathbf{u}=(u_{1},\dots,u_{m}) \in \Delta_{m}$ models the normalized sampling frequencies of the pixels, specifying how often a read is sampled from each cell. 
In the second step, the gene-expression profiles $Y_i$ are generated conditional on $R_1,\ldots, R_m$ as follows:
\begin{align}
\label{eq: Y_given_reads}
Y_{i} \mid (R_{1},\dots,R_{m}) \sim \frac{1}{R_{i}}\indicator_{\{ R_{i}\geq 1 \}} \mathrm{Multinomial}(R_{i}, F^\star(q_{i})),
\end{align}
with $Y_{1},\dots,Y_{m}$ being independent conditional on the reads $(R_{1},\dots,R_{m})$. The vector $Y_i$ consists of empirically observed relative abundances of genes at pixel $q_i$, giving an empirical gene expression vector. The corresponding noise is then defined as 
\begin{align*}
V_{i}:=Y_{i}-F^\star(q_{i}),
\end{align*}
whose expected squared norm can be shown to be
\begin{align}
\label{eq: expected norm noise}
    \mathbb{E}\left[ \euc{Y_{i}-F^\star(q_{i})}^2 \mid (R_{1},\dots, R_m) \right] = \frac{1}{R_{i}} (1-\euc{F^\star(q_{i})}^2)
\end{align}
for $R_i \geq 1$. (Note that we use $\euc{\cdot}$ to denote Euclidean norm.)

\begin{remark}
The vector $\mathbf{u}$ of normalized sampling frequencies is unspecified, and may be thought of as a nuisance parameter. Regardless, our denoising algorithm is agnostic of $\mathbf{u}$ and simply assuming $u_{\min}:= \min_i u_i>0$ is sufficient for our asymptotic bounds. To achieve good denoising performance in practice however, we would like close-to-uniform sampling frequencies in the sense $u_i \geq c/m$ for some absolute constant $c>0$.
\end{remark}

\begin{remark}
    The two-step procedure above can also be packaged as one-step multinomial sampling, with an appropriately scaled matrix of parameters.
\end{remark}

\begin{remark}
Our results also allow for the Poisson noise model, which is obtained by replacing \eqref{eq: mult_pixelwise_reads} with $R_{i}\overset{\mathrm{iid}}{\sim}\mathrm{Poisson}(u_{i}R)$, while keeping \eqref{eq: Y_given_reads} the same. Here, $R$ would be the \textit{expected} total reads, instead of the actual number of reads which is stochastic.
\end{remark}

\subsection{Method overview}
\label{subsec: method_overview}
Our goal is to design a denoising strategy for problems~\eqref{eq: denoising_problem} and \eqref{eq: interpolation_problem} that spatially smooths the initially rough noisy image, but does so in an anisotropic manner to preserve inter-region boundaries between distinct cell types. Along these lines, we propose a variant of kernel ridge regression with additional graph Laplacian regularization that iteratively adapts to the structure of $F^\star$. This is realized by jointly optimizing an objective over the space of gene expression images $F$ and graph weights $W$ that determine the Laplacian regularizer. We describe the various components of this optimization problem below.

The image $F$ is constrained to $\mathcal H^d$, a \textit{vector-valued Reproducing Kernel Hilbert Space} (RKHS) with norm $\|\cdot\|_{\mathcal{H}^d}$ that we detail in \Cref{subsec: variational_formulation} (see~\eqref{eq: H_d_def}). To specify $W$, we first fix a (directed) graph $\mathcal{G}$ on the set of pixel locations $\{q_1,\dots,q_m\}$, with an edge set $\mathcal{E}_\tau$ ($\tau >0$) consisting of all pairs $(i,k) \in [m]\times[m]$ such that $\euc{q_i-q_k} \leq \tau$.
Then, the weights of all edges in $\mathcal{E}_\tau$ are described by $W$ which is constrained to the set
\begin{align}
    \label{eq: weights_set_C}
    \mathcal{C}_\tau := \left\{ W \in \mathbb{R}_+^{\mathcal{E}_\tau } \mid \textstyle\sum_{k:(i,k) \in \mathcal{E}_\tau} W_{(i,k)} = 1\,\, \forall i \in [m]\right\}.
\end{align}
We will always identify $W \in \mathcal{C}_\tau$ with a row-stochastic matrix $[W_{ik}]_{i,k=1}^m \in \mathbb{R}_+^{m \times m}$ such that $W_{ik}=0$ if $(i,k)\notin \mathcal{E}_\tau$.
The objective that we minimize is
\begin{equation}\label{eq: objective}
\begin{aligned}
    J(F,W) :=&\,\,  \underbrace{ \frac{1}{m} \sum_{i=1}^m \euc{ Y_i - F(q_i)}^2 }_{ \text{data fit} } + \underbrace{ \lambda \norm{F}_{\mathcal H^d}^2 }_{ \text{ridge regularization} }  \\
&+  \underbrace{ \omega\frac{1}{2m} \sum_{(i,k) \in \mathcal{E}_\tau}  W_{ik} \euc{F(q_i) - F(q_k)}^2 }_{ \text{graph Laplacian regularization} } + \underbrace{ \omega  E(W) }_{ \text{graph entropy} },
\end{aligned}
\end{equation}
where $E(\cdot)$ is an entropy functional (following \citet{facciolo_exemplar-based_2009}, \citet{peyre_NonlocalRegularizationInverse_2011a}) defined as
\begin{align*}
    E(W) := \frac{1}{2m} \sum_{(i,k) \in \mathcal{E}_\tau} \left\{s_{1}^2 \,W_{ik}\left( \log W_{ik}-1 \right) + \frac{s_{1}^2}{s_{2}^2}\,W_{ik} \euc{q_{i}-q_{k}}^2 \right\}.
\end{align*}
Note that $\lambda >0$, $\omega\geq 0$, and $s_1,s_2 >0$ are hyperparameters. Note also that $J(\cdot, \cdot)$ is block-wise convex, i.e. convex \emph{separately} in both $F$ and $W$. Let us tabulate the optimization problem as
\begin{align}
    \minimize_{F \in \mathcal{H}^d, W \in \mathcal{C}_\tau} \quad J(F,W). \tag{P} \label{eq: min_J_problem}
\end{align}
We briefly describe the roles of the four terms in~\eqref{eq: objective} below.

\paragraph{Data fit}
This term simply encourages the values of $F$ at the pixels $q_1, \dots q_m$ to be close to the noisy observations $Y_1, \dots, Y_m$ in a least-squares sense.\footnote{A likelihood-based loss function under of multinomial noise would be the Kullback-Leibler (KL) divergence. We found in experiments, however, that using the least squares loss leads to better denoising performance in certain biological meaningful metrics. It also results in simpler minimization algorithms.}

\paragraph{Ridge regularization}
Regularizing with the squared RKHS norm $\|\cdot\|_{\mathcal{H}^d}^2$ makes the objective $J$ strongly convex and coercive in $F$. Importantly, it also allows for a variant of the \emph{representer theorem}, which boils down the (potentially) infinite-dimensional problem of minimizing $J(\cdot, W)$ over $F \in \mathcal{H}^d$ to a tractable finite-dimensional problem. Lastly, depending on the particular choice of RKHS, the ridge regularization term can also promote smoothness in $F$.

\paragraph{Graph Laplacian regularization}
This term promotes smoothness in $F$ by getting nearby pixels $q_i, q_k$ to have similar gene expressions $F(q_i), F(q_k)$. Such smoothing can be done {anisotropically}, by using image gradient information, to prevent excessive blurring of inter-region boundaries. The graph weights $W$ are meant to learn and encode such anisotropy. 
In \Cref{subsec: disc Dirichlet Laplacian}, we interpret the graph Laplacian regularizer as a discretization of an anisotropic Dirichlet energy.

\paragraph{Graph entropy}
A typical way to adapt $W$ to the structure of $F$ is via Gaussian weights
\begin{align}
       W = \mathrm{RowNormalize}\left[ \exp \left( -|F(q_i)-F(q_k)|^2 /s_{1}^2 \right)  \exp \left( -|q_{i}-q_{k}|^2 /s_{2}^2 \right) \right]_{(i,k)\in \mathcal{E}_\tau}
       \label{eq: gaussian_weights_overview}
\end{align}
(here, $\mathrm{RowNormalize}(\cdot)$ scales the rows to sum to 1).
Directly prescribing such a nonlinear dependence of $W$ on $F$, however, makes the optimization problem nonconvex in $F$, and difficult to solve. On the other hand, incorporating the graph entropy term into the objective produces these Gaussian weights as an optimality condition, without prescribing any relationships a priori. More precisely, fixing $F$ and minimizing $J(F,\cdot)$ over $\mathcal{C}_\tau$ results in the weights~\eqref{eq: gaussian_weights_overview}.

The function $J(\cdot,\cdot)$ is in fact block-wise strongly convex on $\mathcal{H}^d \times \mathcal{C}_\tau$. \Cref{loc:statement_(asymptotic_rate_with_respect_to_the_reads)} in \Cref{sec: convergence_theory} shows that all stationary points $(\bar F, \bar W)$ of $J$ yield consistent estimators of the ground truth 
as the number of reads $R \uparrow \infty$. \Cref{thm: strong_convergence} on the other hand shows that the sequence of iterates $(F^t, W^t)_{t \in \mathbb{N}}$ generated by the alternating updates
\begin{equation}
\begin{aligned}
F^{t+1} &= \argmin_{F \in \mathcal H^d} J(F, W^{t}) \\
W^{t+1} &= \argmin_{W \in \mathcal C_\tau} J(F^{t+1},W),
\end{aligned}\tag{U} \label{eq: alternating_updates}
\end{equation}
does in fact converge to a stationary point $(\bar F, \bar W)$ of $J$. Finally, closed-form solutions are available for both subproblems resulting in \Cref{alg:iterative_update}.

\subsection{Roadmap}

In \Cref{sec: krr_theory} we give a mathematical foundation for our choice of objective function  \cref{eq: objective} based on minimal assumptions. We show that this objective occurs naturally when optimizing over an RKHS, and the additional regularizers impose a smoothness prior and ensure coercivity of the objective.
{\Cref{sec: convergence_theory} presents theoretical results on statistical and computational convergence, and provides closed-form expressions for the alternating updates. \Cref{sec: numerics} conducts numerical experiments on real spatial transcriptomics data, evaluating the performance of our method against SPROD~\cite{wang_SprodDenoisingSpatially_2022}, GraphPCA~\cite{yang_GraphPCAFastInterpretable_2024} and STAGATE~\cite{dong_DecipheringSpatialDomains_2022}. Finally, \Cref{sec: discussion} provides a concluding discussion.}

\subsection{Summary of contributions}
\label{subsec: contributions}
{Our main contributions are listed below.
\begin{itemize}
\item We develop the method Spatial Transcriptomics via Adaptive Regularization and Kernels (STARK) -- an iteratively adaptive approach for denoising spatial transcriptomics data that is particularly effective in low sequencing depth regimes. We place STARK into a broader context by connecting it to anisotropic smoothing and regularization.
\item We prove that the iterates converge to a stationary point of the objective, and establish asymptotic bounds on the statistical estimation error of such stationary points.
\item We propose a testing strategy for denoising performance on real spatial transcriptomics data, that is simple, general and practically relevant. Briefly, given a (high sequencing depth) counts matrix, we first \emph{downsample} it to significantly fewer reads. The downsampled counts matrix is fed into the denoising method being tested, and the output is compared with the original (normalized, fully-sampled) counts matrix.
\item {Our numerical experiments show STARK's improved denoising performance over competing methods as evaluated in biologically relevant metrics such as \emph{label transfer accuracy}, which measures the extent to which cell identities are preserved.}
\end{itemize}
}

\subsection{Related work}

We review related work on the denoising problem originating in single-cell analysis as well as in the more traditional field of image processing.

\paragraph{Denoising gene expression matrices}
There has been substantial work in the last few years on denoising or imputing counts matrices resulting from single-cell RNA sequencing~\cite{patruno_ReviewComputationalStrategies_2021}. 
MAGIC~\cite{vandijk_RecoveringGeneInteractions_2018} uses data diffusion to learn an underlying manifold in gene expression space, and performs denoising by effectively projecting the measured cells onto the manifold. 
SAVER~\cite{huang_SAVERGeneExpression_2018} employs
an empirical Bayes-like approach to denoising. {scImpute}~\cite{li_AccurateRobustImputation_2018} uses a targeted imputation strategy instead of globally denoising the counts matrix. 
McImpute~\cite{mongia_McImputeMatrixCompletion_2019} uses a slightly relaxed version of the targeted approach via low-rank matrix completion. More recent work has also used autoencoders \cite{eraslan_SinglecellRNAseqDenoising_2019, zhao_AETPGGNovelAutoencoderbased_2022} for denoising. Despite their success on single-cell RNA sequencing datasets however, these methods face a severe disadvantage in the context of spatial transcriptomics because they do not use any spatial information.

\paragraph{Denoising spatial transcriptomics images}
Pioneering work on denoising methods tailored to gene expression images with spatial structure was carried out by \citet{wang_SprodDenoisingSpatially_2022}. Their method SPROD uses a t-SNE-like procedure to learn a spatially-aware latent graph on the pixels. This graph then denoises the observed data via a regularized least squares step
\begin{equation}
\label{eq: sprod}
\minimize_{\mathbf{X}\in \bR^{m\times d}} \fro{\mathbf{Y}-\mathbf{X}}^2 + \omega \,\mathrm{tr}(\mathbf{X}^T\mathbf{L}\mathbf{X}),
\end{equation}
where $\mathbf{L}$ is the (unnormalized) graph Laplacian of the learnt latent graph. We realized though numerous experiments that graph Laplacian regularization as in~\eqref{eq: sprod} is very effective on real spatial transcriptomics data, often outperforming competing regularizers such as total variation. Additionally, its simplicity allows for closed-form solutions which are particularly convenient for quick testing and tuning even on large datasets. This provided motivation for including graph Laplacian regularization in our own objective function~\eqref{eq: objective}. 

SPROD also adapts its regularizer to the image $F^\star$, but unlike in the current work, employs a one-shot strategy: it directly uses the noisy gene expression matrix $\mathbf{Y}$ to learn its latent graph of pixel similarities. This strategy, which is similar to bilateral filtering~\cite{paris_BilateralFilteringTheory_2009}, generally performs well, but is naturally susceptible to problems in learning the graph when $\mathbf{Y}$ is very noisy. Intuitively, it is harder to detect inter-region boundaries and edges in noisier images. Our method, on the other hand, adapts incrementally by alternating between estimating $F^\star$ and adapting the weights $W$ to the current estimate of $F^\star$. Similar adaptive techniques have appeared in the image denoising and regularization literature in the form of iterative steering kernel regression~\cite{takeda_KernelRegressionImage_2007}, non-local regularization~\cite{peyre_NonlocalRegularizationInverse_2011a} and optimal graph Laplacian regularization~\cite{pang_GraphLaplacianRegularization_2017}, as well as in other areas such as terrain modelling~\cite{lang_AdaptiveNonStationaryKernel_} and optical flow estimation~\cite{middendorf_EmpiricallyConvergentAdaptive_2002}, providing a basis for this work. 
Traditional PDE-based methods on image smoothing and edge detection~\cite{perona_ScalespaceEdgeDetection_1990, catte_ImageSelectiveSmoothing_1992, tschumperle_VectorvaluedImageRegularization_2005} also contain notions of such incremental adaptivity.

Many recently developed methods for spatial transcriptomics denoising are based on dimensionality reduction and latent representations of gene expression vectors. Linear dimensionality reduction in the form of low-rank factorization is featured in SpatialPCA~\cite{shang_SpatiallyAwareDimension_2022}, GraphPCA~\cite{yang_GraphPCAFastInterpretable_2024}, nonnegative spatial factorization~\cite{townes_NonnegativeSpatialFactorization_2023} and the Master's thesis~\cite{kubal_TheoryAlgorithmsSpatial_2023}. GraphPCA in particular can be interpreted as a low-rank variant of SPROD, and benefits from the availability of closed-form solutions. MIST~\cite{wang_RegionspecificDenoisingIdentifies_2022} on the other hand uses a low-rank matrix completion approach following \citet{mongia_McImputeMatrixCompletion_2019}.
On the nonlinear side of latent representations, (graph-based) autoencoders play a key role in STAGATE~\cite{dong_DecipheringSpatialDomains_2022}, ADEPT~\cite{hu_ADEPTAutoencoderDifferentially_2023}, STGNNks~\cite{peng_STGNNksIdentifyingCell_2023}, SEDR~\cite{xu_UnsupervisedSpatiallyEmbedded_2024}, DenoiseST~\cite{cui_DenoiseSTDualchannelUnsupervised_2024} and DiffuST\cite{jiao_DiffuSTLatentDiffusion_2024}. We observed through experiments, however, that methods based on low-dimensional representations tend to perform worse in terms of label transfer accuracy, meaning that cell identities fail to be sufficiently preserved. This agrees with the results in \Cref{sec: numerics}, where we compare GraphPCA and STAGATE with SPROD and STARK.

\paragraph{Interpolating the image at new points}
Most existing denoising methods for spatial transcriptomics do not provide a direct scheme for estimating $F^\star$ at an arbitrary point $q \in \mathcal{Q}$ that is different from the measured pixels $q_1,\dots, q_m$. Notable exceptions include SpatialPCA~\cite{shang_SpatiallyAwareDimension_2022}, which uses distance kernels, and DIST~\cite{zhao_DISTSpatialTranscriptomics_2023}, which constructs a variational learning network. By framing our method through a variational problem over a space of functions $F$ (an RKHS specifically), we get evaluation at arbitrary points $q \in \mathcal{Q}$ essentially for free. 

\paragraph{Evaluating denoising performance}
Various testing strategies for denoising performance have been employed in the literature surveyed above, but no standardized framework appears to have emerged. \citet{wang_SprodDenoisingSpatially_2022} add iid Gaussian noise to a simulated gene expression matrix, which then goes through an exponential transformation and scaling step. The resulting noisy matrix is then denoised, and a relative $\ell^1$ denoising error is recorded. \citet{dong_DecipheringSpatialDomains_2022} simulate dropouts by randomly setting 30\% of the non-zero entries in a gene expression matrix to 0. After denoising, they compute the Pearson correlation coefficients of the denoised values to a reference ground truth. Recent work by \citet{cui_DenoiseSTDualchannelUnsupervised_2024} also simulates dropouts in a similar fashion, but evaluates clustering performance instead.
Here, we propose a denoising test based on downsampling reads that is simple, general and practically relevant (see \Cref{subsec: denoising_interpolation_tests}).

\subsection{Notation}

We summarize some of the notation used throughout the paper. First, $\euc{\cdot}$ and $\euc{\cdot}_{1}$ denote the Euclidean norm and $\ell^1$-norm of a vector respectively, $\op{\cdot}$ denotes the operator norm of a matrix, and $\langle \cdot,\cdot \rangle_{F}$ and $\fro{\cdot}$ are the Frobenius inner product and norm over matrices respectively. Similarly, $\langle \cdot,\cdot \rangle_{\mathcal{H}^d}$ and $\|\cdot\|_{\mathcal{H}^d}$ are the Hilbertian inner product and norm on the vector-valued Hilbert space $\mathcal{H}^d$.  
The trace function over matrices is written $\mathrm{tr}(\cdot)$, and the map $\text{RowNormalize}(\cdot)$ scales the rows of a matrix to sum to one, provided all rows are non-negative and non-zero. Given a matrix $\mathbf{M}$, its Moore-Penrose pseudoinverse is denoted by $\mathbf{M}^+$. Finally, given a function $F:\mathcal{Q}\to \mathbb{R}^d$ and the pixel locations $q_{1},\dots,q_{m}$, the corresponding boldface symbol $\mathbf{F}$ stands for an $m$-by-$d$ matrix with rows $F(q_{1}),\dots,F(q_{m})$.

\section{Kernel ridge regression with graph-Laplacian regularization}\label{sec: krr_theory}

We derive our method by first considering a fairly generic variational problem and specifying the components needed for its well-definedness and tractability. By considering a regularizer that depends both on space and gene expression information, we obtain an adaptive scheme for edge-aware denoising.

\subsection{Variational formulation of the regression problem} \label{subsec: variational_formulation}

We will start with a bare-bones variational problem geared towards recovering $F^\star$, and determine the structures necessary for well-definedness. Consider the following regularized problem
\begin{align}
    \minimize_{F \in \mathcal F} \frac{1}{m} \sum_{i =1}^m \euc{Y_i - F(q_i)}^2 + \omega S(F),
    \label{eq: Dirichlet reg}
\end{align}
where $\mathcal F$ is an appropriate choice of function class (which will end up being a vector-valued RKHS $\mathcal{H}^d$), and $\omega \geq 0$ is the regularization strength for the Dirichlet energy
\begin{align*}
S(F) = \|D F \|_{L^2(\mathcal Q)}^2 \equiv \sum_{j=1}^d\,\int_{\mathcal Q} \, \nabla F_{j}(q) \cdot \nabla F_{j}(q) dq.
\end{align*}
Here, $DF$ denotes the (weak) Jacobian of $F$.
Note that $S(\cdot)$ penalizes the gradients of the image $F$, thereby encouraging smoothness. This is closely related to Gaussian blurring~\cite{weickert_AnisotropicDiffusionImage_}. Smoothing isotropically can potentially blur edges, however. Following~\cite{perona_ScalespaceEdgeDetection_1990, weickert_AnisotropicDiffusionImage_, pang_GraphLaplacianRegularization_2017}, one can replace the Dirichlet energy above by its anisotropic counterpart
\begin{align}
S_{G}(F) := \sum_{j=1}^d\,\int_{\mathcal Q} \, \nabla F_{j}(q) \cdot G(q)^{-1} \nabla F_{j}(q)\,dq,\label{eq: anisotropic-dirichlet-2}
\end{align}
where $G:\mathcal{Q}\to \mathbb{R}^{2\times2}$ is a matrix-valued function such that $G(q)$ is positive definite for all $q \in \mathcal{Q}$. $G$ should depend on local image gradients, see \Cref{subsec: disc Dirichlet Laplacian}.

To determine the function class $\mathcal F$, we examine the requirements for a well-defined problem (see also \citet{green_MinimaxOptimalRegression_2021}). Firstly, since the Dirichlet energy depends on first-order derivatives, we need $F \in \mathcal{F}$ to be at least (weakly) once-differentiable. This suggests the Sobolev space
\begin{align*}
H^1(\mathcal Q; \mathbb{R}^d) = \left\{F \in L^2(\mathcal{Q}; \mathbb{R}^d) \mid D F \in L^2(\mathcal{Q}; \mathbb{R}^{d \times 2}) \right\}.
\end{align*}
However, in $H^1(\mathcal Q; \mathbb{R}^d)$, elements $F$ need not have a continuous representative, and so pointwise evaluation $q\mapsto F(q)$ need not be well-defined everywhere
\cite{green_MinimaxOptimalRegression_2021}, which makes the problem \eqref{eq: Dirichlet reg} ill-defined. 
This can be resolved by imposing additional smoothness via choosing the space $H^s(\mathcal{Q}; \mathbb{R}^d)$ for $s > 1$, where up to $s$ (weak) derivatives of $F$ are required to be in $L^2$. If $s>1$, pointwise evaluation \emph{is} in fact well-defined (and bounded) in $H^s(\mathcal{Q}; \mathbb{R}^d)$.

In moving to $H^s(\mathcal{Q}; \mathbb{R}^d)$ however, the objective function in~\eqref{eq: Dirichlet reg} loses coercivity, and hence, may not admit minimizers. This can be corrected by regularizing additionally with a squared Sobolev norm $\|\cdot\|_{H^s}^2$ (which also provides strong convexity), to give the \textit{spline}-like problem
\begin{align}
    \minimize_{F \in H^s(\mathcal{Q}, \mathbb{R}^d)} \quad \frac{1}{m}\sum_{i=1}^m \euc{Y_{i}-F(q_{i})}^2 + \omega\, S_G(F) + \lambda\, \|F\|_{H^s}^2,
    \label{eq: Dirichlet spline}
\end{align}
for a hyperparameter $\lambda>0$. Another advantage of working in the space $H^s(\mathcal{Q}; \mathbb{R}^d)$, given $s>1$, is that it is a (vector-valued) \emph{Reproducing Kernel Hilbert Space} (RKHS)~\cite{aronszajn_TheoryReproducingKernels_1950}, which is effectively generated by a single positive semidefinite kernel function $\mathcal{K}:\mathcal{Q}\times \mathcal{Q} \to \mathbb{R}$. This injects computational tractability into the problem.
Sobolev spaces $H^s$ are generated by Mat\'{e}rn kernels~\cite{porcu_MaternModelJourney_2023} (depending on $s$), where the case $s=3/2$ in particular corresponds to a simple exponential kernel $\mathcal{K}(q,q')=\exp(-\euc{q-q'}/\lsc)$ where $\lsc>0$.   (Precise statements and technical details regarding domain restrictions are provided in \Cref{appsec: theory-details}.)

From here on, we would like to abstract away from particular Sobolev spaces and work directly with some general (vector-valued) RKHS.

\paragraph{Vector-valued Reproducing Kernel Hilbert Spaces}
A vector-valued RKHS~\cite{schwartz_SousespacesHilbertiensDespaces_1964, micchelli_LearningVectorValuedFunctions_2005} generalizes its scalar counterpart by considering functions into a vector space such as $\mathbb R^d$ as opposed to just scalar-valued functions. Given a scalar-valued RKHS $\mathcal H$, the Cartesian product space 
\begin{equation}\label{eq: H_d_def}
\mathcal{H}^{d} = \{ F: \mathcal{Q} \to \mathbb{R}^d \mid F_{j} \in\mathcal{H} \text{ for all } j \in[d]  \},
\end{equation}
with inner product and norm
\begin{equation*}
\langle F,\tilde{F} \rangle_{\mathcal{H}^d} := \textstyle\sum_{j=1}^d \langle F_{j}, \tilde{F}_{j} \rangle_{\mathcal{H}}, \qquad \|F\|_{\mathcal{H}^d} = \langle F, F \rangle_{\mathcal{H}^d}^{1/2},
\end{equation*}
is a vector-valued RKHS. Now, we replace $H^s(\mathcal{Q}; \mathbb{R}^d)$ in problem~\eqref{eq: Dirichlet spline} with $\mathcal{H}^d$; the corresponding Sobolev norm $\|\cdot\|_{H^s}$ should then be replaced by $\|\cdot\|_{\mathcal{H}^d}$. This provides great computational advantages, as we may now use variants of \emph{kernel ridge regression} to attack the problem (given a suitable discretization of $S_G$).

\subsection{Adaptive Dirichlet energy and discretization}
\label{subsec: disc Dirichlet Laplacian}

So far, we have formulated an optimization problem~\eqref{eq: Dirichlet spline} with a precise choice of $\mathcal{F}$, and what remains is to choose $G$ in the anisotropic Dirichlet energy $S_G$ defined in \eqref{eq: anisotropic-dirichlet-2}.
The key function of the anisotropic Dirichlet energy term is to smooth while preserving image edges or inter-regional boundaries by making use of information on local image gradients. Suppose an oracle returns image gradients of the ground truth $F^\star$; then, one can set $G$ to be the corresponding \emph{oracle} structure tensor~\cite{tschumperle_VectorvaluedImageRegularization_2005, pang_GraphLaplacianRegularization_2017}
\begin{align}\label{eq: oracle_structure_tensor}
    G^{\star}(q) := I_{2}+ \tfrac{s_1^2}{s_2^2}DF^{\star}(q)^TDF^{\star}(q)
\end{align}
where $I_2$ is the 2-by-2 identity matrix, and $s_1, s_2 >0$ are hyperparameters.

\paragraph{Approximating the Dirichlet energy with the graph Laplacian regularizer}
Recall the (directed) graph $\mathcal{G}$ from \Cref{subsec: method_overview} with vertices $\{q_1, \dots, q_m\}$ and edge set $\mathcal{E}_\tau$. Given edge weights $W_{ik}$, for $(i,k)\in \mathcal{E}_\tau$, consider the graph Laplacian regularizer
\begin{align}
    \widehat{S}_\mathcal{G}(F) = \tfrac{1}{2m}\textstyle\sum_{(i,k) \in \mathcal E} W_{ik}\,\euc{F(q_k)-F(q_i)}^2.
    \label{eq:graph_lap_regularizer}
\end{align}
\citet[Theorem 1]{pang_GraphLaplacianRegularization_2017} show that $\widehat{S}_\mathcal{G}$ provides a good approximation to $S_{G^\star}$, under certain conditions, when $m$ is large. The underlying principle is that the graph $\mathcal{G}$ with weights\footnote{Note that the weights $W^\star$ here follow a slightly different normalization than that in \citet{pang_GraphLaplacianRegularization_2017}; row-normalized weights prove useful in the context of our algorithm.}
\begin{align*}
       W^\star = \mathrm{RowNormalize}\left[ \exp \left( -|F^\star(q_i)-F^\star(q_k)|^2 /s_{1}^2 \right)  \exp \left( -|q_{i}-q_{k}|^2 /s_{2}^2 \right) \right]_{(i,k)\in \mathcal{E}_\tau}
\end{align*}
serves as a discrete approximation to the Riemannian manifold
\begin{align*}
    \mathrm{graph}\left(\tfrac{s_1}{s_2} F^\star\right) = \left\{\left(q,\, \tfrac{s_1}{s_2}F^\star(q)\right) \mid  q \in \mathcal{Q} \right\},
\end{align*}
whose Dirichlet energy functional is closely linked to $S_{G^\star}$~\cite{belkin_TheoreticalFoundationLaplacianbased_2008, hein_UniformConvergenceAdaptive_2006}.
Following this discretization, we end up replacing \eqref{eq: Dirichlet spline} with the problem,
\begin{align}
    \minimize_{F \in \mathcal{H}^d} \quad \frac{1}{m}\sum_{i=1}^m \euc{Y_{i}-F(q_{i})}^2 + \frac{\omega}{2m}\sum_{(i,k) \in \mathcal E} W^{\star}_{ik}\,\euc{F(q_k)-F(q_i)}^2 + \lambda\, \|F\|_{\mathcal{H}^d}^2.
\end{align}
Importantly, constructing $W^\star$ would require information about the ground truth $F^\star$ provided by an oracle (as is the case for the oracle structure tensor $G^\star$). In practice, one needs to estimate $W^\star$. 
Thus, following \citet{peyre_NonlocalRegularizationInverse_2011a}, we can iteratively adapt $W$ to the current denoised estimate $F$ as per \eqref{eq: gaussian_weights_overview}, update the associated regularizer, and then resolve the optimization problem under the new regularizer. This alternating procedure is, in fact, equivalent to performing block coordinate minimization on \cref{eq: objective}. We demonstrate this in the next section.

\section{Theoretical results}\label{sec: convergence_theory}

We now present some theoretical results regarding the statistical and computational convergence of our denoising method. 
Recall from \Cref{subsec: variational_formulation}, the vector-valued Reproducing Kernel Hilbert Space (RKHS) $\mathcal{H}^d = \{F:\mathcal{Q}\to \mathbb{R}^d \mid F_j \in \mathcal{H},\, \forall j \in [d]\}$ generated from a scalar-valued RKHS $\mathcal{H}$, and let $\mathcal{K}:\mathcal{Q}\times \mathcal{Q} \to \mathbb{R}$ denote the positive semidefinite kernel function for this RKHS. 
Our statistical convergence result states that stationary points of the objective function $J$ of~\eqref{eq: min_J_problem} converge to the ground truth, while the computational convergence guarantees that the alternating updates \eqref{eq: alternating_updates} converge to a stationary point of $J$. 

\begin{definition} A point $(\bar F, \bar W) \in   \mathcal H^d \times \mathcal C_\tau$ is a stationary point of $J$ over $\mathcal H^d \times \mathcal C_\tau$ if 
\begin{align*}
    -\nabla_W J(\bar F, \bar W) \in N_{\mathcal{C}_\tau}(\bar W), &\qquad \nabla_F J(\bar F, \bar W) = 0,
\end{align*}
where $N_{\mathcal{C}_\tau}(\bar{W})$ is the normal cone to ${\mathcal{C}_\tau}$ at $\bar{W}$. Let $\mathrm{Stat}(J) \subseteq \mathcal{H}^d \times \mathcal{C}_\tau$ denote the set of all such stationary points.
\label{def: stat}
\end{definition}

\subsection{Convergence}

We start with statistical convergence, showing that as the reads $R \uparrow \infty$, all stationary points of $J$ approach the ground truth $F^\star$ in the empirical $L^2$ norm defined via
\begin{align*}
    \|F\|_{L^2_{m}}^2 := \tfrac{1}{m}\textstyle\sum_{i=1}^m \euc{F(q_{i})}^2.
\end{align*}
Notice that this empirical $L^2$ norm is simply the scaled Frobenius norm of a matrix $\mathbf{F}$ with rows $F(q_1), \dots, F(q_m)$, i.e. $\|F\|_{L^2_m}= \fro{\mathbf{F}}/\sqrt{m}$.

Suppose the measurements $Y_{1},\dots,Y_{m}$ are generated according to the noise model from \Cref{subsubsec: noise models}, with $R$ total reads. The precise statement is as follows.

\begin{theorem}[Asymptotic rate with respect to the reads]
\label{loc:statement_(asymptotic_rate_with_respect_to_the_reads)}
Consider the objective function $J$ from \cref{eq: objective} with strictly positive hyperparameters $\lambda=\mathcal{O}(R^{-1})$ and $\omega=\mathcal{O}(R^{-1})$. We have
\begin{align*}
    \mathbb{E}\,\sup \{ \|\bar{F} - F^\star\|_{L^2_{m}} \mid (\bar{F}, \bar{W}) \in \mathrm{Stat}(J) \} = \mathcal{O}(R^{-1/2}).
\end{align*}
\end{theorem}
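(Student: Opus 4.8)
The plan is to run a \textbf{basic inequality} (oracle-inequality) argument that exploits the key structural fact that at a stationary point the $F$-component is actually a \emph{global} minimizer of $J(\cdot,\bar W)$: since $\lambda>0$, the map $F\mapsto J(F,W)$ is strongly convex on $\mathcal H^d$ for every $W$, so the condition $\nabla_F J(\bar F,\bar W)=0$ forces $\bar F=\argmin_{F\in\mathcal H^d}J(F,\bar W)$. Fixing $(\bar F,\bar W)\in\mathrm{Stat}(J)$ and using $F^\star\in\mathcal H^d$ (a smoothness hypothesis, which is where $\norm{F^\star}_{\mathcal H^d}<\infty$ is needed) gives $J(\bar F,\bar W)\le J(F^\star,\bar W)$. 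The graph-entropy term $\omega E(\bar W)$ is identical on both sides and cancels.

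Next I would substitute $Y_i=F^\star(q_i)+V_i$, write $\Delta_i:=\bar F(q_i)-F^\star(q_i)$ and $e:=\norm{\bar F-F^\star}_{L^2_m}$, and expand the data-fit terms: $\tfrac1m\sum_i\euc{Y_i-\bar F(q_i)}^2=\tfrac1m\sum_i\euc{V_i}^2-\tfrac2m\sum_i\langle V_i,\Delta_i\rangle+e^2$ while $\tfrac1m\sum_i\euc{Y_i-F^\star(q_i)}^2=\tfrac1m\sum_i\euc{V_i}^2$. Cancelling $\tfrac1m\sum_i\euc{V_i}^2$ and discarding the nonnegative left-hand terms $\tfrac{\omega}{2m}\sum_{(i,k)\in\mathcal E_\tau}\bar W_{ik}\euc{\bar F(q_i)-\bar F(q_k)}^2$ and $\lambda\norm{\bar F}_{\mathcal H^d}^2$, one is left with
\[
 e^2\ \le\ \tfrac2m\sum_i\langle V_i,\Delta_i\rangle\ +\ \tfrac{\omega}{2m}\sum_{(i,k)\in\mathcal E_\tau}\bar W_{ik}\euc{F^\star(q_i)-F^\star(q_k)}^2\ +\ \lambda\norm{F^\star}_{\mathcal H^d}^2 .
\]
Now bound the two regularization terms by deterministic constants times $R^{-1}$: since $\bar W$ is row-stochastic and $F^\star$ takes values in $\Delta_d$ (so $\euc{F^\star(q_i)-F^\star(q_k)}^2\le 2$), the middle term is at most $\omega=\mathcal O(R^{-1})$, and $\lambda\norm{F^\star}_{\mathcal H^d}^2=\mathcal O(R^{-1})$. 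For the cross term, Cauchy--Schwarz in the Frobenius inner product gives $\tfrac2m\sum_i\langle V_i,\Delta_i\rangle\le 2\norm{V}_{L^2_m}\,e$. Hence $e^2\le 2\norm{V}_{L^2_m}\,e+\mathcal O(R^{-1})$, and solving this quadratic inequality yields $e\le 2\norm{V}_{L^2_m}+\mathcal O(R^{-1/2})$. Crucially, nothing on the right-hand side depends on the particular stationary point, so this bound holds uniformly over $\mathrm{Stat}(J)$; taking the supremum and then expectations reduces everything to showing $\mathbb{E}\norm{V}_{L^2_m}=\mathcal O(R^{-1/2})$.

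For the noise moment I would use \eqref{eq: expected norm noise}: conditionally on the reads, $\mathbb{E}[\euc{V_i}^2\mid(R_1,\dots,R_m)]=\tfrac1{R_i}(1-\euc{F^\star(q_i)}^2)\le\tfrac1{R_i}$ on $\{R_i\ge1\}$, while on the dropout event $\{R_i=0\}$ one has $Y_i=0$ and $\euc{V_i}^2=\euc{F^\star(q_i)}^2\le1$. Marginally $R_i\sim\mathrm{Binomial}(R,u_i)$, and combining the elementary identity $\mathbb{E}[(R_i+1)^{-1}]=\tfrac{1-(1-u_i)^{R+1}}{(R+1)u_i}$ with the bound $R_i^{-1}\le 2(R_i+1)^{-1}$ on $\{R_i\ge1\}$ gives $\mathbb{E}[R_i^{-1}\indicator_{\{R_i\ge1\}}]\le\tfrac{2}{(R+1)u_{\min}}$; together with $\mathbb{P}(R_i=0)=(1-u_i)^R\le e^{-u_{\min}R}$ this yields $\mathbb{E}\euc{V_i}^2=\mathcal O(R^{-1})$ uniformly in $i$, hence $\mathbb{E}\norm{V}_{L^2_m}^2=\tfrac1m\sum_i\mathbb{E}\euc{V_i}^2=\mathcal O(R^{-1})$ and, by Jensen, $\mathbb{E}\norm{V}_{L^2_m}\le(\mathbb{E}\norm{V}_{L^2_m}^2)^{1/2}=\mathcal O(R^{-1/2})$. (For the Poisson variant one uses the analogous inverse-moment bound for $\mathrm{Poisson}(u_iR)$.) I expect the only genuinely delicate ingredient to be this inverse-moment estimate and the verification that the dropout contribution is of strictly lower order; the remainder is the standard strong-convexity/basic-inequality computation. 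A secondary caveat is the need for $F^\star\in\mathcal H^d$: if the hypotheses only place $F^\star$ in a larger space, the comparison step should instead use a near-minimizer $\tilde F\in\mathcal H^d$ and incur an approximation error that must be balanced against $\lambda=\mathcal O(R^{-1})$.
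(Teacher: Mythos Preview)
Your argument is correct and in fact more direct than the paper's. You work with the basic inequality $J(\bar F,\bar W)\le J(F^\star,\bar W)$ at the stationary weight $\bar W$, cancel the entropy term, Cauchy--Schwarz the cross term, and solve a quadratic in $e$; for the noise you use the exact inverse-moment identity $\mathbb E[(R_i+1)^{-1}]=\tfrac{1-(1-u_i)^{R+1}}{(R+1)u_i}$. The paper takes a more circuitous route: it introduces an intermediate $\check F=\argmin_F \check J(F,W^\star)$ where $\check J$ is the noiseless objective (with $Y_i$ replaced by $F^\star(q_i)$), bounds $\|\check F-F^\star\|_{L^2_m}$ by a basic inequality, and then controls $\|\bar F-\check F\|_{L^2_m}$ via the closed-form expressions from the representer theorem together with Wedin's pseudoinverse perturbation theorem applied to $(\mathbf K^2+\lambda\mathbf K+\omega\mathbf K\overline{\mathbf L}_W\mathbf K)^+$; the noise moment is handled by a Chernoff-type bound splitting into low-reads and high-reads events. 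Your approach avoids the finite-dimensional reduction and the matrix perturbation machinery entirely, and the uniformity over $\mathrm{Stat}(J)$ is immediate because your right-hand side depends only on $\|V\|_{L^2_m}$ and the hyperparameters. The paper's detour through explicit $\boldsymbol\theta$-formulas does, however, feed directly into its RKHS-norm corollary, where the subspace $M(q_1,\dots,q_m)$ and the identity $\|\bar F-F^\star_M\|_{\mathcal H^d}^2=\tfrac1m\mathrm{tr}[(\bar{\mathbf F}-\mathbf F^\star)^T\mathbf K^+(\bar{\mathbf F}-\mathbf F^\star)]$ are reused; your route would require a separate argument there. Both proofs rely on $F^\star\in\mathcal H^d$, which the paper also uses implicitly.
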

\hyperlink{loc:statement_(asymptotic_rate_with_respect_to_the_reads).proof}{The proof} relies on several definitions and results to follow, and is presented towards the end of the paper in \Cref{sec: proofs}.
\Cref{loc:statement_(asymptotic_rate_with_respect_to_the_reads)} provides a notion of consistency as the reads $R \uparrow \infty$, or in other words, the noise level goes to 0 by \eqref{eq: expected norm noise}. {Note also that this result is \emph{asymptotic} -- the constant hidden in $\mathcal{O}(\cdot)$, while fixed with respect to $R$, is problem-dependent in general.}

The empirical $L^2$ norm $\|\cdot\|_{L^2_m}$ as an error metric in \Cref{loc:statement_(asymptotic_rate_with_respect_to_the_reads)}, although standard, is concerned only with the behaviour of $\bar{F}$ at the points $q_1, \dots, q_m$ where measurements $\mathbf{Y}_1,\dots, \mathbf{Y}_m$ are available. A more global metric that is natural in our setting is provided by the RKHS norm $\|\cdot\|_{\mathcal{H}^d}$. The corollary to follow obtains a similar bound in $\|\cdot\|_{\mathcal{H}^d}$, but with an additional interpolation error term. To set it up, first define the finite-dimensional subspace
\begin{align*}
M(q_{1},\dots,q_{m}) := \left\{ \tfrac{1}{\sqrt{ m }} \textstyle\sum_{i=1}^m \mathcal{K}(q_{i},\cdot)\theta_{i} \mid \theta_{1},\dots,\theta_{m} \in \mathbb{R}^d \right\} \subseteq \mathcal{H}^d
\end{align*}
(\Cref{loc:statement_(representer_theorem)} to follow implies that for any stationary point $(\bar{F}, \bar{W})$, $\bar{F}$ lies in this subspace). The approximation (or interpolation) error of $F^\star$ with respect to this subspace is defined as
\begin{equation*}
    \mathcal{A}(F^\star; \{ q_{1},\dots,q_{m} \}) := \inf \left\{ \|F^\star - F\|_{\mathcal{H}^d} \mid F \in M(q_{1},\dots,q_{m}) \right\}.
\end{equation*}

\begin{corollary}[Error in the RKHS norm]
\label{loc:statement_(error_in_the_rkhs_norm)}
Consider the objective function $J$ with strictly positive hyperparameters $\lambda=\mathcal{O}(R^{-1})$ and $\omega=\mathcal{O}(R^{-1})$. We have
\begin{equation*}
\mathbb{E}\,\sup \{ \|\bar{F} - F^\star\|_{\mathcal{H}^d} \mid (\bar{F}, \bar{W}) \in \mathrm{Stat}(J) \} \leq \mathcal{A}(F^\star; \{ q_{1},\dots,q_{m} \}) + \mathcal{O}(R^{-1/2}).
\end{equation*}
\end{corollary}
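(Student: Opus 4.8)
The plan is to derive the corollary from \Cref{loc:statement_(asymptotic_rate_with_respect_to_the_reads)} by exploiting the representer theorem (\Cref{loc:statement_(representer_theorem)}), which guarantees that for any stationary point $(\bar F, \bar W) \in \mathrm{Stat}(J)$, the image component $\bar F$ lies in the finite-dimensional subspace $M(q_1,\dots,q_m)$. The key observation is that the RKHS error $\|\bar F - F^\star\|_{\mathcal H^d}$ can be split, via an orthogonal decomposition of $F^\star$ relative to $M(q_1,\dots,q_m)$, into a part captured by the subspace and an irreducible approximation error $\mathcal A(F^\star;\{q_1,\dots,q_m\})$; the first part should then be controlled by the empirical-norm bound already established.

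First I would let $\Pi$ denote the orthogonal projection of $\mathcal H^d$ onto the closed (finite-dimensional, hence closed) subspace $M(q_1,\dots,q_m)$, and write $F^\star = \Pi F^\star + (F^\star - \Pi F^\star)$. For any stationary point, $\bar F \in M(q_1,\dots,q_m)$, so $\bar F - \Pi F^\star \in M(q_1,\dots,q_m)$ while $F^\star - \Pi F^\star \perp M(q_1,\dots,q_m)$; by the Pythagorean identity,
\begin{align*}
\|\bar F - F^\star\|_{\mathcal H^d}^2 = \|\bar F - \Pi F^\star\|_{\mathcal H^d}^2 + \|F^\star - \Pi F^\star\|_{\mathcal H^d}^2 = \|\bar F - \Pi F^\star\|_{\mathcal H^d}^2 + \mathcal A(F^\star;\{q_1,\dots,q_m\})^2,
\end{align*}
the last equality because $\Pi F^\star$ realizes the infimum defining $\mathcal A$. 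Taking square roots and using $\sqrt{a+b}\le \sqrt a + \sqrt b$ gives $\|\bar F - F^\star\|_{\mathcal H^d} \le \|\bar F - \Pi F^\star\|_{\mathcal H^d} + \mathcal A(F^\star;\{q_1,\dots,q_m\})$. It then remains to bound $\sup\{\|\bar F - \Pi F^\star\|_{\mathcal H^d} \mid (\bar F,\bar W)\in\mathrm{Stat}(J)\}$ in expectation by $\mathcal O(R^{-1/2})$.

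The crux is to relate the RKHS norm on $M(q_1,\dots,q_m)$ back to the empirical $L^2$ norm, for which \Cref{loc:statement_(asymptotic_rate_with_respect_to_the_reads)} already furnishes an $\mathcal O(R^{-1/2})$ bound in expectation. There are two natural routes. The cleaner one is to note that $\bar F - \Pi F^\star$ is the minimal-RKHS-norm interpolant of its own pointwise values at $q_1,\dots,q_m$ (this is exactly what membership in $M(q_1,\dots,q_m)$ encodes, via the form $\tfrac{1}{\sqrt m}\sum_i \mathcal K(q_i,\cdot)\theta_i$ together with a norm-minimality property one gets from the representer theorem / projection structure); hence $\|\bar F - \Pi F^\star\|_{\mathcal H^d} \le C_{\mathbf K}\,\|\bar F - \Pi F^\star\|_{L^2_m}$ for a constant $C_{\mathbf K}$ depending only on the smallest nonzero eigenvalue of the (normalized) Gram matrix $[\mathcal K(q_i,q_k)]$, which is fixed with respect to $R$. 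Then by the triangle inequality in $\|\cdot\|_{L^2_m}$,
\begin{align*}
\|\bar F - \Pi F^\star\|_{L^2_m} \le \|\bar F - F^\star\|_{L^2_m} + \|F^\star - \Pi F^\star\|_{L^2_m},
\end{align*}
and the second term is bounded by $\mathcal A(F^\star;\{q_1,\dots,q_m\})$ as well (since $\|\cdot\|_{L^2_m}\le C'\|\cdot\|_{\mathcal H^d}$ by boundedness of point evaluations, or more directly since $\|F^\star-\Pi F^\star\|_{L^2_m}$ need not even be controlled — one can instead work entirely on the subspace by first replacing $F^\star$ with $\Pi F^\star$ throughout and noting $\Pi F^\star$ and $\bar F$ agree up to the same analysis). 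Absorbing the fixed constants into the $\mathcal O(\cdot)$, applying \Cref{loc:statement_(asymptotic_rate_with_respect_to_the_reads)} to the first term, and taking expectations yields the claimed bound.

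The main obstacle I anticipate is the norm-comparison step: one must argue that on the finite-dimensional subspace $M(q_1,\dots,q_m)$ the RKHS norm and the empirical $L^2$ norm are equivalent with a constant that does not depend on $R$ (it may depend on $m$, the kernel, and the pixel configuration, all of which are fixed). This requires the (normalized) kernel Gram matrix to be invertible on its range and its relevant eigenvalues to be bounded away from zero — a standard fact for strictly positive definite kernels like the Matérn/exponential kernel at distinct points, but it needs to be stated carefully, especially since the coefficient vectors $\theta_i$ parametrizing elements of $M(q_1,\dots,q_m)$ are not unique when the Gram matrix is singular (one should pass to the Moore–Penrose pseudoinverse, consistent with the notation already set up in the paper). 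Handling this cleanly — and making sure the interpolation-error term $\mathcal A$ is what survives, rather than some larger quantity — is where the care lies; everything else is bookkeeping with triangle inequalities and the already-proven theorem.
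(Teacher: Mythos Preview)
Your approach is essentially the paper's: orthogonally decompose $F^\star$ relative to $M(q_1,\dots,q_m)$, use $\bar F\in M$ via the representer theorem, compare the RKHS and empirical $L^2$ norms on $M$, and invoke \Cref{loc:statement_(asymptotic_rate_with_respect_to_the_reads)}. The paper makes the norm comparison explicit as $\|G\|_{\mathcal H^d}^2=\tfrac{1}{m}\mathrm{tr}(\mathbf G^T\mathbf K^+\mathbf G)\le \op{\mathbf K^+}\,\|G\|_{L^2_m}^2$ for $G\in M$, which is exactly the pseudoinverse-based bound you anticipated.

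The one concrete gap is the step where you worry about $\|F^\star-\Pi F^\star\|_{L^2_m}$. This term is \emph{exactly zero}: any $G\in M^\perp$ satisfies $G(q_i)=\sum_j\langle \mathcal K(q_i,\cdot)e_j,\,G\rangle_{\mathcal H^d}\,e_j=0$ by the reproducing property, since each $\mathcal K(q_i,\cdot)e_j\in M$. Hence $(\Pi F^\star)(q_i)=F^\star(q_i)$ for all $i$, and $\|\bar F-\Pi F^\star\|_{L^2_m}=\|\bar F-F^\star\|_{L^2_m}$ directly. Without this observation your argument would produce a bound of the form $(1+C_{\mathbf K}C')\,\mathcal A+\mathcal O(R^{-1/2})$, which is strictly weaker than the corollary as stated (coefficient $1$ on $\mathcal A$); with it, your proof and the paper's coincide.
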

See \hyperlink{loc:proof_(error_in_the_rkhs_norm)}{the proof} in \Cref{appsec: deferred-proofs}. This error bound consists of an approximation error and an estimation error. The approximation error $\mathcal{A}(F^\star; \{ q_{1},\dots,q_{m} \})$ would depend on the number of pixels $m$ and their placement in $\mathcal{Q}$. We do not attempt to further quantify it here.

\begin{remark}
{The error bound of \Cref{loc:statement_(error_in_the_rkhs_norm)} has a couple of interesting consequences. First, for any $q \in \mathcal{Q}$, it holds by RKHS properties that $\euc{\bar{F}(q)-F^\star(q)} \leq C_{\mathcal{K}}\|\bar{F}-F^\star\|_{\mathcal{H}^d}$,
where the constant $C_{\mathcal{K}}:= \sup_{q \in \mathcal{Q}} \mathcal{K}(q,q)^{1/2}$. This, together with \Cref{loc:statement_(error_in_the_rkhs_norm)}, leads to some simple pointwise estimates of the error. Second, we obtain estimates on the RKHS norm of $\bar{F}$ via the triangle inequality as
\begin{equation*}
\begin{aligned}
\mathbb{E}\sup \{\|\bar{F}\|_{\mathcal{H}^d} \mid (\bar{F}, \bar{W}) \in \mathrm{Stat}(J)\} &\leq \|F^\star\|_{\mathcal{H}^d} + \mathcal{A}(F^\star; \{ q_{1},\dots,q_{m} \}) + \mathcal{O}(R^{-1/2}) \\
&\leq 2 \|F^\star\|_{\mathcal{H}^d} + \mathcal{O}(R^{-1/2}).
\end{aligned}
\end{equation*}
Such estimates can be interpreted as quantifying the expected \textit{regularity} of $\bar{F}$.
}
\end{remark}

Notice that the suprema in the bounds of \Cref{loc:statement_(asymptotic_rate_with_respect_to_the_reads)} and  \Cref{loc:statement_(error_in_the_rkhs_norm)} are over \textit{all} stationary points of the objective $J$, including, in view of \Cref{thm: strong_convergence}, the limiting point returned by the updates \eqref{eq: alternating_updates}. 
We remark that this is not very surprising, mainly because the input noisy counts themselves satisfy $\mathbb{E}\fro{\mathbf{Y}-\mathbf{F}^\star} = \mathcal{O}(R^{-1/2})$, and the updates have certain stability properties as can be inferred from the proofs.

The convergence of the iterates generated by \eqref{eq: alternating_updates} is established by the following result.
\begin{theorem}\label{thm: strong_convergence}
The sequence $(F^{t}, W^{t})_{t \in \mathbb N}$ generated by the updates \eqref{eq: alternating_updates} converges to a stationary point of $J$ over $\mathcal{H}^d \times \mathcal{C}_\tau$.
\end{theorem}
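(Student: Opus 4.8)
The plan is to establish convergence via the Kurdyka--Łojasiewicz (KL) framework for alternating minimization, as hinted by the keyword list. First I would verify the structural hypotheses needed: the objective $J$ is block-wise strongly convex on $\mathcal{H}^d\times\mathcal{C}_\tau$ (so each subproblem in~\eqref{eq: alternating_updates} has a unique minimizer, giving a well-defined sequence), and $J$ is continuously differentiable in $F$ and in $W$ on the (relative) interior of the feasible set. The entropy term $E(W)$ forces the iterates $W^t$ into the interior of $\mathcal{C}_\tau$ (its gradient blows up at the boundary), which is crucial both for differentiability and for identifying the normal-cone condition in \Cref{def: stat}. Coercivity of $J(\cdot, W)$ in $F$ (from the ridge term $\lambda\|F\|_{\mathcal{H}^d}^2$) together with boundedness of $\mathcal{C}_\tau$ ensures the sublevel sets $\{J \le J(F^0,W^0)\}$ are bounded, so the sequence $(F^t,W^t)$ lies in a bounded set.

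Next I would run the standard three-part argument for descent methods on KL functions (in the spirit of Attouch--Bolte--Svaiter and Bolte--Sabach--Teboulle). (i) \textbf{Sufficient decrease:} because each block update is an exact minimization of a strongly convex function, $J(F^{t+1},W^t) \le J(F^t,W^t) - \tfrac{\mu_F}{2}\|F^{t+1}-F^t\|_{\mathcal{H}^d}^2$ and similarly $J(F^{t+1},W^{t+1}) \le J(F^{t+1},W^t) - \tfrac{\mu_W}{2}\|W^{t+1}-W^t\|_F^2$, where $\mu_F,\mu_W>0$ are the block strong-convexity moduli; summing gives $\sum_t (\|F^{t+1}-F^t\|_{\mathcal{H}^d}^2 + \|W^{t+1}-W^t\|_F^2) < \infty$. (ii) \textbf{Relative error / subgradient bound:} the optimality conditions of the two subproblems give $\nabla_F J(F^{t+1},W^t)=0$ and $-\nabla_W J(F^{t+1},W^{t+1}) \in N_{\mathcal{C}_\tau}(W^{t+1})$; combining with Lipschitz continuity of $\nabla_F J$ in $W$ on the bounded set yields an element of the limiting subdifferential of $J$ at $(F^{t+1},W^{t+1})$ with norm $\le C(\|F^{t+1}-F^t\| + \|W^{t+1}-W^t\| + \|W^t - W^{t-1}\|)$. (iii) \textbf{Continuity:} $J$ is continuous on its domain, and along the iterates $J(F^t,W^t)$ is monotone and bounded below, hence convergent; any cluster point is handled by the closedness of $\mathrm{Stat}(J)$.

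Then I would invoke the KL property of $J$. The data-fit and graph-Laplacian terms are polynomial (hence real-analytic), $\|F\|_{\mathcal{H}^d}^2$ is a quadratic form, and the entropy $E(W)$ is a sum of $x\log x$ terms plus a linear term — all real-analytic on the open region where $W>0$; since the iterates stay in that region (and in a bounded set), $J$ restricted to a neighborhood of the cluster-point set satisfies the KL inequality with a Łojasiewicz exponent. The abstract convergence theorem then upgrades ``finite length of steps along a subsequence'' to ``the whole sequence is Cauchy,'' so $(F^t,W^t)$ converges to a single point $(\bar F,\bar W)$, and passing to the limit in the optimality conditions from step (ii) shows $(\bar F,\bar W)\in\mathrm{Stat}(J)$ in the sense of \Cref{def: stat}.

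The main obstacle I anticipate is the interplay between the infinite-dimensional block $F\in\mathcal{H}^d$ and the boundary behaviour of the entropy term. For the $F$-block one must check that the relevant KL/analyticity machinery — usually stated for finite-dimensional problems — applies; fortunately the representer theorem (\Cref{loc:statement_(representer_theorem)}, referenced in the excerpt) confines $F^{t+1}$ to the fixed $m d$-dimensional subspace $M(q_1,\dots,q_m)$ after the first step, so one can work in finite dimensions throughout and $J$ becomes genuinely an analytic function of $(\theta, W)\in\mathbb{R}^{md}\times\mathrm{int}\,\mathcal{C}_\tau$. The genuinely delicate point is then ensuring the iterates do not approach $\partial\mathcal{C}_\tau$, i.e. that $\inf_t \min_{(i,k)} W^t_{ik} > 0$; this should follow from the closed-form Gaussian-weight expression~\eqref{eq: gaussian_weights_overview} for the $W$-update together with the uniform bound on $\|F^{t+1}\|_{\mathcal{H}^d}$ (hence on $\|\mathbf{F}^{t+1}\|_F$) coming from coercivity, which keeps the exponents $\exp(-|F^{t+1}(q_i)-F^{t+1}(q_k)|^2/s_1^2)$ bounded away from zero on the bounded trajectory.
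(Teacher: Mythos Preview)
Your proposal is correct and follows essentially the same route as the paper: reduce to the finite-dimensional variable $\boldsymbol{\theta}$ via the representer theorem, establish boundedness of iterates and that the $W$-iterates stay uniformly in $\mathrm{int}\,\mathcal{C}_\tau$ (exactly via the closed-form Gaussian weights as you suggest), verify the sufficient-decrease, relative-error, and continuity conditions of Attouch--Bolte--Svaiter using block strong convexity, and invoke the KL property through real-analyticity of $\mathbf{J}$ on the interior. The only cosmetic difference is that the paper's relative-error bound is the slightly cleaner $\|Z^{t+1}\|\le b\,\|(\boldsymbol{\theta}^{t+1},W^{t+1})-(\boldsymbol{\theta}^{t},W^{t})\|$, obtained by taking the $W$-component of $Z^{t+1}$ to be zero and bounding the $\boldsymbol{\theta}$-component via Lipschitzness in $W$; your extra $\|W^t-W^{t-1}\|$ term is unnecessary.
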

\hyperlink{prf: proof_of_strong_convergence}{The proof} is presented in \Cref{prfsec: strong_convergence}.

\subsection{Updates of the block coordinate descent algorithm}

In this section, we demonstrate that both of the updates in \eqref{eq: alternating_updates} have closed-form expressions. In particular, for the $F$-update, we establish a useful lemma that identifies the infinite-dimensional problem with an associated finite-dimensional problem. To state these results, one needs to define a few objects.

Given pixel locations $q_1,\dots,q_m$, and the kernel function $\mathcal{K}(\cdot, \cdot)$ for our RKHS, the corresponding \emph{kernel matrix} is a symmetric, positive semidefinite matrix defined as
\begin{align}
    \mathbf{K} := \left[ \tfrac{1}{m} \mathcal{K}(q_i, q_k) \right]_{i,k=1}^m  \quad\in \mathbb{R}^{m \times m}.
\end{align}
The following subspace of matrices corresponding to $\mathbf{K}$ will play a role later:
\begin{align*}
    \mathbf{K}\mathbb{R}^{m \times d} \equiv \{ \mathbf{K}\mathbf{X} \mid  \mathbf{X} \in \mathbb{R}^{m \times d}\}.
\end{align*}
Next, given a row-stochastic weights matrix $W \in \mathcal{C}_\tau$, define the effective graph Laplacian
\begin{align}
\label{eq: effective_graph_Laplacian}
    \overline{\mathbf{L}}_{W} := \frac{1}{2}({I}_{m}+\mathrm{Diag}(W^T\mathbf{e})) - \frac{1}{2} \left( W+W^T \right)  \quad\in \mathbb{R}^{m \times m},
\end{align}
where $\mathbf{e}=(1,\dots,1) \in \mathbb{R}^m$. This is the usual unnormalized graph Laplacian of the symmetric weights matrix $\frac{1}{2}(W + W^T)$, and satisfies
\begin{align}
\label{eq: lap-trace}
    \frac{1}{2m}\sum_{ik}W_{ik}\euc{F(q_{k})-F(q_{i})}^2 = \frac{1}{m} \mathrm{tr}(\mathbf{F}^T \overline{\mathbf{L}}_{W}\mathbf{F}).
\end{align}
Consequently, $\overline{\mathbf{L}}_W$ is also symmetric and positive semidefinite. Finally, define the finite-dimensional counterpart to $J$, denoted by $\mathbf{J}:\mathbb{R}^{m\times d} \times \mathbb{R}_+^{\mathcal{E}_\tau} \to \mathbb{R}$, as follows:
\begin{equation}\label{eq: bold_J}
\mathbf{J}(\boldsymbol{\theta},W):= \frac{1}{m} \fro{\mathbf{Y}-\sqrt{ m }\mathbf{K}\boldsymbol{\theta}}^2 + \lambda\, \mathrm{tr}\left( \boldsymbol{\theta}^T \mathbf{K} \boldsymbol{\theta}\right) + \omega\left\{ \mathrm{tr}\left( \boldsymbol{\theta}^T \mathbf{K}\overline{\mathbf{L}}_{W}\mathbf{K}\boldsymbol{\theta} \right) + E(W) \right\}.
\end{equation}
We have a variant of the \emph{representer theorem} from RKHS theory, \hyperlink{loc:proof_(representer_theorem)}{proved} in \Cref{appsec: deferred-proofs}.

\begin{lemma}[A representer theorem]
\label{loc:statement_(representer_theorem)}
Given $W \in \mathcal{C}_{\tau}$, $\lambda >0$, $\omega \geq 0$, a function $\hat{F}\in \mathcal{H}^{d}$ satisfies $\hat{F} = \argmin_{F \in \mathcal{H}^{d}}\,J(F,W)$
if and only if $\hat{F}$ can be represented as $\hat{F}(\cdot) = \frac{1}{\sqrt{ m }} \sum_{i=1}^m \mathcal{K}(q_{i}, \cdot)\hat{\theta}_{i}$,
where the vectors $\hat{\theta}_{1},\dots,\hat{\theta}_{m} \in \mathbb{R}^d$ are the rows of the unique matrix $\hat{\boldsymbol{\theta}} \in \mathbb{R}^{m\times d}$ satisfying
\begin{equation}
\begin{aligned}
\label{eq:theta_update}
\hat{\boldsymbol{\theta}} = \argmin_{\boldsymbol{\theta} \in \mathbf{K}\mathbb{R}^{m\times d}}\,\mathbf{J}(\boldsymbol{\theta}, W) = \frac{1}{\sqrt{ m }}(\mathbf{K}^2+\lambda \mathbf{K}+\omega \mathbf{K}\overline{\mathbf{L}}_{W}\mathbf{K})^+ \mathbf{K}\mathbf{Y}.
\end{aligned}
\end{equation}
\end{lemma}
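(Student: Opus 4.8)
The plan is to prove the lemma in four steps: existence and uniqueness of $\hat F$; a representer-type orthogonality argument confining $\hat F$ to $M(q_1,\dots,q_m)$; a term-by-term reduction of $J(\cdot,W)$ on that subspace to the finite-dimensional functional $\mathbf J(\cdot,W)$ on $\mathbf K\mathbb{R}^{m\times d}$; and finally the explicit minimization of the resulting convex quadratic. For fixed $W \in \mathcal{C}_\tau$, the data-fit and graph-Laplacian terms of $J(\cdot,W)$ depend on $F$ only through the evaluations $F(q_1),\dots,F(q_m)$, which are bounded linear functionals on the RKHS $\mathcal{H}^d$; hence these terms are weakly continuous and convex, while $\lambda\norm{F}_{\mathcal{H}^d}^2$ with $\lambda>0$ makes $J(\cdot,W)$ strongly convex, coercive and weakly lower semicontinuous. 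The direct method then yields a unique minimizer $\hat F$. Next, writing $M := M(q_1,\dots,q_m)$ and decomposing $\mathcal{H}^d = M \oplus M^{\perp}$, the reproducing property $F_j(q_i) = \langle F_j, \mathcal{K}(q_i,\cdot)\rangle_{\mathcal{H}}$ shows that each evaluation $F(q_i)$ — and therefore the data-fit and Laplacian terms — depends only on the component of $F$ in $M$ (any $F_\perp \in M^{\perp}$ satisfies $F_\perp(q_i)=0$), whereas $\norm{F}_{\mathcal{H}^d}^2 = \norm{F_M}_{\mathcal{H}^d}^2 + \norm{F_\perp}_{\mathcal{H}^d}^2$. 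Thus $J(F,W) \ge J(F_M,W)$ with equality iff $F_\perp = 0$, so $\hat F \in M$, i.e. $\hat F(\cdot) = \tfrac{1}{\sqrt m}\sum_i \mathcal{K}(q_i,\cdot)\hat\theta_i$ for suitable $\hat\theta_1,\dots,\hat\theta_m \in \mathbb{R}^d$.

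For the reduction, set $F_{\boldsymbol\theta}(\cdot) := \tfrac{1}{\sqrt m}\sum_i \mathcal{K}(q_i,\cdot)\theta_i$ for $\boldsymbol\theta \in \mathbb{R}^{m\times d}$. Using symmetry of $\mathcal{K}$ one checks $\mathbf F_{\boldsymbol\theta} = \sqrt m\,\mathbf K\boldsymbol\theta$, and a direct RKHS computation gives $\norm{F_{\boldsymbol\theta}}_{\mathcal{H}^d}^2 = \tr(\boldsymbol\theta^{T}\mathbf K\boldsymbol\theta)$; combined with identity \eqref{eq: lap-trace} applied to $\mathbf F_{\boldsymbol\theta}$, these identities add up to $J(F_{\boldsymbol\theta},W) = \mathbf J(\boldsymbol\theta,W)$. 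The map $\boldsymbol\theta \mapsto F_{\boldsymbol\theta}$ is onto $M$, and since $\norm{F_{\boldsymbol\theta}}_{\mathcal{H}^d}^2 = \tr(\boldsymbol\theta^{T}\mathbf K\boldsymbol\theta)$ vanishes precisely when $\mathbf K\boldsymbol\theta = 0$, and $\mathbf K\mathbb{R}^{m\times d}$ is (columnwise) the orthogonal complement of $\ker\mathbf K$, this map restricts to a bijection from $\mathbf K\mathbb{R}^{m\times d}$ onto $M$. Hence minimizing $J(\cdot,W)$ over $\mathcal{H}^d$ is equivalent to minimizing $\mathbf J(\cdot,W)$ over $\mathbf K\mathbb{R}^{m\times d}$, the two minimizers correspond under the bijection, and in particular $\hat{\boldsymbol\theta}$ is unique.

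It remains to minimize the convex quadratic $\mathbf J(\cdot,W)$ over $\mathbf K\mathbb{R}^{m\times d}$. Writing $\mathbf A := \mathbf K^2 + \lambda\mathbf K + \omega\mathbf K\overline{\mathbf L}_W\mathbf K$ (symmetric, positive semidefinite), one has $\nabla_{\boldsymbol\theta}\mathbf J(\boldsymbol\theta,W) = 2\big(\mathbf A\boldsymbol\theta - \tfrac{1}{\sqrt m}\mathbf K\mathbf Y\big)$. The crucial algebraic fact is $\ker\mathbf A = \ker\mathbf K$: if $\mathbf Av = 0$ then $0 = v^{T}\mathbf Av = \euc{\mathbf Kv}^2 + \lambda\, v^{T}\mathbf Kv + \omega\, v^{T}\mathbf K\overline{\mathbf L}_W\mathbf Kv$ is a sum of nonnegative terms, so $\lambda\, v^{T}\mathbf Kv = 0$, and $\lambda > 0$ together with $\mathbf K \succeq 0$ forces $\mathbf Kv = 0$; the reverse inclusion is obvious. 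Consequently $\mathbf A$ and $\mathbf K$ share the same range, $\tfrac{1}{\sqrt m}\mathbf K\mathbf Y$ lies in it, so the normal equation $\mathbf A\boldsymbol\theta = \tfrac{1}{\sqrt m}\mathbf K\mathbf Y$ is consistent; and for $\boldsymbol\theta$ in the subspace $\mathbf K\mathbb{R}^{m\times d}$ the gradient $2(\mathbf A\boldsymbol\theta - \tfrac{1}{\sqrt m}\mathbf K\mathbf Y)$ again lies in $\mathbf K\mathbb{R}^{m\times d}$, so vanishing of the restricted gradient is equivalent to $\mathbf A\boldsymbol\theta = \tfrac{1}{\sqrt m}\mathbf K\mathbf Y$, whose unique solution in $\mathbf K\mathbb{R}^{m\times d}$ is the minimum-norm solution $\hat{\boldsymbol\theta} = \tfrac{1}{\sqrt m}\mathbf A^{+}\mathbf K\mathbf Y$ (note $\mathbf A^{+}\mathbf K\mathbf Y$ lies in $\mathrm{range}\,\mathbf A = \mathbf K\mathbb{R}^{m\times d}$ and $\mathbf A\mathbf A^{+}$ fixes $\mathbf K\mathbf Y$). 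The ``if'' direction is then immediate: the minimizer is unique and has exactly this form, so any $\hat F$ of this form coincides with it.

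The main obstacle I anticipate is the bookkeeping around a possibly rank-deficient kernel matrix $\mathbf K$: identifying the subspace $\mathbf K\mathbb{R}^{m\times d}$ on which $\boldsymbol\theta \mapsto F_{\boldsymbol\theta}$ is a bijection onto $M$, and establishing $\ker\mathbf A = \ker\mathbf K$ so that the pseudoinverse formula simultaneously solves the normal equations and lands in the correct subspace. The remaining ingredients — weak lower semicontinuity and coercivity in the Hilbert space, the orthogonality argument, the term-by-term translation to $\mathbf J$, and the quadratic minimization — are routine.
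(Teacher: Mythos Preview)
Your proposal is correct and follows essentially the same three-step approach as the paper: orthogonal decomposition $\mathcal{H}^d = M \oplus M^\perp$ to confine the minimizer to $M$, term-by-term reduction of $J(F_{\boldsymbol\theta},W)$ to $\mathbf{J}(\boldsymbol\theta,W)$, and solving the resulting quadratic via the pseudoinverse after noting $\ker\mathbf{A} = \ker\mathbf{K}$. Your presentation is somewhat more explicit in places (existence via the direct method, the bijection $\mathbf{K}\mathbb{R}^{m\times d}\to M$, the quadratic-form proof of $\ker\mathbf{A}=\ker\mathbf{K}$), but structurally it matches the paper's argument.
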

See \hyperlink{loc:proof_(representer_theorem)}{the proof} in \Cref{appsec: deferred-proofs}.

\begin{remark}
\label{loc:remark_(representer_theorem)}
The function $\boldsymbol{\theta} \mapsto \mathbf{J}(\boldsymbol{\theta}, W)$ is strongly convex on $\mathbf{K}\mathbb{R}^{m\times d}$ with modulus (at least) $\Lambda_{r}(\mathbf{K})^2 + \lambda\,\Lambda_{r}(\mathbf{K})$, where $\Lambda_{r}(\mathbf{K})$ denotes the smallest strictly positive eigenvalue of $\mathbf{K}$. 
Note, additionally, that $\hat{\boldsymbol{\theta}}$ in \eqref{eq:theta_update} also minimizes $\mathbf{J}(\cdot,W)$ over the whole space $\boldsymbol{\theta} \in \mathbb{R}^{m\times d}$.
\end{remark}

Given an iterate $W^t \in \mathcal C_\tau$, we obtain $F^{t+1}$, the updated $F$-iterate from \eqref{eq: alternating_updates}, using \Cref{loc:statement_(representer_theorem)}.
Now that we have a closed-form expression for $F^{t+1}$, we can go on to solve for $W^{t+1}=\argmin_{W \in \mathcal{C}_\tau} \,J(F^{t+1}, W)$. 
The following proposition establishes that the weighting procedure introduced in \cref{eq: gaussian_weights_overview} is equivalent to solving this $W$-update subproblem. We remark that the additional entropy term $E(W)$ is required in order to establish this equivalence. 

\begin{proposition}\label{prop: W_min_solution}
Let $F \in \mathcal{H}^d$, $\lambda>0$ and $\omega>0$. The function $W \mapsto J(F,W)$ is $(\omega /2m)$-strongly convex on $\mathcal{C}_{\tau}$, and the unique minimizer $\hat{W} = \argmin_{W \in \mathcal{C}_\tau}\, J(F, W)$
is given by $\hat{W}=\mathrm{RowNormalize}(\tilde{W})$, where
\begin{equation}\label{eq: W_update}
\tilde W_{ik} := \begin{cases}
\exp \left( - \frac{1}{s_1^2} \euc{F(q_k) - F(q_i)}^2 \right) \exp \left( - \frac{1}{s_2^2} \euc{q_k - q_i}^2\right), & \text{if } |q_k - q_i | \leq \tau, \\
0 & \text{otherwise.}
\end{cases}
\end{equation}
\end{proposition}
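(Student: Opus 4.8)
The plan is to first strip $J(F,\cdot)$ down to the terms that actually depend on $W$, observe that the resulting problem decouples into $m$ independent problems over probability simplices, and then solve each one by recognizing it as an entropy-regularized linear program whose solution is a Gibbs/softmax distribution.

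First I would note that in $J(F,W)$ the data-fit and ridge terms are constants in $W$, so minimizing $W\mapsto J(F,W)$ over $\mathcal C_\tau$ is equivalent to minimizing $W\mapsto \tfrac{\omega}{2m}\sum_{(i,k)\in\mathcal E_\tau}\phi_{ik}(W_{ik})$, where, after collecting the Laplacian and entropy contributions of a single edge, $\phi_{ik}(w)=b_{ik}\,w+s_1^2\,w(\log w-1)$ with $b_{ik}:=\euc{F(q_i)-F(q_k)}^2+\tfrac{s_1^2}{s_2^2}\euc{q_i-q_k}^2\ge 0$ (and $w\log w$ extended by $0$ at $w=0$). Crucially this objective is \emph{separable across the rows} of $W$, and $\mathcal C_\tau$ is the product over $i\in[m]$ of the simplices on the neighbour sets $\{k:(i,k)\in\mathcal E_\tau\}$. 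Since each $\phi_{ik}$ is continuous and strictly convex on $[0,\infty)$, $J(F,\cdot)$ is continuous and strictly convex on the compact set $\mathcal C_\tau$, so a unique minimizer $\hat W$ exists. For the quantitative strong-convexity claim I would invoke the fact that the negative-entropy functional $w\mapsto\sum_k w_k\log w_k$ is $1$-strongly convex on a simplex with respect to $\|\cdot\|_1$ (equivalently, Pinsker's inequality applied to its Bregman divergence, which is the KL divergence), hence also with respect to $\|\cdot\|_2$; summing the linear-plus-entropy contributions over the orthogonal row blocks and tracking the scalars $\omega$, $\tfrac1{2m}$, $s_1^2$ then gives the asserted strong convexity of $J(F,\cdot)$ on $\mathcal C_\tau$.

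Next I would exhibit the minimizer explicitly. Set $\hat W=\mathrm{RowNormalize}(\tilde W)$ with $\tilde W$ as in \eqref{eq: W_update}; since $(i,i)\in\mathcal E_\tau$ and $\tilde W_{ii}=1>0$, each row of $\tilde W$ is nonzero, so $\hat W$ is well defined, lies in $\mathcal C_\tau$, and has \emph{strictly positive} entries on every edge of $\mathcal E_\tau$, i.e. $\hat W$ is in the relative interior of $\mathcal C_\tau$. Because the reduced problem is convex, separable over rows, and $\hat W$ is an interior point of each row simplex where the row objective $\sum_{k}\phi_{ik}(W_{ik})$ is differentiable, it suffices to check first-order optimality row by row: the partial derivatives $\partial_{W_{ik}}\sum_{k'}\phi_{ik'}(W_{ik'})=b_{ik}+s_1^2\log W_{ik}$ must be equal over all $k$ with $(i,k)\in\mathcal E_\tau$. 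Evaluating at $W=\hat W$ and using $s_1^2\log\tilde W_{ik}=-b_{ik}$ (which is exactly the definition of $\tilde W$ unwound), one gets $b_{ik}+s_1^2\log\hat W_{ik}=-s_1^2\log\bigl(\sum_{k'}\tilde W_{ik'}\bigr)$, independent of $k$. Hence the gradient of each row objective is constant along the corresponding simplex, every feasible direction has zero directional derivative, and by convexity $\hat W$ minimizes each row problem; combining the rows, $\hat W=\argmin_{W\in\mathcal C_\tau}J(F,W)$, and by uniqueness it is \emph{the} minimizer.

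I do not expect a serious obstacle: this is the standard computation for an entropically regularized linear program over a product of simplices, yielding a softmax/Gibbs solution. The only points that need a little care are (i) the objective is merely continuous — not differentiable — on the boundary of $\mathcal C_\tau$, which is why I argue existence via compactness and then verify optimality at the interior candidate $\hat W$ rather than writing KKT conditions involving $\nabla_W J$ on all of $\mathcal C_\tau$; and (ii) obtaining the \emph{Euclidean} strong-convexity modulus, where the naive bound $\phi_{ik}''(w)=s_1^2/w$ degenerates near the boundary and one must instead route through the $\ell^1$ strong convexity of entropy (Pinsker) before passing to $\|\cdot\|_2$.
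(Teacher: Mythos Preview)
Your proposal is correct and follows the same skeleton as the paper: drop the $W$-independent terms, establish strong convexity of the remaining entropy-plus-linear objective on $\mathcal C_\tau$, and verify first-order optimality at the interior candidate $\hat W$ by showing the partial derivatives are constant along each row. The optimality verification is essentially identical to the paper's (the paper phrases it via the normal cone $N_{\mathcal C_\tau}(\hat W)$, but the computation is the same).

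The one substantive difference is how you obtain the strong-convexity modulus. You route through Pinsker to get $\ell^1$-strong convexity of negative entropy on each row simplex and then pass to $\ell^2$. The paper instead observes directly that on $\mathcal C_\tau$ every entry satisfies $W_{ik}\le 1$, so the diagonal Hessian entries $\phi_{ik}''(w)=s_1^2/w$ are bounded below by $s_1^2$ on the interior of the box $[0,1]^{\mathcal E_\tau}$; strong convexity then extends to the boundary by continuity of the objective. Your concern that $\phi_{ik}''$ ``degenerates near the boundary'' is misplaced here: as $w\to 0^+$ the second derivative blows up to $+\infty$, which only helps the lower bound, and the non-differentiability at $w=0$ is handled by the continuity extension. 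So the paper's argument is slightly more elementary, while yours is a valid alternative that avoids any boundary discussion at the cost of invoking Pinsker.
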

See \hyperlink{loc:proof_of_W_min_solution}{the proof} in \Cref{appsec: deferred-proofs}. 

The above results, together with an initialization choice $W^0$, are summarized into \Cref{alg:iterative_update}.
\begin{algorithm}
\caption{The alternating updates of STARK}
\label{alg:iterative_update}
\begin{algorithmic}[1]
\State Set $\tilde{W}^0 \gets \left[ \exp \left( -|q_{k}-q_{i}|^2 /s_{2}^2 \right) \,\indicator_{\{ |q_{k}-q_{i}| \leq \tau \}} \right]_{i,k\in[m]}$
\State $W^0 \gets \text{RowNormalize}(\tilde{W}^0)$
\For{$t=0,\dots,N-1$}
    \State $\boldsymbol{\theta}^{t+1} \gets \frac{1}{\sqrt{ m }}(\mathbf{K}^2+\lambda \mathbf{K}+\omega \mathbf{K}\overline{\mathbf{L}}_{W^{t}}\mathbf{K})^+ \mathbf{K}\mathbf{Y}$
    \State $\mathbf{F}^{t+1} \gets \sqrt{ m }\mathbf{K}\boldsymbol{\theta}^{t+1}$
    \State $\tilde{W}^{t+1} \gets \left[ \exp \left( -|\mathbf{F}^{t+1}_{k\cdot}-\mathbf{F}^{t+1}_{i\cdot}|^2 /s_{1}^2 \right)  \exp \left( -|q_{k}-q_{i}|^2 /s_{2}^2 \right) \,\indicator_{\{ |q_{k}-q_{i}| \leq \tau \}} \right]_{i,k\in[m]}$
    \State $W^{t+1} \gets \text{RowNormalize}(\tilde{W}^{t+1})$
\EndFor
\State \Return $\boldsymbol{\theta}^N, W^N$ and the function $F^{N}(\cdot) = \frac{1}{\sqrt{ m }} \sum_{i=1}^m \mathcal{K}(q_{i}, \cdot)\boldsymbol{\theta}^N_{i}$ where $\boldsymbol{\theta}_{1}^N,\dots,\boldsymbol{\theta}_{m}^N \in \mathbb{R}^d$ are the rows of $\boldsymbol{\theta}^N$.
\end{algorithmic}
\end{algorithm}
As currently presented, the algorithm fixes the total number of iterations $N$ beforehand instead of using an optimality-based stopping criterion. This works well in practice because the first iterate $F^1 = \argmin_{F \in \mathcal{H}^d} J(F, W^0)$ is usually already a decent estimate of the ground truth $F^\star$ (see the discussion on the \textit{spatial} variant in \Cref{subsec: num_results} and \Cref{fig:spatial_oracle_comparisons}), so that a few more iterations suffice to produce a good result and one does not need to wait for convergence.
Nevertheless, it is possible to assess stationarity and convergence, in reference to \Cref{def: stat}, by tracking the distance between successive iterates as follows.
\begin{proposition}[Stationarity measure]
\label{res: bounding-non-stationarity}
For any $t \in \mathbb{N}$, we have
\begin{align*}
-\nabla_{W}J(F^t, W^t) \in N_{\mathcal{C}_{\tau}}(W^t), &\qquad
\|\nabla_{F} J(F^t, W^t)\|_{\mathcal{H}^d} \leq B \|F^{t+1}-F^t\|_{\mathcal{H}^d},
\end{align*}
where the constant $B >0$ does not depend on $t$.
\end{proposition}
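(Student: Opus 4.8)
The plan is to handle the two claims separately; the first is immediate from the update rule, and the second rests on the fact that $F\mapsto J(F,W)$ is quadratic with a uniformly bounded Hessian.

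\emph{First inclusion.} By \eqref{eq: alternating_updates}, $W^t$ is an exact minimizer of $W \mapsto J(F^t, W)$ over the convex set $\mathcal{C}_\tau$. This function is convex by \Cref{prop: W_min_solution}, and is differentiable at $W^t$ since $W^t$ (being, by \Cref{prop: W_min_solution}, a RowNormalize of strictly positive Gaussian weights) has strictly positive entries on $\mathcal{E}_\tau$, so the entropy term is smooth there. The first-order optimality condition for constrained convex minimization, namely $\langle \nabla_W J(F^t, W^t),\, W - W^t\rangle_F \geq 0$ for all $W \in \mathcal{C}_\tau$, is exactly the assertion $-\nabla_W J(F^t, W^t) \in N_{\mathcal{C}_\tau}(W^t)$.

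\emph{Second inequality.} Introduce the sampling operator $\Phi : \mathcal{H}^d \to \mathbb{R}^{m\times d}$, $\Phi F := \mathbf{F}$ (the matrix with rows $F(q_1),\dots,F(q_m)$); the reproducing property makes $\Phi$ bounded with $\op{\Phi} \leq \sqrt{m}\,C_{\mathcal{K}}$, where $C_{\mathcal{K}} := \sup_q \mathcal{K}(q,q)^{1/2}$. Rewriting the graph Laplacian term via \eqref{eq: lap-trace} as $\tfrac{\omega}{m}\,\mathrm{tr}\big((\Phi F)^T \overline{\mathbf{L}}_W \Phi F\big)$, a direct differentiation of the quadratic functional $F\mapsto J(F,W)$ yields the affine gradient
\begin{equation*}
\nabla_F J(F, W) = A_W F - \tfrac{2}{m}\Phi^*\mathbf{Y}, \qquad A_W := \tfrac{2}{m}\Phi^*\Phi + 2\lambda\,\mathrm{Id}_{\mathcal{H}^d} + \tfrac{2\omega}{m}\Phi^*\overline{\mathbf{L}}_W\Phi,
\end{equation*}
in which the constant term does not depend on $W$. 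Because $F^{t+1}$ minimizes the strongly convex, differentiable functional $F\mapsto J(F, W^t)$ over all of $\mathcal{H}^d$ (\Cref{loc:statement_(representer_theorem)}), we have $A_{W^t}F^{t+1} - \tfrac{2}{m}\Phi^*\mathbf{Y} = \nabla_F J(F^{t+1}, W^t) = 0$. Subtracting this from $\nabla_F J(F^t, W^t) = A_{W^t}F^t - \tfrac{2}{m}\Phi^*\mathbf{Y}$ gives $\nabla_F J(F^t, W^t) = A_{W^t}(F^t - F^{t+1})$, whence $\|\nabla_F J(F^t, W^t)\|_{\mathcal{H}^d} \leq \op{A_{W^t}}\,\|F^t - F^{t+1}\|_{\mathcal{H}^d}$.

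\emph{Uniform bound.} It remains to set $B := \sup_{W\in\mathcal{C}_\tau}\op{A_W}$ and check $B<\infty$ (it visibly does not depend on $t$). By the triangle inequality and $\tfrac1m\op{\Phi}^2 \le C_{\mathcal{K}}^2$,
\begin{equation*}
\op{A_W} \leq \tfrac{2}{m}\op{\Phi}^2 + 2\lambda + \tfrac{2\omega}{m}\op{\Phi}^2\op{\overline{\mathbf{L}}_W} \leq 2C_{\mathcal{K}}^2 + 2\lambda + 2\omega C_{\mathcal{K}}^2\,\op{\overline{\mathbf{L}}_W},
\end{equation*}
so it suffices that $\sup_{W\in\mathcal{C}_\tau}\op{\overline{\mathbf{L}}_W}<\infty$; this holds because $\overline{\mathbf{L}}_W$ is an $m\times m$ matrix depending linearly on the row-stochastic $W$ (entries in $[0,1]$, at most $m$ nonzeros per column), so $\op{\overline{\mathbf{L}}_W} \leq 1 + d_{\max}$, with $d_{\max}$ the maximum degree in the $\tau$-neighborhood graph $\mathcal{G}$. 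I do not foresee a genuine obstacle; the one place that needs care is making the affine-gradient identity rigorous in the infinite-dimensional RKHS (boundedness of point evaluations, existence of $\Phi^*$, differentiability of the quadratic) and confirming that the only $W$-dependence of $\nabla_F J(\cdot, W)$ enters through the uniformly bounded operator $\Phi^*\overline{\mathbf{L}}_W\Phi$.
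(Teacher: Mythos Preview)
Your proposal is correct and follows essentially the same route as the paper's proof: the first inclusion is dispatched via first-order optimality of the $W$-subproblem, and for the second inequality you compute $\nabla_F J(\cdot,W)$ as an affine map $A_W F - c$ with $c$ independent of $W$, use optimality of $F^{t+1}$ to write $\nabla_F J(F^t,W^t)=A_{W^t}(F^t-F^{t+1})$, and bound $\op{A_{W^t}}$ uniformly in $t$. The only difference is notational: you package the evaluation maps into a single sampling operator $\Phi:\mathcal{H}^d\to\mathbb{R}^{m\times d}$, whereas the paper works with the individual $T_i:\mathcal{H}^d\to\mathbb{R}^d$, $T_iF=F(q_i)$, and writes $A^t=\tfrac{2}{m}\sum_i T_i^*T_i + 2\lambda + \tfrac{\omega}{m}\sum_{(i,k)}W_{ik}^t(T_i-T_k)^*(T_i-T_k)$; this is the same operator as your $A_{W^t}$ after applying \eqref{eq: lap-trace}.
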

See \hyperlink{prf: bounding-non-stationarity}{the proof} in \Cref{appsec: deferred-proofs}.
Via the variable $\boldsymbol{\theta}$, we can easily compute $\|F^{t+1}-F^t\|_{\mathcal{H}^d}^2= \mathrm{tr}\,[(\boldsymbol{\theta}^{t+1}-\boldsymbol{\theta}^t)^T \mathbf{K}(\boldsymbol{\theta}^{t+1}-\boldsymbol{\theta}^t)]$.

\section{Numerical experiments}\label{sec: numerics}
The recently-developed Stereo-seq technology \cite{chen_SpatiotemporalTranscriptomicAtlas_2022} has produced massive, high-quality spatial transcriptomics datasets, including the Mouse Organogenesis Spatiotemporal Transcriptomic Atlas (MOSTA). This dataset consists of gene expression image snapshots of mouse embryos at various stages of development. The high quality of the data is owed to a large field of view, cellular-scale resolution and a larger number of total reads, as compared to previous technologies.

We report on several numerical experiments evaluating our method {STARK} on the MOSTA dataset, including comparisons with SPROD \cite{wang_SprodDenoisingSpatially_2022}, GraphPCA~\cite{yang_GraphPCAFastInterpretable_2024} and STAGATE~\cite{dong_DecipheringSpatialDomains_2022}.
STARK is implemented in python, and is available at \url{https://github.com/schiebingerlab/STARK}. {The inputs to STARK are} (1) a pixels-by-genes noisy gene expression matrix $\mathbf{Y}$ whose rows are normalized to sum to 1, (2) spatial $(x,y)$ coordinates specifying the locations of the pixels $q_1,...,q_m$ from which the reads were observed, and optionally, (3) the number of reads $R_1, \dots, R_m$ sequenced from each of the pixels, which help auto-tune regularization strengths. 
{We do any additional transformations on the counts, e.g. \texttt{log1p}, \textit{after} denoising.}

The main output of the algorithm (see \Cref{alg:iterative_update}) is a function $\bar F: \mathcal{Q} \to \mathbb{R}^d$, which can be evaluated at any $q \in \mathcal{Q}$. Given spatial coordinates specifying (possibly new) locations $q_1',\dots,q'_{m'}$, the gene expression vectors at those locations can be estimated as $\bar F(q_1'), \dots, \bar F(q'_{m'})$. Noting that gene expression vectors should ideally lie in the simplex $\Delta_d$, we additionally threshold $\bar F(q_1'), \dots, \bar F(q'_{m'})$ to the simplex via the map $X \mapsto {X_+}\,/\,{|X_+|_1}$,
where $X_+:= \max(X,0)$ is defined component-wise. This thresholding map is more of a simple heuristic and can be replaced by Euclidean (or other) projections onto $\Delta_d$ if desired. 

Details on suitable hyperparameter settings for STARK are provided in \Cref{appsubsec: hyperparameters}.
Importantly, in the experiments to follow, we pick the exponential kernel $\mathcal{K}(q, q'):= \exp(-|q'-q|/\lsc)$, where $\lsc>0$ is a tunable length-scale. Recall that for this choice of kernel, the resulting vector-valued RKHS $\mathcal{H}^d$ coincides with the Sobolev space $H^{3/2}(\mathcal{Q};\mathbb{R}^d)$, up to equivalent norms, provided that $\mathcal{Q}$ is an extension domain (see \Cref{appsec: theory-details}).

\subsection{Denoising and interpolation tests}
\label{subsec: denoising_interpolation_tests}

We describe here the denoising and interpolation tests based on downsampling reads and subsampling cells that we use to evaluate our method. Denote by $\mathbf{C}_0 \in \mathbb{N}_0^{m \times d}$ a pixels-by-genes counts matrix of unique molecular identifier (UMI) reads from a real spatial transcriptomics dataset. The entry $(\mathbf{C}_0)_{ij}$ counts the number of UMI reads of gene $j$ that were collected at pixel $q_i$. If the total number of reads $R_0$ is high, then $\mathbf{F}_0 = \mathrm{RowNormalize}(\mathbf{C}_0)$ is a good approximation to the underlying ground truth gene expression matrix $\mathbf{F}^\star$. In particular, the rows of $\mathbf{F}_0$, which are the observed gene expression vectors at pixels $q_1, \dots, q_m$, are
close to $F^\star(q_1), \dots, F^\star(q_m)$.
Given $\mathbf{C}_0$, we downsample it to $R< R_0$ reads, which means that we retain $R$ reads from $\mathbf{C}_0$ uniformly at random and discard the rest, resulting in a downsampled counts matrix $\mathbf{C}_{\mathrm{DS}}$. The matrix $\mathbf{Y}=\mathrm{RowNormalize}(\mathbf{C}_{\mathrm{DS}})$ has rows $Y_1,\dots,Y_m$ that are much noisier than those of $\mathbf{F}_0$ when $R \ll R_0$.

\paragraph{Denoising test}
Input $\mathbf{Y}$ into the denoising algorithm being studied, which outputs denoised gene expression vectors $\bar{F}(q_1),\dots, \bar{F}(q_m)$. Compare these to the rows of $\mathbf{F}_0$. 

\paragraph{Interpolation test}
Subsample $\mathbf{Y}$ to $\tilde{m}<m$ pixels by retaining $\tilde{m}$ pixels
$\{q_{i_1}, q_{i_2}, \dots, q_{i_{\tilde{m}}}\}$ selected uniformly at random and discarding the rest. The resulting subsampled matrix $\tilde{\mathbf{Y}}$ has the corresponding rows $Y_{i_1}, Y_{i_2},\dots, Y_{i_{\tilde{m}}}$. Input $\tilde{\mathbf{Y}}$ into a denoising algorithm that estimates gene expression vectors $\bar{F}(q_1),\dots, \bar{F}(q_m)$ at all $m$ pixels. Compare these to the rows of $\mathbf{F}_0$.

\subsection{Performance metrics}
\label{subsec: performance_metrics}
The performance metrics \emph{label transfer accuracy}, \emph{kNN overlap} and \emph{relative error} are described below. Recall the matrix $\mathbf{F}_0$ of originally observed gene expression vectors, and denote by $\bar{\mathbf{F}}$ a denoised gene expression matrix with rows $\bar{F}(q_1), \dots, \bar{F}(q_m)$.

\paragraph{Label transfer accuracy}
An important piece of information contained in gene expression is cell identity; cells of distinct types tend to occupy different regions in gene expression space. The metric of label transfer accuracy intends to quantify how well the denoising process preserves cell identities. At a high level, we use a $k$-nearest neighbour procedure to transfer cell type labels from the original gene expression vectors at the pixels (rows of $\mathbf{F}_{0}$) to the denoised gene expression vectors (rows of $\bar{\mathbf{F}}$), and record the fraction of the transferred labels that are correct. This fraction clearly lies between 0 (completely inaccurate) to 1 (completely accurate). The degree to which the labels can be transferred successfully indicates the quality of the denoised matrix. Details on the steps involved in this label transfer procedure (e.g. log-transform, PCA) are provided in \Cref{appsubsec: metrics_details}. 

We remark that label transfer accuracy captures the essential geometry of gene expression space, and treat it as our primary metric for evaluation and hyperparameter tuning.

\paragraph{kNN overlap}
A fundamental data structure that encodes essential information in a set of gene expression vectors is their $k$-nearest neighbour (kNN) graph (notice that similar graphs have already played a role in our denoising strategy, and also in the above metric of label transfer accuracy). Important clustering algorithms for cells in gene expression space also take the kNN graph as an input. 
This motivates our kNN overlap metric, which is closely related to the corresponding metric from \citet{ahlmann-eltze_ComparisonTransformationsSinglecell_2023}. It evaluates the denoised gene expression vectors (rows of $\bar{\mathbf{F}}$) by comparing their kNN graph against the kNN graph of the original gene expression vectors (rows of $\mathbf{F}_{0}$) post some transformations (e.g. log-transform, PCA) as detailed in \Cref{appsubsec: metrics_details}. From the resulting adjacency matrices $A_{0}, \bar{A} \in \{ 0,1 \}^{m\times m}$, compute the overlap as $\frac{1}{m} \langle A_{0}, \bar{A} \rangle_{F}$.
With the choice $k=50$ as in \citet{ahlmann-eltze_ComparisonTransformationsSinglecell_2023}, the kNN overlap lies between 0 (no overlap) to 50 (full overlap).

\paragraph{Relative error}
The relative error metric directly compares the original gene expression matrix $\mathbf{F}_0$ with the denoised matrix $\bar{\mathbf{F}}$ as
\begin{align*}
\mathrm{RelErr}(\mathbf{F}_0, \bar{\mathbf{F}}) := \frac{\fro{ \bar{\mathbf{F}} - \mathbf{F}_0}}{\fro{\mathbf{F}_0}}.
\end{align*}
This is a standard metric that is popular in the matrix denoising literature. Relative error can sometimes disagree with label transfer accuracy and kNN overlap, and we suspect this is because relative error fails to capture the geometry of gene expression space. 
{We expect there to be certain directions or substructures in gene expression space containing disproportionately more information on cell identities, but the relative error metric is agnostic to these.

While tuning hyperparameters of the denoising methods, we observed that the hyperparameter choices leading to the smallest relative errors often led to substantially worse label transfer accuracies and kNN overlaps. We also noticed that methods based on low-dimensional representations (e.g. low rank factorization) often did very well in terms of relative error, but struggled in the other metrics, likely because of misalignments between the low-dimensional representations involved and the biologically important directions or substructures.}

\subsection{Results}
\label{subsec: num_results}

Each snapshot of the MOSTA~\cite{chen_SpatiotemporalTranscriptomicAtlas_2022} dataset consists of a counts matrix $\mathbf{C}_0$ of unique molecular identifier (UMI) reads, together with $(x,y)$-coordinates for the pixels. We use the E9.5 snapshot, which, after some initial filtering, produces a matrix with $d=15717$ genes, $m=5503$ pixels, and $R_0\simeq 7.6 \times 10^{7}$ reads (i.e. $1.38 \times 10^4$ reads per pixel).
The gene expression vector at every pixel has already been assigned a cell type in MOSTA, through a procedure involving spatially-constrained-clustering. The result can be visualized as a usual image, with different colours representing the different cell types (see \Cref{fig:ground_truth_fig}).

\begin{figure}[t]
    \centering
    \includegraphics[width=0.4\textwidth]{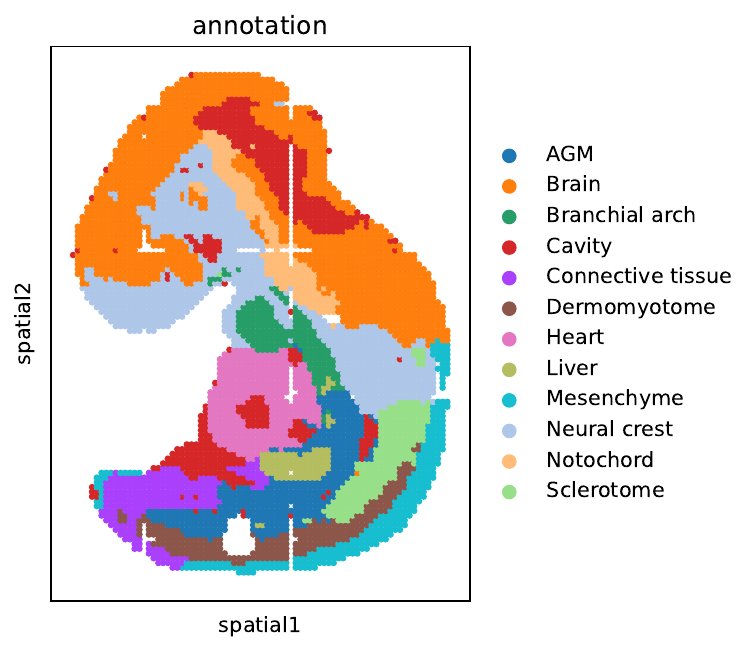}
    \caption{The original cell type assignment for the E9.5 snapshot of the MOSTA dataset. We treat these labels as ground truth for the label transfer accuracy metric.}
    \label{fig:ground_truth_fig}
\end{figure}

We first present results from the \textit{denoising test} on our method STARK, and compare against SPROD~\cite{wang_SprodDenoisingSpatially_2022}, GraphPCA~\cite{yang_GraphPCAFastInterpretable_2024} and STAGATE~\cite{dong_DecipheringSpatialDomains_2022}. Hyperparameter settings and transformation choices for these methods are made to \emph{optimize for label transfer accuracy}, as discussed further in \Cref{appsubsec: hyperparam-competing}. 
At various levels of downsampling as quantified by $R$, a noisy matrix $\mathbf{Y}$ is generated, which is subsequently denoised by the above methods. With 5 independent repeats for each experiment (to better account for the stochasticity in $\mathbf{Y}$), the averaged metrics for each of the methods are plotted against $R/m$ in \Cref{fig:denoising_test_comparisons}, together with bands of one standard deviation. STARK can be seen to perform particularly well in terms of label transfer accuracy and kNN overlap, followed closely by SPROD. Methods that use low-dimensional representations -- GraphPCA and STAGATE -- achieve very good relative errors, but in the process, fail to faithfully preserve biological information concerning cell identities and relative gene expression distances. 
\begin{figure}
    \centering
    \includegraphics[width=\textwidth, trim=0 15 0 0, clip]{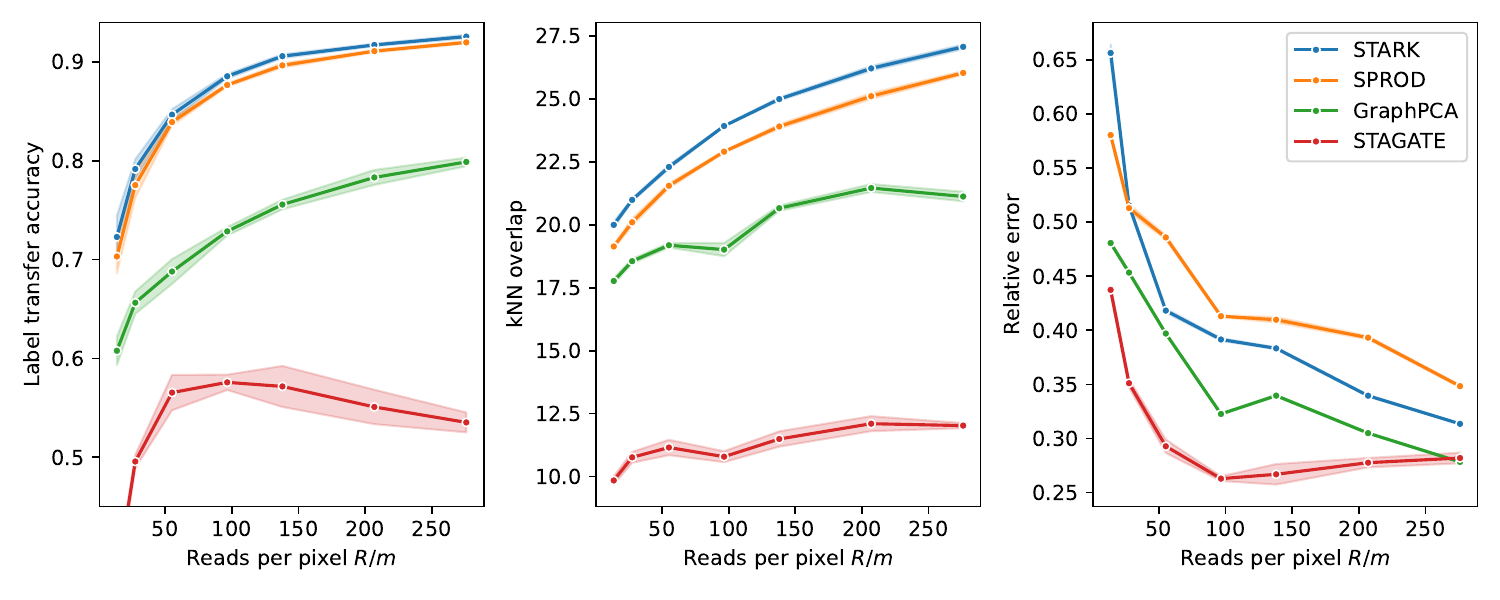}
    \caption{\textbf{Comparing denoising performance with SPROD, GraphPCA and STAGATE}. Noisy observations are generated by downsampling a counts matrix to various sequencing depths. The corresponding denoised images are evaluated in three metrics. Averaged results over 5 repeats are presented, together with bands of one standard deviation.}
    \label{fig:denoising_test_comparisons}
\end{figure}

Label transfer accuracies from the denoising test above can also be visualized conveniently; the cell type labels transferred onto noisy and denoised images are plotted in \Cref{fig:denoising_comparison_images}. 

\begin{figure}[!h]
    \centering
    \includegraphics[width=0.75\textwidth]{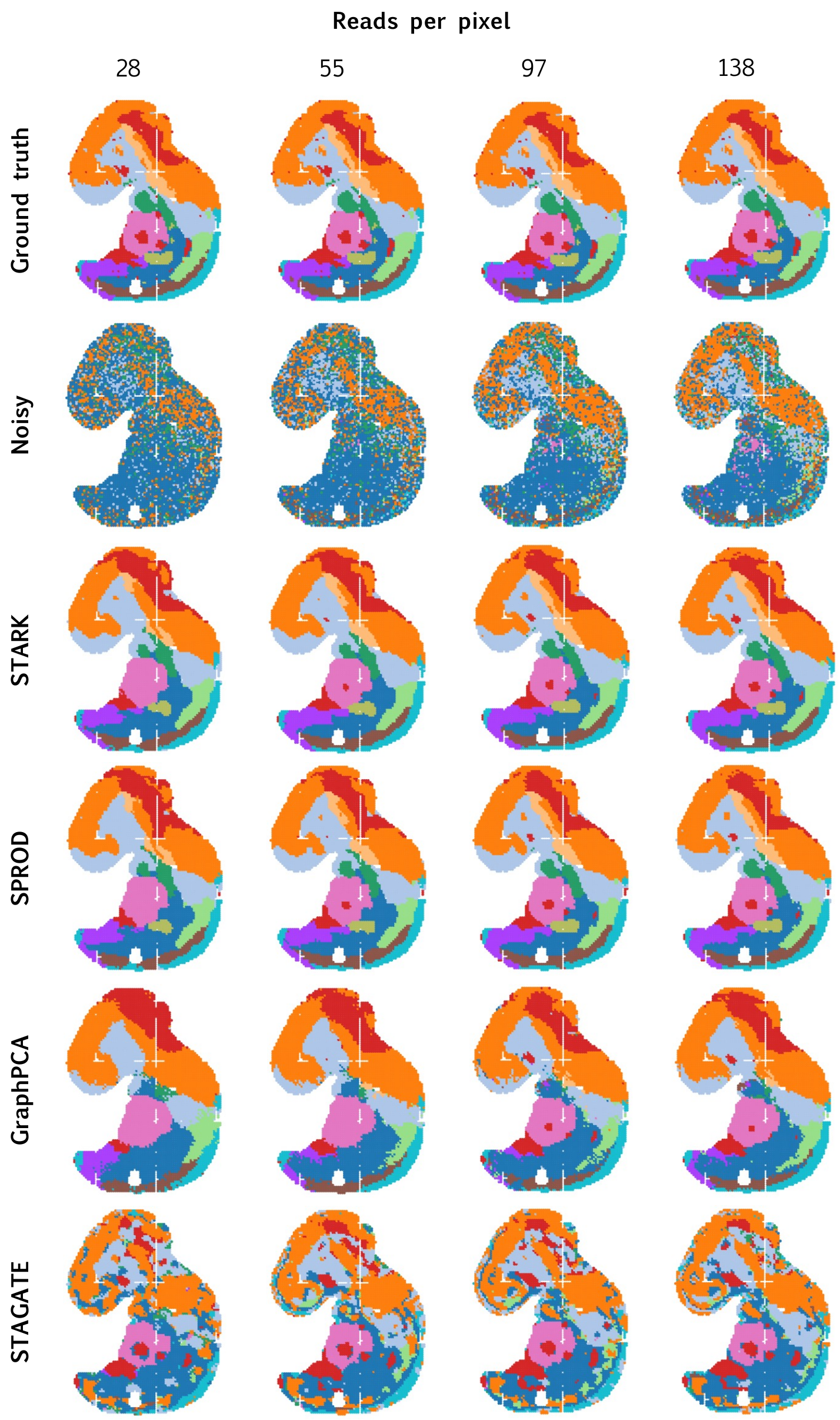}
    \caption{\textbf{Visually comparing denoising performance}. Colours represent cell type labels; see  \Cref{fig:ground_truth_fig}. Ground truth labels of the pixels are plotted in the first row. Labels transferred from the ground truth onto noisy and denoised images are plotted in the remaining rows.}
    \label{fig:denoising_comparison_images}
\end{figure}

Next to examine the role of adaptivity in our method, we also compare its performance against two other `variants'. The \textit{spatial} variant involves simply fixing the graph weights to 
\begin{equation*}
W_{\text{sp}}= \mathrm{RowNormalize}\left[ \exp \left( -|q_{k}-q_{i}|^2 /s_{2}^2 \right) \,\indicator_{\{ |q_{k}-q_{i}| \leq \tau \}} \right]_{i,k\in[m]}
\end{equation*}
and solving $F_{\text{sp}} = \argmin_{F \in \mathcal{H}^d} J(F, W_{\text{sp}})$. Notice that $F_{\text{sp}}$ simply coincides with the first iterate $F^1$ in \Cref{alg:iterative_update}. The graph here is constructed purely from spatial information, and is nonadaptive to the structure of the true image $F^\star$. On the other hand, the \textit{oracle} variant involves informing the graph weights directly by $F^\star$, which is unobserved in practice;
\begin{align*}
    W_{\text{oc}} = \mathrm{RowNormalize}\left[ \exp \left( -|\mathbf{F}^{\star}_{k\cdot}-\mathbf{F}^{\star}_{i\cdot}|^2 /s_{1}^2 \right)  \exp \left( -|q_{k}-q_{i}|^2 /s_{2}^2 \right) \,\indicator_{\{ |q_{k}-q_{i}| \leq \tau \}} \right]_{i,k\in[m]}.
\end{align*}
The oracle-denoised image is then $F_{\text{oc}}=\argmin_{F \in \mathcal{H}^d} J(F, W_{\text{oc}})$. Because it uses information provided by an `oracle', this variant is an idealized method, meant for benchmarking purposes.

We expect our incrementally adaptive method to denoise better than the spatial variant, and approach the performance of the oracle variant. This is indeed the case in denoising tests, as shown in \Cref{fig:spatial_oracle_comparisons}. The experimental settings are the same as in the previous test from \Cref{fig:denoising_test_comparisons}. Interestingly, the difference in performance between the variants is quite small, suggesting that nonadaptive regularization based on spatial smoothing can still denoise effectively. The adaptivity seems to be responsible for mainly finer improvements.

\begin{figure}[h]
    \centering
    \includegraphics[width=\textwidth, trim=0 15 0 0, clip]{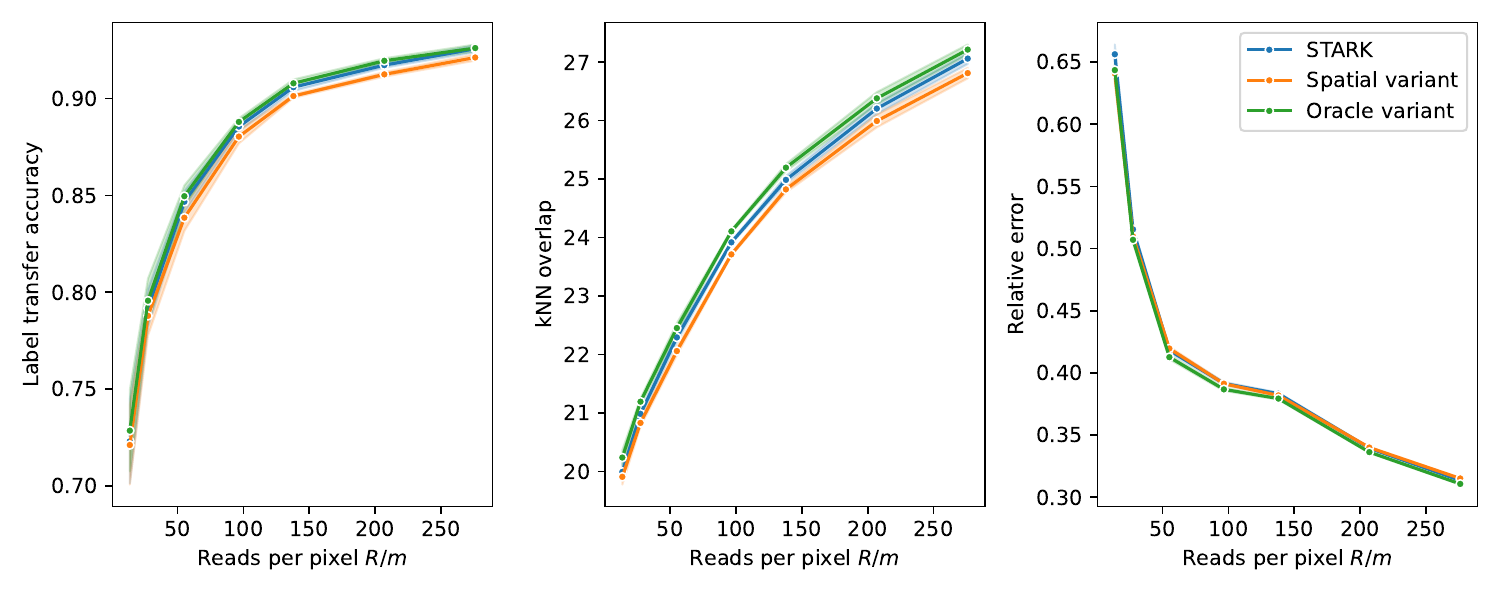}
    \caption{\textbf{Comparing denoising performance with spatial and oracle variants}. Noisy observations are generated by downsampling a counts matrix to various sequencing depths. The corresponding denoised images are evaluated in three metrics. Averaged results over 5 repeats are presented, together with bands of one standard deviation.}
    \label{fig:spatial_oracle_comparisons}
\end{figure}

Finally, we move on to results from the \textit{interpolation test} (see \Cref{subsec: denoising_interpolation_tests}) on STARK. By downsampling reads and discarding pixels, a noisy, subsampled matrix $\tilde{\mathbf{Y}}$ is generated. Subsampling is quantified by the ratio $\tilde{m}/m$ of observed pixels to total pixels. Downsampling, on the other hand, is indicated by the total number of reads $\tilde{R}$ at the observed $\tilde{m}$ pixels. 

For various levels of subsampling and downsampling, the gene expression vectors recovered by STARK at all $m$ pixels are evaluated in the same three metrics. With 5 independent repeats for each experiment (to better account for the stochasticity in $\tilde{\mathbf{Y}}$), the averaged metrics are plotted in \Cref{fig:cell-subsampling} together with bands of one standard deviation. 

The interpolation capabilities of STARK are demonstrated by decent performance even at low pixel subsampling ratios. The various curves in \Cref{fig:cell-subsampling} also inform on possible tradeoffs between sequencing depth (reads per pixel) and the number of observed pixels. 
The label transfer accuracies from this interpolation test can also be visualized as before. For a few pixel subsampling ratios, and with downsampling fixed to 97 reads per pixel, the cell type labels transferred onto the observed noisy pixels as well as the recovered images are plotted in \Cref{fig:pixel_subsampling_comparison_images}. Contrast these with the true labels from \Cref{fig:ground_truth_fig}.

\begin{figure}
    \centering
    \includegraphics[width=\textwidth, trim=0 15 0 0, clip]{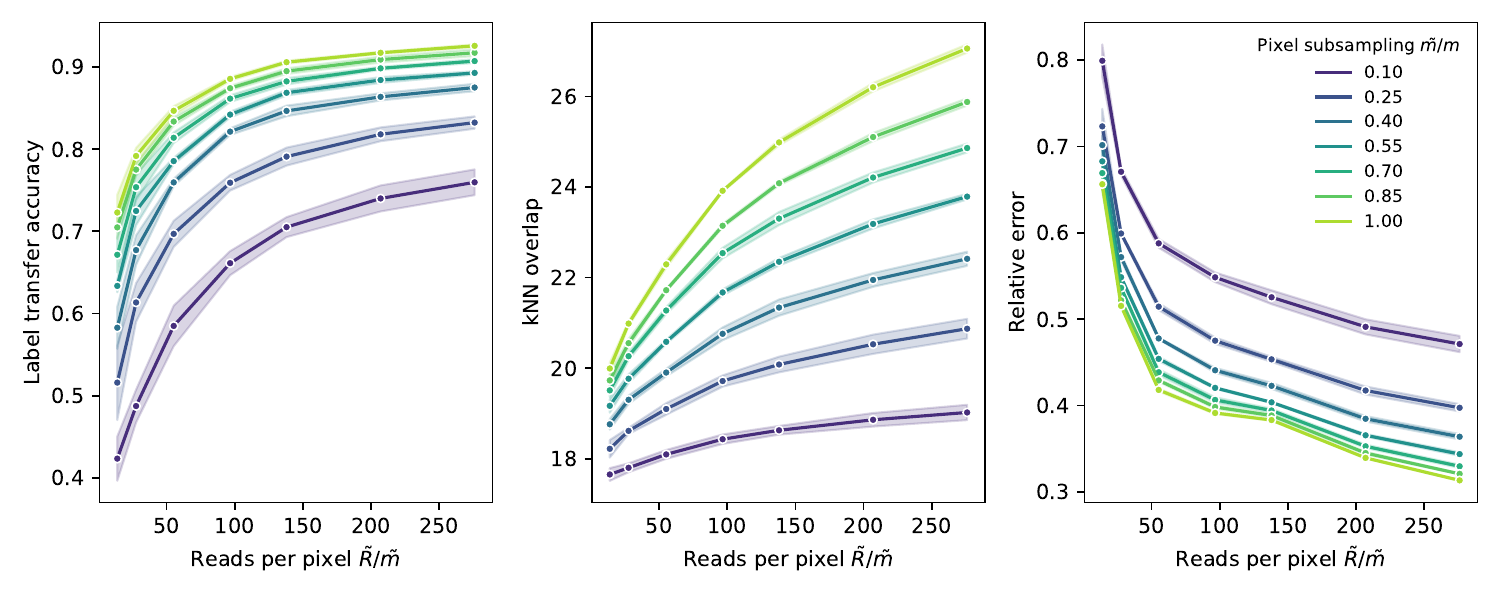}
    \caption{\textbf{Interpolation performance of STARK}. Noisy observations are generated by downsampling reads and subsampling pixels to varying extents. The corresponding images recovered by STARK are evaluated in three metrics. Averaged results over 5 repeats are presented, together with bands of one standard deviation.}
    \label{fig:cell-subsampling}
\end{figure}

\begin{figure}
    \centering
    \includegraphics[width=0.75\textwidth]{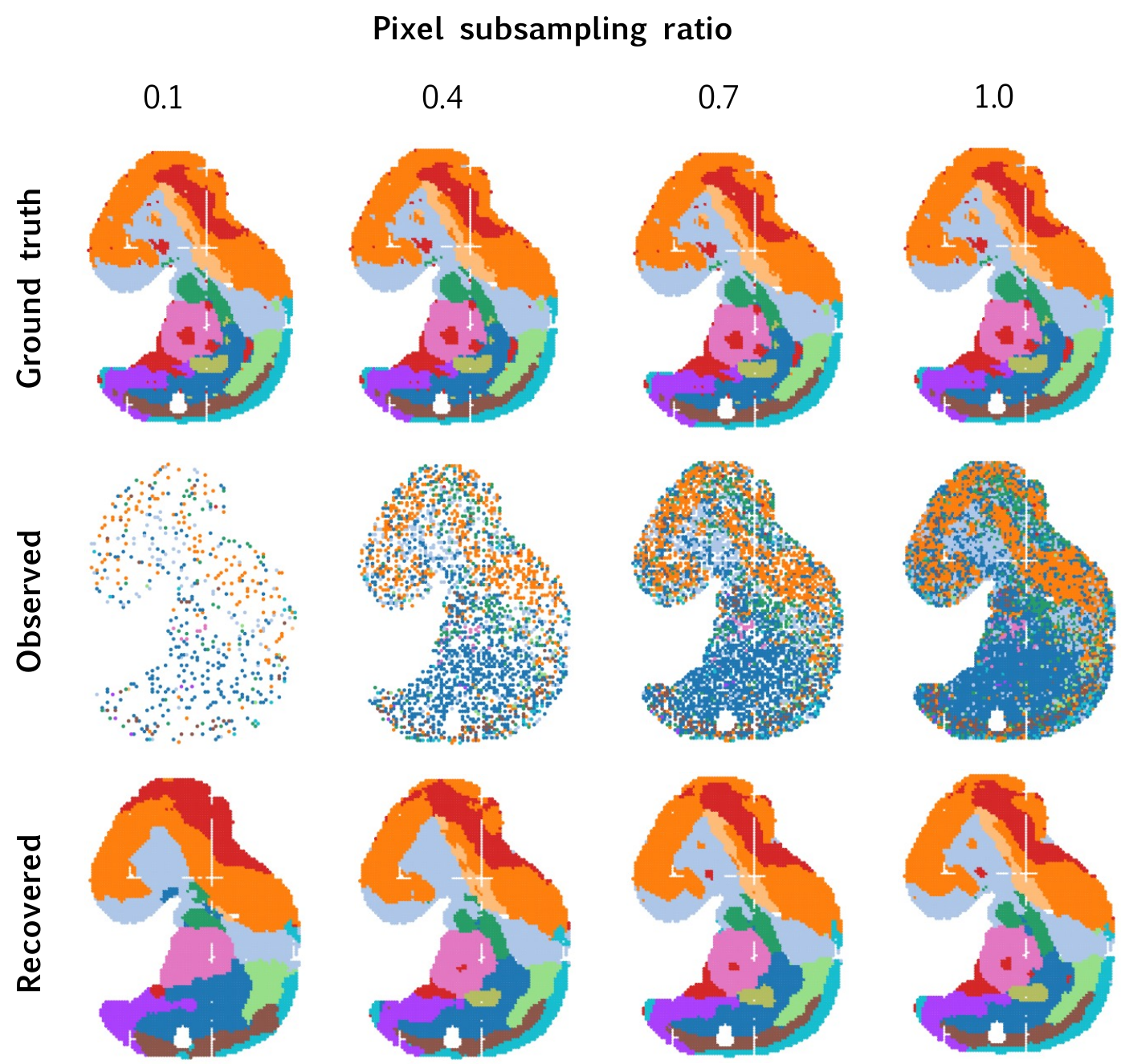}
    \caption{\textbf{Visually assessing interpolation performance}. Colours represent cell type labels; see  \Cref{fig:ground_truth_fig}. Ground truth labels of the pixels are plotted in the first row. Labels transferred from the ground truth onto noisy observations and recovered images are plotted in the remaining rows. Downsampling is fixed to 97 reads per pixel.}
    \label{fig:pixel_subsampling_comparison_images}
\end{figure}

\section{Proofs of key results}\label{sec: proofs}
We present the proofs of \Cref{loc:statement_(asymptotic_rate_with_respect_to_the_reads)} and \Cref{thm: strong_convergence} in this section. The remaining proofs are deferred to \Cref{appsec: deferred-proofs}.

\subsection{Proof of \Cref{loc:statement_(asymptotic_rate_with_respect_to_the_reads)}}

The following concentration bound \hyperlink{prf: binomial_concentration}{is proved} in \Cref{appsec: deferred-proofs}.

\begin{lemma}
\label{loc:bounds_for_rmin_via_binomial_and_poisson_mgfs.statement_(chernoff_bound_for_binomial_or_poisson_random_variables)}
Let $X \sim \mathrm{Bin}(R,p)$ or $X \sim \mathrm{Poi}(Rp)$. Then, for any $\tau \in (0, 1]$,
\begin{equation*}
\mathbb{P}\{ X \leq \tau Rp \} \leq \exp\left( -Rp\left( 1-\tau-\tau \log \tfrac{1}{\tau} \right) \right).
\end{equation*}
Picking $\tau=1 /4$ in particular gives $\mathbb{P}\left\{ X \leq \tfrac{1}{4}Rp \right\}\leq \exp\left( - \tfrac{1}{2}Rp \right)$.
\end{lemma}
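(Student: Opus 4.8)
The plan is to run the standard exponential-moment (Chernoff--Cram\'er) argument for the \emph{lower} tail, handling the binomial and Poisson cases simultaneously by reducing both to a single moment generating function estimate.

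First I would introduce a free parameter $s > 0$, rewrite the event as $\{X \le \tau Rp\} = \{e^{-sX} \ge e^{-s\tau Rp}\}$, and apply Markov's inequality to the nonnegative random variable $e^{-sX}$, obtaining
\begin{equation*}
\mathbb{P}\{X \le \tau Rp\} \le e^{s\tau Rp}\,\mathbb{E}\!\left[ e^{-sX} \right].
\end{equation*}
Next I would control $\mathbb{E}[e^{-sX}]$. For $X \sim \mathrm{Poi}(Rp)$ this is the exact identity $\mathbb{E}[e^{-sX}] = \exp(Rp(e^{-s}-1))$. For $X \sim \mathrm{Bin}(R,p)$ one has $\mathbb{E}[e^{-sX}] = (1 - p + p e^{-s})^R = (1 + p(e^{-s}-1))^R$, and since $1 + x \le e^x$ for all real $x$, this too is at most $\exp(Rp(e^{-s}-1))$. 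Hence in both cases
\begin{equation*}
\mathbb{P}\{X \le \tau Rp\} \le \exp\!\big( Rp\,(s\tau + e^{-s} - 1) \big) = \exp\!\big( -Rp\,(1 - e^{-s} - s\tau) \big).
\end{equation*}

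It then remains to optimize the exponent over $s > 0$. Writing $h(s) := 1 - e^{-s} - s\tau$, we have $h'(s) = e^{-s} - \tau$, so $h$ is maximized at $s^\star = \log(1/\tau)$, with $h'' < 0$ confirming it is a maximum; crucially, $s^\star \ge 0$ precisely because $\tau \in (0,1]$ (the degenerate case $\tau = 1$ just returning the trivial bound $\mathbb{P} \le 1$). Substituting $s = s^\star$ gives $h(s^\star) = 1 - \tau - \tau\log(1/\tau)$, which is exactly the asserted bound, and the displayed special case follows by setting $\tau = 1/4$ and bounding the resulting constant $1 - \tfrac14 - \tfrac14\log 4$ from below.

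There is no deep obstacle here — this is essentially a textbook computation. The only two points deserving care are (i) dominating the binomial moment generating function by the Poisson one through $1 + x \le e^x$, which is what lets both cases be handled in a single stroke, and (ii) checking that the unconstrained maximizer $s^\star = \log(1/\tau)$ of the exponent is feasible (i.e.\ nonnegative), which is exactly the role played by the hypothesis $\tau \in (0,1]$.
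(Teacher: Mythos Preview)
Your argument for the main inequality is correct and entirely self-contained. The paper's proof is the same Chernoff bound but packaged differently: for $X \sim \mathrm{Bin}(R,p)$ it simply cites the left-tail Chernoff inequality from Vershynin (which is exactly your optimized bound after the substitution $s^\star = \log(1/\tau)$), and for $X \sim \mathrm{Poi}(Rp)$ it does \emph{not} compute the moment generating function directly but instead writes the Poisson variable as the distributional limit of $X_N \sim \mathrm{Bin}(N, Rp/N)$ and passes to the limit in the already-established binomial bound. Your route---dominating the binomial MGF by the Poisson one through $1+x \le e^x$ and then optimizing once---handles both distributions in a single stroke and avoids the limiting argument; it is the more elementary and arguably cleaner of the two.

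One caveat worth flagging: the final ``in particular'' claim does not actually follow by ``bounding the resulting constant from below,'' because
\[
1 - \tfrac{1}{4} - \tfrac{1}{4}\log 4 \;=\; \tfrac{3}{4} - \tfrac{1}{2}\log 2 \;\approx\; 0.403 \;<\; \tfrac{1}{2}.
\]
The paper's own proof has the identical gap---it states the $\tau = 1/4$ consequence without checking the arithmetic---so this is a defect of the lemma as written rather than of your argument specifically. It is harmless for the downstream application (any strictly positive constant in the exponent yields the needed $\mathcal{O}(R^{-1})$ bound in the proof of the main theorem), but you should not claim the displayed special case as a corollary of the general inequality.
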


\begin{proof}[\hypertarget{loc:statement_(asymptotic_rate_with_respect_to_the_reads).proof}Proof of \Cref{loc:statement_(asymptotic_rate_with_respect_to_the_reads)}]

Let $(\bar{F}, \bar{W}) \in \mathrm{Stat}(J)$ be any stationary point of $J$ over $\mathcal{H}^d \times \mathcal{C}_\tau$. We formulate an intermediate problem, and compare its solution $\check{F}$ to $\bar{F}$ and $F^\star$.

\paragraph{Step 1: Closeness of $F^\star$ to  $\check{F}$}
Define the auxiliary objective function
\begin{equation*}
\begin{aligned}
\check{J}(F,W):=  \|F - F^\star\|_{L^2_{m}}^2 + \lambda\|F\|_{\mathcal{H}^d}^2 + \omega \frac{1}{2m}\sum_{i,k=1}^m W_{ik}\euc{F(q_{k})-F(q_{i})}^2 + \omega E(W)  ,
\end{aligned}
\end{equation*}
and while fixing $W=W^\star\equiv \argmin_{W \in \mathcal{C}_\tau}J(F^\star, W)$, consider the minimizer
\begin{equation}
\label{eq:auxiliary_problem}
\check{F} = \argmin_{F \in \mathcal{H}^d}\,\check{J}(F, W^\star).
\end{equation}
Following an identical argument to the proof of \Cref{loc:statement_(representer_theorem)} (with $Y_{i}$ replaced by $F^\star(q_{i})$), the solution $\check{F}$ to the auxiliary problem \eqref{eq:auxiliary_problem} can be represented as $\check{F}(\cdot) = \frac{1}{\sqrt{ m }} \sum_{i=1}^m \mathcal{K}(q_{i}, \cdot)\check{\theta}_{i}$,
where the vectors $\check{\theta}_{1},\dots,\check{\theta}_{m} \in \mathbb{R}^d$ are the rows of the matrix
\begin{equation*}
\check{\boldsymbol{\theta}} = \frac{1}{\sqrt{ m }} \left( \mathbf{K}^2 + \lambda \mathbf{K} + \omega \mathbf{K}\overline{\mathbf{L}}_{W^\star}\mathbf{K} \right)^+ \mathbf{K}\mathbf{F}^\star.
\end{equation*}
We also have the basic inequality
\begin{equation*}
\|\check{F}-F^\star\|_{L^2_{m}}^2 \leq \lambda \left\{ \|F^\star\|_{\mathcal{H}^d}^2 - \|\check{F}\|_{\mathcal{H}^d}^2 \right\} + \omega \left\{ \frac{1}{m}\mathrm{tr}({\mathbf{F}^\star}^T \overline{\mathbf{L}}_{W^\star} \mathbf{F}^\star) -  \frac{1}{m}\mathrm{tr}(\check{\mathbf{F}}^T \overline{\mathbf{L}}_{W^\star} \check{\mathbf{F}}) \right\},
\end{equation*}
which is obtained by noting $\check{J}(\check{F}, W^\star) \leq \check{J}(F^\star, W^\star)$, rearranging, and using \eqref{eq: lap-trace}.
We drop the terms negative terms $-\|\check{F}\|_{\mathcal{H}^d}^2$ and $-\frac{1}{m}\mathrm{tr}(\check{\mathbf{F}}^T \overline{\mathbf{L}}_{W^\star} \check{\mathbf{F}})$ from the basic inequality and conclude 
\begin{equation*}
\|\check{F}-F^\star\|_{L^2_{m}}^2 = \mathcal{O}(R^{-1})
\end{equation*}
by the hyperparameter choice $\lambda=\mathcal{O}(R^{-1})$ and $\omega=\mathcal{O}(R^{-1})$.

\paragraph{Step 2: Closeness of $\bar{F}$ to $\check{F}$}
As $(\bar{F},\bar{W})$ is a stationary point of $J$ over $\mathcal{H}^d \times \mathcal{C}_\tau$ and $F \mapsto J(F,\bar W)$ is convex, it holds that $\bar{F} = \argmin_{F \in \mathcal{H}^d}\,J(F, \bar{W})$. Then, by \Cref{loc:statement_(representer_theorem)}, $\bar{F}(\cdot) = \frac{1}{\sqrt{ m }} \sum_{i=1}^m \mathcal{K}(q_{i}, \cdot)\overline{\theta}_{i}$,
where the vectors $\overline{\theta}_{1},\dots,\overline{\theta}_{m} \in \mathbb{R}^d$ are the rows of the matrix
\begin{equation*}
\begin{aligned}
\overline{\boldsymbol{\theta}} = \frac{1}{\sqrt{ m }}(\mathbf{K}^2+\lambda \mathbf{K}+\omega \mathbf{K}\overline{\mathbf{L}}_{\bar{W}}\mathbf{K})^+ \mathbf{K}\mathbf{Y}.
\end{aligned}
\end{equation*}
One can now express
\begin{align*}
\|\bar{F}-\check{F}\|_{L^2_{m}} &= \frac{1}{\sqrt{ m }} \fro{\bar{\mathbf{F}}-\check{\mathbf{F}}} = \frac{1}{\sqrt{ m }} \fro{\sqrt{ m }\mathbf{K}\overline{\boldsymbol{\theta}}- \sqrt{ m }\mathbf{K}\check{\boldsymbol{\theta}}} = \frac{1}{\sqrt{ m }} \fro{\mathbf{K}\bar{\mathbf{M}}\mathbf{K}\mathbf{Y} - \mathbf{K}\mathbf{M}^\star \mathbf{K} \mathbf{F}^\star},
\end{align*}
where the matrices $\bar{\mathbf{M}}:= \left( \mathbf{K}^2+\lambda \mathbf{K}+\omega \mathbf{K}\overline{\mathbf{L}}_{\bar{W}}\mathbf{K} \right)^+$ and $\mathbf{M}^\star:= \left( \mathbf{K}^2+\lambda \mathbf{K}+\omega \mathbf{K}\overline{\mathbf{L}}_{W^\star}\mathbf{K} \right)^+$. By the triangle inequality, split
\begin{equation}
\begin{aligned}
&\frac{1}{\sqrt{ m }} \fro{(\mathbf{K}\bar{\mathbf{M}}\mathbf{K})\mathbf{Y} - (\mathbf{K}\mathbf{M}^\star \mathbf{K}) \mathbf{F}^\star} \\
&\leq \frac{1}{\sqrt{ m }}\fro{(\mathbf{K}\bar{\mathbf{M}}\mathbf{K})(\mathbf{Y}-\mathbf{F}^\star)} + \frac{1}{\sqrt{ m }}\fro{\mathbf{K}(\bar{\mathbf{M}}-\mathbf{M}^\star)\mathbf{K}\mathbf{F}^\star}\\
&\leq \frac{1}{\sqrt{ m }} \op{\mathbf{K}}^2\op{\bar{\mathbf{M}}}\fro{\mathbf{Y}-\mathbf{F}^\star} + \frac{1}{\sqrt{ m }}\op{\mathbf{K}}^2\op{\bar{\mathbf{M}}-\mathbf{M}^\star}\fro{\mathbf{F}^\star}.
\end{aligned}\label{eq:triangle_split}
\end{equation}

We handle the second term via a perturbation analysis. Noting that
\begin{equation*}
\mathrm{rank} (\mathbf{K}^2+\lambda \mathbf{K}+\omega \mathbf{K}\overline{\mathbf{L}}_{\bar{W}}\mathbf{K}) = \mathrm{rank}(\mathbf{K}^2+\lambda \mathbf{K}+\omega \mathbf{K}\overline{\mathbf{L}}_{W^\star}\mathbf{K}) = \mathrm{rank}\,\mathbf{K},
\end{equation*}
a result by \citet[Theorem 4.1]{wedin_PerturbationTheoryPseudoinverses_1973} yields the bound
\begin{equation*}
\begin{aligned}
&\op{\bar{\mathbf{M}}-\mathbf{M}^\star} \leq 2\, \op{\bar{\mathbf{M}}} \op{\mathbf{M}^\star}\, \omega\op{\mathbf{K}}^2 \op{\overline{\mathbf{L}}_{\bar{W}}- \overline{\mathbf{L}}_{W^\star}}.
\end{aligned}
\end{equation*}
Letting $r=\mathrm{rank}(\mathbf{K})$, and denoting by $\Lambda_{r}(\cdot)$ the $r$-th largest eigenvalue of the input matrix,
\begin{equation*}
\begin{aligned}
\op{\bar{\mathbf{M}}} &= \frac{1}{\Lambda_{r}(\mathbf{K}^2 + \lambda \mathbf{K}+\omega \mathbf{K}\overline{\mathbf{L}}_{\bar{W}}\mathbf{K})} \leq \frac{1}{\Lambda_{r}(\mathbf{K}^2)} = \frac{1}{\Lambda_{r}(\mathbf{K})^2},
\end{aligned}
\end{equation*}
and similarly for $\op{\mathbf{M}^\star}$.
On the other hand, by the Gerschgorin circle theorem, we also have that $\op{\overline{\mathbf{L}}_{\bar{W}}}\leq 2+\max_{i\in[m]}|\mathrm{neb}(i)|$, where $|\mathrm{neb}(i)|$ denotes the number of neighbours of the $i$-th vertex in the graph $\mathcal{G}$.
These are just constant quantities as $R \to \infty$, and consequently,
\begin{equation*}
\op{\bar{\mathbf{M}}-\mathbf{M}^\star} = \mathcal{O}(\omega) = \mathcal{O}(R^{-1}).
\end{equation*}
We conclude this step by collecting the above inequalities as
\begin{align*}
    \|\bar{F}-\check{F}\|_{L^2_{m}}  &\leq \mathcal{O}(1) \fro{\mathbf{Y}-\mathbf{F}^\star} + \mathcal{O}(R^{-1}),
\end{align*}
where the $\mathcal{O}(1)$ factor is bounded with respect to $R$.

\paragraph{Step 3: Combining and taking expectation}
By the triangle inequality,
\begin{align*}
    \|\bar{F}-F^\star\|_{L^2_{m}} &\leq \|\check{F}-F^\star\|_{L^2_{m}} + \|\bar{F}-\check{F}\|_{L^2_{m}} \leq \mathcal{O}(R^{-1/2}) + \mathcal{O}(1) \fro{\mathbf{Y}-\mathbf{F}^\star} + \mathcal{O}(R^{-1}).
\end{align*}
Because this bound holds for any $(\bar{F}, \bar{W}) \in \mathrm{Stat}(J)$, we in fact have
\begin{align}
\label{eq: sup-bound-pre-exp}
    \sup \{ \|\bar{F} - F^\star\|_{L^2_{m}} \mid (\bar{F}, \bar{W}) \in \mathrm{Stat}(J) \} \leq \mathcal{O}(R^{-1/2}) + \mathcal{O}(1) \fro{\mathbf{Y}-\mathbf{F}^\star} + \mathcal{O}(R^{-1}),
\end{align}
and we can take expectation on both sides of \eqref{eq: sup-bound-pre-exp}.

What remains is to bound the expected noise $\mathbb{E}\fro{\mathbf{Y}-\mathbf{F}^\star}$ as $R \to \infty$. We start by separating ``low-reads" events $\mathscr{L}_{i}:=\left\{  R_{i} < u_{i}R /4  \right\}$ from ``high-reads" events $\mathscr{L}_{i}^c=\{ R_{i} \geq u_{i}R /4 \}$, for $i=1,\dots,m$, and write
\begin{align*}
\fro{\mathbf{Y}-\mathbf{F}^\star}^2 &=\sum_{i=1}^m \indicator_{\mathscr{L}_{i}}\euc{Y_{i}-F^\star(q_{i})}^2 + \sum_{i=1}^{m}\indicator_{\mathscr{L}_{i}^c}\euc{Y_{i}-F^\star(q_{i})}^2 \\
&\leq 2\sum_{i=1}^m \indicator_{\mathscr{L}_{i}} + \sum_{i=1}^{m}\indicator_{\mathscr{L}_{i}^c}\euc{Y_{i}-F^\star(q_{i})}^2.
\end{align*}
Conditional on the reads $\mathbf{R}=(R_{1},\dots R_{m})$, we have
\begin{align*}
\mathbb{E} \left[ \fro{\mathbf{Y}-\mathbf{F}^\star}^2 \mid \mathbf{R} \right]  &\leq 2\sum_{i=1}^m \indicator_{\mathscr{L}_{i}} + \sum_{i=1}^{m}\indicator_{\mathscr{L}_{i}^c}\mathbb{E}\left[ \euc{Y_{i}-F^\star(q_{i})}^2 \mid \mathbf{R} \right]  \\
&\leq 2\sum_{i=1}^m \indicator_{\mathscr{L}_{i}} + \sum_{i=1}^{m}\indicator_{\mathscr{L}_{i}^c}\, \frac{1}{R_{i}} \leq 2\sum_{i=1}^m \indicator_{\mathscr{L}_{i}} +  \frac{4m}{u_{\min}R}.
\end{align*}
Taking an unconditional expectation and using \Cref{loc:bounds_for_rmin_via_binomial_and_poisson_mgfs.statement_(chernoff_bound_for_binomial_or_poisson_random_variables)} gives
\begin{equation*}
\mathbb{E} \fro{\mathbf{Y}-\mathbf{F}^\star}^2 \leq 2m e^{ -u_{\min}R/2 } + \frac{4m}{u_{\min}R} = \mathcal{O}(R^{-1}).
\end{equation*}

To conclude the proof, note that the last bound combined with Jensen's inequality results in $\mathbb{E}\fro{\mathbf{Y}-\mathbf{F}^\star} = \mathcal{O}(R^{-1/2})$, which, together with~\eqref{eq: sup-bound-pre-exp}, yields the desired inequality.
\end{proof}

\subsection{Proof of \Cref{thm: strong_convergence}}
\label{prfsec: strong_convergence}

The convergence of the sequence $(F^t, W^t)_{t \in \mathbb{N}}$ generated by \eqref{eq: alternating_updates} is established through the corresponding finite-dimensional iterates $(\boldsymbol{\theta}^t, W^t)_{t \in \mathbb{N}}$ defined as
\begin{equation}
\label{eq: finite-dim-iterates}
\begin{aligned}
    \boldsymbol{\theta}^{t+1} = \argmin_{\boldsymbol{\theta}\in \mathbf{K}\mathbb{R}^{m \times d}}\,\mathbf{J}(\boldsymbol{\theta}, W^t), & \qquad
    W^{t+1} = \argmin_{W \in \mathcal{C}_{\tau}}\,\mathbf{J}(\boldsymbol{\theta}^{t+1}, W),
\end{aligned}\tag{U'}
\end{equation}
for the function $\mathbf{J}$ from \eqref{eq: bold_J}.
By \Cref{loc:statement_(representer_theorem)}, ${F}^t = F_{\boldsymbol{\theta}^t}$ where $F_{\boldsymbol{\theta}}(\cdot) := \tfrac{1}{\sqrt{ m }}\textstyle\sum_{1}^m \mathcal{K}(q_{i},\cdot)\theta_{i}$
for any $\boldsymbol{\theta}\in\mathbb{R}^{m\times d}$ with rows $\theta_{i}\in \mathbb{R}^d$.
The following lemma establishes boundedness of the iterates.
\begin{lemma}\label{lem: bounded_iterates}
The sequence $(\boldsymbol{\theta}^t, W^t )_{t \in \mathbb N}$ generated by the updates \eqref{eq: finite-dim-iterates} is bounded. Consequently, the sequence $(F^t)_{t \in \mathbb{N}}$ is bounded in $(\mathcal{H}^d, \|\cdot\|_{\mathcal{H}^d})$.
\end{lemma}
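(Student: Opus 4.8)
The plan is to exploit the monotone-descent property of the alternating minimization together with the coercivity the ridge term $\lambda\,\mathrm{tr}(\boldsymbol{\theta}^{T}\mathbf{K}\boldsymbol{\theta})$ contributes, uniformly over $W$. First I would record that each half-step is an exact minimization, so $\mathbf{J}$ is non-increasing along the iterates: since $\boldsymbol{\theta}^{t+1}$ minimizes $\mathbf{J}(\cdot,W^{t})$ over $\mathbf{K}\mathbb{R}^{m\times d}$ while $\boldsymbol{\theta}^{t}$ belongs to that set, and $W^{t+1}$ minimizes $\mathbf{J}(\boldsymbol{\theta}^{t+1},\cdot)$ over $\mathcal{C}_{\tau}$ while $W^{t}\in\mathcal{C}_{\tau}$, chaining the inequalities gives $\mathbf{J}(\boldsymbol{\theta}^{t},W^{t}) \le \mathbf{J}(\boldsymbol{\theta}^{1},W^{0}) \le \mathbf{J}(\mathbf{0},W^{0}) = \tfrac1m\fro{\mathbf{Y}}^{2}+\omega E(W^{0}) =: C_{0} < \infty$ for every $t\ge 1$.

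Next I would lower-bound $\mathbf{J}$ term by term. The data-fit term is nonnegative; the Laplacian term $\omega\,\mathrm{tr}(\boldsymbol{\theta}^{T}\mathbf{K}\overline{\mathbf{L}}_{W}\mathbf{K}\boldsymbol{\theta})$ is nonnegative because $\overline{\mathbf{L}}_{W}$, hence $\mathbf{K}\overline{\mathbf{L}}_{W}\mathbf{K}$, is positive semidefinite; and the entropy $E(W)$ is bounded below by a constant $-C_{E}$ on $\mathcal{C}_{\tau}$, since each $W_{ik}\in[0,1]$, the scalar map $x\mapsto x(\log x-1)$ is bounded on $[0,1]$, and the pixel separations $\euc{q_{i}-q_{k}}$ are bounded. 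Combined with the previous step this yields $\lambda\,\mathrm{tr}\big((\boldsymbol{\theta}^{t})^{T}\mathbf{K}\boldsymbol{\theta}^{t}\big) \le C_{0}+\omega C_{E} =: C_{1}$ for all $t$.

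To convert this into a bound on $\fro{\boldsymbol{\theta}^{t}}$, the genuinely delicate point is that $\mathbf{K}$ need not be invertible, so I would not claim $\mathbf{K}\succ 0$ but instead use that $\boldsymbol{\theta}^{t}\in\mathbf{K}\mathbb{R}^{m\times d}$, i.e. every column of $\boldsymbol{\theta}^{t}$ lies in the column space of the symmetric matrix $\mathbf{K}$; there $\mathbf{K}$ acts with smallest eigenvalue $\Lambda_{r}(\mathbf{K})>0$ ($r=\mathrm{rank}\,\mathbf{K}$), so $\mathrm{tr}\big((\boldsymbol{\theta}^{t})^{T}\mathbf{K}\boldsymbol{\theta}^{t}\big) = \sum_{j}(\boldsymbol{\theta}^{t}_{\cdot j})^{T}\mathbf{K}\boldsymbol{\theta}^{t}_{\cdot j} \ge \Lambda_{r}(\mathbf{K})\fro{\boldsymbol{\theta}^{t}}^{2}$, giving $\fro{\boldsymbol{\theta}^{t}}^{2}\le C_{1}/(\lambda\Lambda_{r}(\mathbf{K}))$. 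Boundedness of $(W^{t})$ is immediate since $\mathcal{C}_{\tau}$ is a bounded (indeed compact) subset of $\mathbb{R}^{\mathcal{E}_{\tau}}$. Finally, for the stated consequence I would invoke the reproducing property to write $\|F^{t}\|_{\mathcal{H}^{d}}^{2} = \tfrac1m\sum_{i,k}\mathcal{K}(q_{i},q_{k})\langle\theta_{i}^{t},\theta_{k}^{t}\rangle = \mathrm{tr}\big((\boldsymbol{\theta}^{t})^{T}\mathbf{K}\boldsymbol{\theta}^{t}\big) \le C_{1}/\lambda$, uniformly in $t$. I do not anticipate a substantial obstacle here; the only care needed is in handling a possibly singular $\mathbf{K}$ by restricting to its range, and in checking that $E$ is bounded below on $\mathcal{C}_{\tau}$.
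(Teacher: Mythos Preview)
Your proof is correct and takes a genuinely different route from the paper's. The paper argues directly from the closed-form expression \eqref{eq:theta_update}: it bounds $\fro{\boldsymbol{\theta}^{t+1}}$ by $\tfrac{1}{\sqrt{m}}\op{\mathbf{K}}\fro{\mathbf{Y}}\op{(\mathbf{K}^{2}+\lambda\mathbf{K}+\omega\mathbf{K}\overline{\mathbf{L}}_{W^{t}}\mathbf{K})^{+}}$ and then drops the positive-semidefinite Laplacian piece to replace the pseudoinverse by $\op{(\mathbf{K}^{2}+\lambda\mathbf{K})^{+}}$, which is independent of $t$. Your argument instead exploits monotone descent of $\mathbf{J}$ along the iterates together with coercivity from the ridge term, never touching the explicit formula. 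What your approach buys is generality: it would survive if the $\boldsymbol{\theta}$-subproblem had no closed form, and it yields the bound $\|F^{t}\|_{\mathcal{H}^{d}}^{2}=\mathrm{tr}((\boldsymbol{\theta}^{t})^{T}\mathbf{K}\boldsymbol{\theta}^{t})\le C_{1}/\lambda$ directly, without passing through $\fro{\boldsymbol{\theta}^{t}}$. What the paper's approach buys is a cleaner constant that does not depend on the initial iterate $W^{0}$ or on an entropy lower bound, and it avoids needing to check that $E$ is bounded below on $\mathcal{C}_{\tau}$. Both handle the possible singularity of $\mathbf{K}$ in the same spirit, restricting attention to its range.
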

The \hyperlink{prf: bounded_iterates}{proof} is found in \Cref{appsec: deferred-proofs}. We will henceforth denote $\|(\boldsymbol{\theta}, W)\|^2 = \fro{\boldsymbol{\theta}}^2 + \fro{W}^2$.
The next lemma addresses limit points of $(\boldsymbol{\theta}^t, W^t)_{t \in \mathbb{N}}$; see \hyperlink{prf: limit_pts_critical}{the proof} in \Cref{appsec: deferred-proofs}.

\begin{lemma}\label{lem: limit_pts_critical}
    If $(\bar{\boldsymbol{\theta}}, \bar W)$ is a limit point of the sequence $(\boldsymbol{\theta}^{t},W^{t})_{t \in \mathbb N}$ generated by \eqref{eq: finite-dim-iterates}, then
\begin{equation}
\begin{aligned}\label{eq: fin-dim-stat}
\bar{\boldsymbol{\theta}} & \in \argmin_{\boldsymbol{\theta} \in \mathbb R^{m \times d}} \mathbf{J}(\boldsymbol{\theta}, \bar W),\quad & \bar W & \in \argmin_{W \in \mathcal C_\tau} \mathbf{J}(\bar{\boldsymbol{\theta}}, W). 
\end{aligned}
\end{equation}

\end{lemma}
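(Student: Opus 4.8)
The statement is a textbook consequence of the two-block alternating-minimization structure, and everything nontrivial is already packaged in \Cref{loc:remark_(representer_theorem)}, \Cref{prop: W_min_solution} and \Cref{lem: bounded_iterates}. The plan is to combine monotone decrease of $\mathbf{J}$ along the iterates with \emph{asymptotic regularity} of the iterates, and then pass to the limit along the subsequence using continuity of $\mathbf{J}$.

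First I would record that $(\mathbf{J}(\boldsymbol{\theta}^t, W^t))_t$ is nonincreasing -- each half-step of \eqref{eq: finite-dim-iterates} is an exact minimization of $\mathbf{J}$ over one block -- and that $\mathbf{J}$ is bounded below: indeed $\mathbf{J}(\boldsymbol{\theta},W) \ge \omega E(W) \ge \omega \min_{\mathcal{C}_\tau} E > -\infty$, since $E$ is continuous (with the convention $0(\log 0-1)=0$) on the compact set $\mathcal{C}_\tau$. Hence $\mathbf{J}(\boldsymbol{\theta}^t,W^t)$ decreases to a finite limit $\mathbf{J}_\infty$, so the total decrease $\sum_t[\mathbf{J}(\boldsymbol{\theta}^t,W^t)-\mathbf{J}(\boldsymbol{\theta}^{t+1},W^{t+1})]$ is finite.

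The crux is the resulting asymptotic regularity. The iterates $\boldsymbol{\theta}^t$ all lie in the closed subspace $\mathbf{K}\mathbb{R}^{m\times d}$ by construction, on which $\boldsymbol{\theta}\mapsto\mathbf{J}(\boldsymbol{\theta},W^t)$ is $\mu_1$-strongly convex with $\mu_1=\Lambda_r(\mathbf{K})^2+\lambda\Lambda_r(\mathbf{K})>0$ (\Cref{loc:remark_(representer_theorem)}), while $W\mapsto\mathbf{J}(\boldsymbol{\theta}^{t+1},W)$ is $(\omega/2m)$-strongly convex on $\mathcal{C}_\tau$ (\Cref{prop: W_min_solution}). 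Applying the elementary bound $g(x)-g(x^\star)\ge\tfrac{\mu}{2}\|x-x^\star\|^2$, valid for a $\mu$-strongly convex $g$ minimized over a convex set at $x^\star$, to each half-step yields
\[
\mathbf{J}(\boldsymbol{\theta}^t,W^t)-\mathbf{J}(\boldsymbol{\theta}^{t+1},W^{t+1})\ \ge\ \tfrac{\mu_1}{2}\fro{\boldsymbol{\theta}^{t+1}-\boldsymbol{\theta}^t}^2+\tfrac{\omega}{4m}\fro{W^{t+1}-W^t}^2,
\]
and summing over $t$ together with the finiteness of the total decrease forces $\fro{\boldsymbol{\theta}^{t+1}-\boldsymbol{\theta}^t}\to0$ and $\fro{W^{t+1}-W^t}\to0$.

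Finally I would pass to the limit. Let $(\boldsymbol{\theta}^{t_j},W^{t_j})\to(\bar{\boldsymbol{\theta}},\bar W)$ along a subsequence (such subsequences exist by \Cref{lem: bounded_iterates}); since $\mathbf{K}\mathbb{R}^{m\times d}$ and $\mathcal{C}_\tau$ are closed, $(\bar{\boldsymbol{\theta}},\bar W)\in\mathbf{K}\mathbb{R}^{m\times d}\times\mathcal{C}_\tau$, and by asymptotic regularity also $(\boldsymbol{\theta}^{t_j+1},W^{t_j})\to(\bar{\boldsymbol{\theta}},\bar W)$ and $(\boldsymbol{\theta}^{t_j+1},W^{t_j+1})\to(\bar{\boldsymbol{\theta}},\bar W)$. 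The defining inequality $\mathbf{J}(\boldsymbol{\theta}^{t_j+1},W^{t_j})\le\mathbf{J}(\boldsymbol{\theta},W^{t_j})$, valid for all $\boldsymbol{\theta}\in\mathbf{K}\mathbb{R}^{m\times d}$, passes to $\mathbf{J}(\bar{\boldsymbol{\theta}},\bar W)\le\mathbf{J}(\boldsymbol{\theta},\bar W)$ in the limit by continuity of $\mathbf{J}$; and since the value of $\boldsymbol{\theta}\mapsto\mathbf{J}(\boldsymbol{\theta},\bar W)$ is unchanged by adding to $\boldsymbol{\theta}$ any matrix whose columns lie in $\ker\mathbf{K}$, minimality over $\mathbf{K}\mathbb{R}^{m\times d}$ is minimality over all of $\mathbb{R}^{m\times d}$ (cf.~\Cref{loc:remark_(representer_theorem)}), which is the first relation in \eqref{eq: fin-dim-stat}. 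In the same way, $\mathbf{J}(\boldsymbol{\theta}^{t_j+1},W^{t_j+1})\le\mathbf{J}(\boldsymbol{\theta}^{t_j+1},W)$ for all $W\in\mathcal{C}_\tau$ passes to $\mathbf{J}(\bar{\boldsymbol{\theta}},\bar W)\le\mathbf{J}(\bar{\boldsymbol{\theta}},W)$, the second relation. The one step that deserves genuine care -- and the closest thing to an obstacle -- is verifying that $\mathbf{J}$ is continuous across $\{W_{ik}=0\}$ so that the limiting inequalities are legitimate; this holds because $s\mapsto s(\log s-1)$, and hence $E$, extends continuously to $s=0$, while the rest of $\mathbf{J}$ is polynomial in $(\boldsymbol{\theta},W)$.
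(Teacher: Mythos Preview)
Your proposal is correct and follows essentially the same route as the paper's proof: establish sufficient decrease from block-wise strong convexity (with the very same moduli $\mu_1=\Lambda_r(\mathbf{K})^2+\lambda\Lambda_r(\mathbf{K})$ and $\omega/2m$), sum to obtain asymptotic regularity $\|(\boldsymbol{\theta}^{t+1},W^{t+1})-(\boldsymbol{\theta}^t,W^t)\|\to 0$, and then pass the defining minimization inequalities to the limit along a convergent subsequence by continuity of $\mathbf{J}$. Your write-up is in fact slightly more explicit than the paper's in two places---the lower bound on $\mathbf{J}$ via compactness of $\mathcal{C}_\tau$, and the continuity of $E$ across $\{W_{ik}=0\}$---but these are expository, not substantive, differences.
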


\Cref{lem: bounded_iterates} guarantees the existence of limit points of $(\boldsymbol{\theta}^{t}, W^{t})_{t \in \mathbb N}$ and \Cref{lem: limit_pts_critical} ensures that all limit points necessarily satisfy \eqref{eq: fin-dim-stat}. It remains to be shown that this sequence converges to a unique limit point. We need yet another lemma before addressing such convergence.

\begin{lemma}
\label{res: iterates-bdd-away-simplex}
For all $t \in \mathbb{N}$ and $(i,k)\in \mathcal{E}_{\tau}$, we have $W^t_{ik} \geq \nu > 0$, where the constant $\nu$ does not depend on $t$ or $(i,k)$.    
\end{lemma}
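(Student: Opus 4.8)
The plan is to unwind the row-normalization in \Cref{prop: W_min_solution} and control the pre-normalized weights $\tilde W^t_{ik}$ from above and below uniformly in $t$, after which dividing by the (bounded) row sum gives the claim. From \Cref{prop: W_min_solution}, at each iteration $W^{t+1} = \mathrm{RowNormalize}(\tilde W^{t+1})$ with
\[
\tilde W^{t+1}_{ik} = \exp\!\left(-\tfrac{1}{s_1^2}\euc{F^{t+1}(q_k)-F^{t+1}(q_i)}^2\right)\exp\!\left(-\tfrac{1}{s_2^2}\euc{q_k-q_i}^2\right)
\]
for $(i,k)\in\mathcal{E}_\tau$ (and the same formula, with $s_1$-factor absent, governs the initialization $W^0$). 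So $W^{t}_{ik} = \tilde W^{t}_{ik}\big/\sum_{k':(i,k')\in\mathcal{E}_\tau}\tilde W^{t}_{ik'}$, and I need a uniform-in-$t$ lower bound on the numerator and a uniform-in-$t$ upper bound on the denominator. The spatial factor $\exp(-\euc{q_k-q_i}^2/s_2^2)$ is already bounded in $(0,1]$ and independent of $t$, so everything reduces to controlling the gene-expression factor $\exp(-\euc{F^{t+1}(q_k)-F^{t+1}(q_i)}^2/s_1^2)$.

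First I would invoke \Cref{lem: bounded_iterates}, which gives a constant $\rho>0$ with $\|F^t\|_{\mathcal{H}^d}\le\rho$ for all $t$. Combined with the reproducing property $\euc{F^t(q)} \le C_{\mathcal{K}}\|F^t\|_{\mathcal{H}^d}$, where $C_{\mathcal{K}} = \sup_{q\in\mathcal{Q}}\mathcal{K}(q,q)^{1/2}$ (used already in the remark after \Cref{loc:statement_(error_in_the_rkhs_norm)}), this yields $\euc{F^{t+1}(q_k)-F^{t+1}(q_i)}^2 \le (2C_{\mathcal{K}}\rho)^2 =: D$ uniformly in $t$ and in $(i,k)$. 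Hence $\tilde W^{t+1}_{ik} \ge e^{-D/s_1^2}\,e^{-\euc{q_k-q_i}^2/s_2^2} \ge e^{-D/s_1^2}\,e^{-\tau^2/s_2^2} =: a > 0$ for every edge $(i,k)\in\mathcal{E}_\tau$, where I used $\euc{q_k-q_i}\le\tau$ on $\mathcal{E}_\tau$. For the initialization $W^0$, the same bound holds with $D$ replaced by $0$, so $\tilde W^0_{ik}\ge e^{-\tau^2/s_2^2}\ge a$ as well. For the denominator, each summand satisfies $\tilde W^{t}_{ik'}\le 1$ (both exponential factors lie in $(0,1]$), so $\sum_{k':(i,k')\in\mathcal{E}_\tau}\tilde W^{t}_{ik'} \le |\mathrm{neb}(i)| \le N_{\max} := \max_{i\in[m]}|\mathrm{neb}(i)|$, a finite constant depending only on the fixed graph $\mathcal{G}$.

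Putting these together, for all $t\in\mathbb{N}$ and all $(i,k)\in\mathcal{E}_\tau$,
\[
W^t_{ik} = \frac{\tilde W^t_{ik}}{\sum_{k':(i,k')\in\mathcal{E}_\tau}\tilde W^t_{ik'}} \ge \frac{a}{N_{\max}} =: \nu > 0,
\]
which is the asserted bound, with $\nu$ depending only on $s_1,s_2,\tau,C_{\mathcal{K}},\rho$ and the graph, hence independent of $t$ and $(i,k)$. The only genuinely non-routine ingredient is the uniform-in-$t$ boundedness of $(F^t)_{t\in\mathbb{N}}$ in the RKHS norm, and that has already been supplied by \Cref{lem: bounded_iterates}; given that, the rest is the elementary squeeze just described. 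One small point to make explicit in the write-up: the closed form of \Cref{prop: W_min_solution} requires $\omega>0$, which is the standing assumption here, so the characterization $W^{t+1}=\mathrm{RowNormalize}(\tilde W^{t+1})$ is legitimate for every $t\ge 0$.
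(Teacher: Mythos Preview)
Your proof is correct and follows essentially the same approach as the paper: lower-bound the pre-normalized weights $\tilde W^t_{ik}$ via the uniform boundedness of the iterates from \Cref{lem: bounded_iterates}, upper-bound the row sums by a constant, and divide. The only cosmetic differences are that the paper bounds $\euc{\mathbf{F}^t_{k\cdot}-\mathbf{F}^t_{i\cdot}}$ through the matrix representation $\mathbf{F}^t=\sqrt{m}\mathbf{K}\boldsymbol{\theta}^t$ and the Frobenius-norm bound on $\boldsymbol{\theta}^t$, rather than via the RKHS reproducing property as you do, and bounds the row sum crudely by $m$ rather than by $N_{\max}$; both routes are equally valid.
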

See \hyperlink{prf: iterates-bdd-away-simplex}{the proof} in \Cref{appsec: deferred-proofs}.

In order to prove \Cref{thm: strong_convergence}, we invoke the Kurdyka-Łojasiewicz property.

\begin{definition}(Kurdyka-Łojasiewicz (KL) property) \label{def: KL-property}
A function $f:\mathbb R^n \to \mathbb R \cup \{+\infty\}$ is said to possess the Kurdyka-Łojasiewicz property at $\bar x \in \mathrm{dom} (\partial f)$ if
\begin{align*}
\frac{|f - f(\bar x)|^\mu}{\mathrm{dist}(0, \partial f)}
\end{align*}
is bounded in a neighbourhood around $\bar x$ for some $\mu \in [0,1)$.
\end{definition}
 Define the extended-valued function $\Jinf: \mathbb{R}^{m \times d} \times \mathbb{R}^{\mathcal{E}_\tau} \to \mathbb{R}\cup \{+\infty\}$, which coincides with $\mathbf{J}$ if $W \in \mathcal{C}_\tau$ and outputs $+\infty$ otherwise. 
Using the fact that $\mathbf{J}$ is real-analytic in an open neighbourhood of any point $(\bar{\boldsymbol \theta},\bar{W}) \in \mathbb{R}^{m \times d} \times \mathbb{R}_{++}^{\mathcal{E}_\tau}$, we immediately get the following lemma; see \citet{bolte2007lojasiewicz, xu_BlockCoordinateDescent_2013}.

\begin{lemma}\label{lem:KL-property-satisfied}
The function $\Jinf$ satisfies the KL-property at any $(\bar{\boldsymbol \theta} , \bar{W})\in \mathbb{R}^{m \times d} \times (\mathcal{C}_\tau \cap\mathbb{R}_{++}^{\mathcal{E}_\tau})$.
\end{lemma}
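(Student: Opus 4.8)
The plan is to reduce the claim to the classical Łojasiewicz gradient inequality for real-analytic functions, after discarding the positivity constraints near the point in question. The first step is to observe that $\mathbf{J}$ is real-analytic on the open set $\mathbb{R}^{m\times d}\times\mathbb{R}_{++}^{\mathcal{E}_\tau}$: in \eqref{eq: bold_J}, the data-fit, ridge, and graph-Laplacian terms are polynomials in $(\boldsymbol{\theta},W)$ (recall that $\overline{\mathbf{L}}_W$ depends affinely on $W$), and the only non-polynomial contribution is the entropy $E(W)$, whose summands $s_1^2\,W_{ik}(\log W_{ik}-1)$ are real-analytic on $\{W_{ik}>0\}$. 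Hence $\mathbf{J}$ is real-analytic in a neighbourhood of any $(\bar{\boldsymbol{\theta}},\bar W)$ with $\bar W\in\mathbb{R}_{++}^{\mathcal{E}_\tau}$.

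Next I would localize $\Jinf$ around such a point $(\bar{\boldsymbol{\theta}},\bar W)$ with $\bar W\in\mathcal{C}_\tau\cap\mathbb{R}_{++}^{\mathcal{E}_\tau}$. Because $\bar W$ has strictly positive entries, on a sufficiently small ball $U$ around $(\bar{\boldsymbol{\theta}},\bar W)$ the inequality constraints defining $\mathcal{C}_\tau$ are inactive, so $\mathrm{dom}(\Jinf)\cap U=A\cap U$, where $A:=\mathbb{R}^{m\times d}\times\{W\in\mathbb{R}^{\mathcal{E}_\tau}:\sum_{k:(i,k)\in\mathcal{E}_\tau}W_{ik}=1\ \forall i\in[m]\}$ is an affine subspace passing through $(\bar{\boldsymbol{\theta}},\bar W)$, and $\Jinf=\mathbf{J}+\iota_A$ on $U$. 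Therefore $\partial\Jinf(\boldsymbol{\theta},W)=\nabla\mathbf{J}(\boldsymbol{\theta},W)+N_A$ for $(\boldsymbol{\theta},W)\in U$, with $N_A$ the constant orthogonal complement of the tangent space $T_A$; in particular $\partial\Jinf(\bar{\boldsymbol{\theta}},\bar W)\neq\emptyset$ so $(\bar{\boldsymbol{\theta}},\bar W)\in\mathrm{dom}(\partial\Jinf)$, and $\mathrm{dist}(0,\partial\Jinf(\boldsymbol{\theta},W))=\|\mathrm{proj}_{T_A}\nabla\mathbf{J}(\boldsymbol{\theta},W)\|$. Fixing an isometric affine parametrization $\phi:\mathbb{R}^{p}\to A$ with $\phi(\bar\xi)=(\bar{\boldsymbol{\theta}},\bar W)$, the composite $g:=\mathbf{J}\circ\phi$ is real-analytic near $\bar\xi$ and satisfies $\|\nabla g(\xi)\|=\|\mathrm{proj}_{T_A}\nabla\mathbf{J}(\phi(\xi))\|$.

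Finally I would invoke the Łojasiewicz gradient inequality for the analytic function $g$: there exist $\mu\in[\tfrac12,1)$, $C>0$, and a neighbourhood of $\bar\xi$ on which $|g(\xi)-g(\bar\xi)|^\mu\le C\|\nabla g(\xi)\|$. Pulling this back through $\phi$ and using the identifications above gives $|\Jinf(\boldsymbol{\theta},W)-\Jinf(\bar{\boldsymbol{\theta}},\bar W)|^\mu\le C\,\mathrm{dist}(0,\partial\Jinf(\boldsymbol{\theta},W))$ for $(\boldsymbol{\theta},W)\in A$ near $(\bar{\boldsymbol{\theta}},\bar W)$, the estimate being trivial off $A$ since there the ratio in \Cref{def: KL-property} equals $0$; this is precisely the KL property. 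Alternatively, one can bypass the parametrization by applying the subanalytic version of the Łojasiewicz inequality from \citet{bolte2007lojasiewicz} directly to $\mathbf{J}+\iota_A$, since $\iota_A$ is subanalytic and $\mathbf{J}$ is analytic near $\bar W$. I expect the only delicate point to be the computation of $\partial\Jinf$ at a point where the positivity constraints are slack while the row-sum equalities are active; beyond that, this is the standard analytic Łojasiewicz machinery, exactly as in \citet{bolte2007lojasiewicz, xu_BlockCoordinateDescent_2013}, and I anticipate no genuine obstacle.
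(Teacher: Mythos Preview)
Your proposal is correct and follows the same approach as the paper: observe that $\mathbf{J}$ is real-analytic on $\mathbb{R}^{m\times d}\times\mathbb{R}_{++}^{\mathcal{E}_\tau}$, note that near a point with $\bar W\in\mathcal{C}_\tau\cap\mathbb{R}_{++}^{\mathcal{E}_\tau}$ the positivity constraints are slack so $\Jinf$ is an analytic function plus the indicator of an affine set, and invoke the Łojasiewicz machinery of \cite{bolte2007lojasiewicz, xu_BlockCoordinateDescent_2013}. The paper's own argument is in fact just the one-line citation preceding the lemma, so your write-up is a more detailed version of the same reasoning; your ``alternative'' via the subanalytic result of \cite{bolte2007lojasiewicz} applied to $\mathbf{J}+\iota_A$ is precisely what the paper is appealing to.
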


\begin{proof}[\hypertarget{prf: proof_of_strong_convergence}Proof of \Cref{thm: strong_convergence}]

Let $(\boldsymbol{\theta}^t, W^t)_{t \in \mathbb N}$ be the sequence generated by \eqref{eq: finite-dim-iterates}. Recall that, by \Cref{lem: bounded_iterates}, $(\boldsymbol{\theta}^t, W^t)_{t \in \mathbb N}$ is bounded and therefore possess at least one limit point. Let  $(\bar{\boldsymbol{\theta}},\bar W)$ denote one such limit point. By \Cref{lem: limit_pts_critical}, $(\bar{\boldsymbol{\theta}},\bar W)$ satisfies \eqref{eq: fin-dim-stat} so that it is a stationary point of $\Jinf$ and by \Cref{res: iterates-bdd-away-simplex}, $\bar W \in \mathcal C_\tau \cap \mathbb R^{\mathcal E_\tau}_{++}$. By \Cref{lem:KL-property-satisfied}, the KL-property is satisfied at $(\bar{\boldsymbol{\theta}},\bar W)$. In addition, as we demonstrate now, the iterates $(\boldsymbol{\theta}^t, W^t)_{t \in \mathbb N}$ satisfy the three conditions necessary to apply \cite[Theorem 2.9]{attouch2013convergence}. These are as follows:
\begin{enumerate}[leftmargin=12pt]
    \item \label{item: sufficient_decrease} \textit{Sufficient decrease}: there exists a fixed positive constant $a$ such that for every $t \in \mathbb N$
    \begin{align*}
    \Jinf(\boldsymbol{\theta}^{t},W^{t}) - \Jinf(\boldsymbol{\theta}^{t+1},W^{t+1}) \geq a \|(\boldsymbol{\theta}^{t+1}, W^{t+1}) - (\boldsymbol{\theta}^t, W^t)\|^2.
    \end{align*}

    \item \label{item: relative_error} \textit{Relative error condition}: there exists a fixed positive constant $b$ such that for every $t \in \mathbb N$, there exists $Z^{t+1} \in \partial \Jinf(\boldsymbol{\theta}^{t+1}, W^{t+1})$ such that
    \begin{equation}\label{eq: subdifferential_bound}
    \|Z^{t+1}\| \leq b \| (\boldsymbol{\theta}^{t+1}, W^{t+1}) - (\boldsymbol{\theta}^t, W^t) \|.
    \end{equation}
    
    \item \label{item: continuity_condition} \textit{Continuity}. There exists a subsequence $(\boldsymbol{\theta}^{t_j}, W^{t_j}) \to (\bar{\boldsymbol{\theta}},\bar W)$ with $\Jinf(\boldsymbol{\theta}^{t_j}, W^{t_j}) \to \Jinf(\bar{\boldsymbol{\theta}}, \bar W)$.
\end{enumerate}

\Cref{item: sufficient_decrease} holds by \eqref{eq: sufficient_decrease} in the proof of \Cref{lem: limit_pts_critical}. We now prove that \cref{item: relative_error} holds by constructing a particular choice of $Z^{t+1} \in \partial \Jinf(\boldsymbol{\theta}^{t+1},W^{t+1})$ and showing that \eqref{eq: subdifferential_bound} holds. Observe that the updates \eqref{eq: finite-dim-iterates} (with \Cref{loc:remark_(representer_theorem)}) translate to the following optimality conditions
\begin{align*}
    0 & = \nabla_{\boldsymbol{\theta}} \Jinf(\boldsymbol{\theta}^{t+1}, W^{t}), \quad 0 \in \partial_W \Jinf(\boldsymbol{\theta}^{t+1}, W^{t+1}).
\end{align*}
In addition, we note that $\partial \Jinf (\boldsymbol{\theta}^{t+1}, W^{t+1}) = \left\{ \nabla_{\boldsymbol{\theta}} \Jinf (\boldsymbol{\theta}^{t+1}, W^{t+1})\right\} \times \partial_W \Jinf(\boldsymbol{\theta}^{t+1}, W^{t+1})$. Therefore, if we let $Z^{t+1} := \left( \nabla_{\boldsymbol{\theta}} \Jinf (\boldsymbol{\theta}^{t+1}, W^{t+1}) - \nabla_{\boldsymbol{\theta}} \Jinf(\boldsymbol{\theta}^{t+1}, W^{t}), 0 \right)$,
then we have $Z^{t+1} \in \partial \Jinf (\boldsymbol{\theta}^{t+1}, W^{t+1})$. Next, we bound $\| Z^{t+1} \|$ by recalling the definition of $\mathbf{J}$ \eqref{eq: bold_J} as 
\begin{align*}
\| Z^{t+1} \| &= \| \nabla_{\boldsymbol{\theta}} \mathbf{J} (\boldsymbol{\theta}^{t+1}, W^{t+1}) - \nabla_{\boldsymbol{\theta}} \mathbf{J}(\boldsymbol{\theta}^{t+1}, W^{t})\|_F  = \| \mathbf{K} \overline{\mathbf{L}}_{W^{t+1}} \mathbf{K} \boldsymbol{\theta}^{t+1} - \mathbf{K} \overline{\mathbf{L}}_{W^t} \mathbf{K} \boldsymbol{\theta}^{t+1} \|_F \\
& \leq \| \mathbf{K} \|^2_{\text{op}} \| \overline{\mathbf{L}}_{W^{t+1}} - \overline{\mathbf{L}}_{W^t} \|_{\text{op}} \| \boldsymbol{\theta}^{t+1} \|_{F},
\end{align*}
where $\overline{\mathbf{L}}_{W} := \frac{1}{2}({I}_{m}+\mathrm{Diag}(W^T\mathbf{e})) - \frac{1}{2} \left( W+W^T \right)$. Thus,
\begin{align*}
\| \overline{\mathbf{L}}_{W^{t+1}} - \overline{\mathbf{L}}_{W^t} \|_{\text{op}} & = \frac{1}{2} \| (\mathrm{Diag}((W^{t+1} - W^t)^T\mathbf{e})) - \left( W^{t+1} - W^t \right) - \left( ( W^{t+1} )^T - (W^t)^T \right) \|_\text{op}
\\
& \leq \|W^{t+1} - W^t \|_\text{op} + \frac{\sqrt{m}}{2} \| W^{t+1} - W^t\|_{\text{op}} \leq \left( 1 + \frac{\sqrt m}{2} \right) \|W^{t+1} - W^t \|_{F}.
\end{align*}
Thus, \cref{item: relative_error} holds as long as $\| \boldsymbol{\theta}^{t+1} \|_{F}$ is bounded above, which it is, by \Cref{lem: bounded_iterates}. \Cref{item: continuity_condition} holds by the existence of convergent subsequences as a consequence of \Cref{lem: bounded_iterates}, and the continuity of $\Jinf$ on its effective domain.
By \cite[Theorem 2.9]{attouch2013convergence},
$(\boldsymbol{\theta}^t, W^t)$ converges to $(\bar{\boldsymbol{\theta}}, \bar W)$. 

Finally, by \Cref{loc:statement_(representer_theorem)}, each iterate $(F^t, W^t)$ defined according to \eqref{eq: alternating_updates} corresponds to a finite-dimensional iterate $(\boldsymbol{\theta}^t, W^t)$ satisfying \eqref{eq: finite-dim-iterates}. Note also that $\bar{\boldsymbol{\theta}}= \lim_t \boldsymbol{\theta}^t \in \mathbf{K}\mathbb{R}^{m \times d}$. Applying \Cref{loc:statement_(representer_theorem)} again, we conclude that as $(\boldsymbol{\theta}^t, W^t)$  converges to $(\bar{\boldsymbol{\theta}},\bar W)$ which is a stationary point of $\Jinf$, $(F^t, W^t)$ necessarily converges to a stationary point of $J$ over $\mathcal{H}^d \times \mathcal{C}_\tau$.
\end{proof}

\section{Discussion}
\label{sec: discussion}

We have addressed the problem of denoising and interpolating spatial transcriptomics images
via regression in a vector-valued RKHS. By additionally employing incrementally adaptive graph Laplacian regularization, our proposed method STARK is able to estimate the ground truth effectively at sequencing depths as low as 14 UMI reads per pixel. The method is supported with asymptotic statistical error bounds as the reads $R \uparrow \infty$, as well as a convergence result on the alternating minimization scheme. When evaluated in label transfer accuracy and kNN overlap, the denoising performance of STARK on a slice of the MOSTA dataset shows consistent improvement over the competing methods tested. 

The spatial transcriptomics denoising problem stipulates multiple directions for further theoretical investigation.
A key avenue is the finite-sample analysis of such a denoising method. Nonasymptotic error bounds would highlight the relationships and tradeoffs between the number of genes, pixels and reads. 
On the other hand, modeling gene expression images by functions in an RKHS opens up connections to Gaussian process theory. Describing distributions of such images by Gaussian processes could enable a Bayesian analysis of spatial trajectory inference, providing a theoretical base for the development of new algorithms.

Finally, in \cref{fig:denoising_test_comparisons}, we observe that STAGATE and GraphPCA, the two methods based in low-dimensional representations appear to perform the best in relative error, while simultaneously performing worse than STARK and SPROD in both label transfer accuracy and kNN overlap. This discrepancy across metrics calls into question the most appropriate choice of error measurement for testing performance. It also suggests that evaluating methods based solely on Frobenius norm relative error will create a bias towards low-dimensional representations that may perform worse in terms of classification results.

\section*{Acknowledgments}
N.G. is supported by the Wallenberg AI, Autonomous Systems and Software Program (WASP) funded by the Knut and Alice Wallenberg Foundation. A.W. is supported by the Burroughs Wellcome Fund. 
M.P.F. is supported by an NSERC Discovery Grant.
Y.P. is partially supported by an NSERC Discovery Grant and a Tier II Canada Research Chair in Data Science. G.S. is supported by an NSERC Discovery Grant, the
Michael Smith Health Research BC Scholar Award, and
United Therapeutics.

\bibliographystyle{myabbrvnat.bst}
\bibliography{modified-zotero-references,refs}

\begin{thebibliography}{66}
\providecommand{\natexlab}[1]{#1}
\providecommand{\url}[1]{\texttt{#1}}
\expandafter\ifx\csname urlstyle\endcsname\relax
  \providecommand{\doi}[1]{doi: #1}\else
  \providecommand{\doi}{doi: \begingroup \urlstyle{rm}\Url}\fi

\bibitem[Ahlmann-Eltze and Huber(2023)]{ahlmann-eltze_ComparisonTransformationsSinglecell_2023}
C.~Ahlmann-Eltze and W.~Huber.
\newblock Comparison of transformations for single-cell {RNA}-seq data.
\newblock \emph{Nature Methods}, 20\penalty0 (5):\penalty0 665--672, May 2023.
\newblock ISSN 1548-7105.
\newblock Publisher: Nature Publishing Group.

\bibitem[Aronszajn(1950)]{aronszajn_TheoryReproducingKernels_1950}
N.~Aronszajn.
\newblock Theory of reproducing kernels.
\newblock \emph{Transactions of the American Mathematical Society}, 68\penalty0 (3):\penalty0 337--404, 1950.
\newblock ISSN 0002-9947, 1088-6850.

\bibitem[Attouch et~al.(2013)Attouch, Bolte, and Svaiter]{attouch2013convergence}
H.~Attouch, J.~Bolte, and B.~F. Svaiter.
\newblock Convergence of descent methods for semi-algebraic and tame problems: proximal algorithms, forward--backward splitting, and regularized gauss--seidel methods.
\newblock \emph{Mathematical programming}, 137\penalty0 (1):\penalty0 91--129, 2013.

\bibitem[Bartlett et~al.(2005)Bartlett, Bousquet, and Mendelson]{bartlett_LocalRademacherComplexities_2005}
P.~L. Bartlett, O.~Bousquet, and S.~Mendelson.
\newblock Local {Rademacher} complexities.
\newblock \emph{The Annals of Statistics}, 33\penalty0 (4):\penalty0 1497--1537, Aug. 2005.
\newblock ISSN 0090-5364, 2168-8966.
\newblock Publisher: Institute of Mathematical Statistics.

\bibitem[Belkin and Niyogi(2008)]{belkin_TheoreticalFoundationLaplacianbased_2008}
M.~Belkin and P.~Niyogi.
\newblock Towards a theoretical foundation for {Laplacian}-based manifold methods.
\newblock \emph{Journal of Computer and System Sciences}, 74\penalty0 (8):\penalty0 1289--1308, Dec. 2008.
\newblock ISSN 0022-0000.

\bibitem[Berlinet and Thomas-Agnan(2004)]{berlinet_ReproducingKernelHilbert_2004}
A.~Berlinet and C.~Thomas-Agnan.
\newblock \emph{Reproducing {Kernel} {Hilbert} {Spaces} in {Probability} and {Statistics}}.
\newblock Springer US, Boston, MA, 2004.
\newblock ISBN 978-1-4613-4792-7 978-1-4419-9096-9.

\bibitem[Bolte et~al.(2007)Bolte, Daniilidis, and Lewis]{bolte2007lojasiewicz}
J.~Bolte, A.~Daniilidis, and A.~Lewis.
\newblock The {\l}ojasiewicz inequality for nonsmooth subanalytic functions with applications to subgradient dynamical systems.
\newblock \emph{SIAM Journal on Optimization}, 17\penalty0 (4):\penalty0 1205--1223, 2007.

\bibitem[Brezis and Mironescu(2018)]{brezis2018gagliardo}
H.~Brezis and P.~Mironescu.
\newblock Gagliardo-{N}irenberg inequalities and non-inequalities: the full story.
\newblock \emph{Annales de l'Institut Henri Poincar{\'e} C, Analyse non lin{\'e}aire}, 35\penalty0 (5):\penalty0 1355--1376, 2018.

\bibitem[Buades et~al.(2005)Buades, Coll, and Morel]{buades_ReviewImageDenoising_2005}
A.~Buades, B.~Coll, and J.-M. Morel.
\newblock A review of image denoising algorithms, with a new one.
\newblock \emph{Multiscale Modeling and Simulation: A SIAM Interdisciplinary Journal}, 4\penalty0 (2):\penalty0 490--530, 2005.
\newblock Publisher: Society for Industrial and Applied Mathematics.

\bibitem[Catté et~al.(1992)Catté, Lions, Morel, and Coll]{catte_ImageSelectiveSmoothing_1992}
F.~Catté, P.-L. Lions, J.-M. Morel, and T.~Coll.
\newblock Image {Selective} {Smoothing} and {Edge} {Detection} by {Nonlinear} {Diffusion}.
\newblock \emph{SIAM Journal on Numerical Analysis}, 29\penalty0 (1):\penalty0 182--193, 1992.
\newblock ISSN 0036-1429.
\newblock Publisher: Society for Industrial and Applied Mathematics.

\bibitem[Chambolle and Pock(2011)]{chambolle_FirstOrderPrimalDualAlgorithm_2011}
A.~Chambolle and T.~Pock.
\newblock A {First}-{Order} {Primal}-{Dual} {Algorithm} for {Convex} {Problems} with {Applications} to {Imaging}.
\newblock \emph{Journal of Mathematical Imaging and Vision}, 40\penalty0 (1):\penalty0 120--145, May 2011.
\newblock ISSN 1573-7683.

\bibitem[Chen et~al.(2022)Chen, Liao, Cheng, Ma, Wu, Lai, Qiu, Yang, Xu, Hao, Wang, Lu, Chen, Liu, Huang, Li, Hong, Jiang, Peng, Liu, Shen, Liu, Li, Yuan, Wei, Zheng, Feng, Wang, Liu, Wang, Yang, Xiang, Han, Qin, Guo, Lai, Muñoz-Cánoves, Maxwell, Thiery, Wu, Zhao, Chen, Li, Dai, Wang, Kuang, Hui, Wang, Fei, Wang, Wei, Lu, Wang, Liu, Gu, Ni, Zhang, Mu, Yin, Yang, Lisby, Cornall, Mulder, Uhlén, Esteban, Li, Liu, Xu, and Wang]{chen_SpatiotemporalTranscriptomicAtlas_2022}
A.~Chen, S.~Liao, M.~Cheng, K.~Ma, L.~Wu, Y.~Lai, X.~Qiu, J.~Yang, J.~Xu, S.~Hao, X.~Wang, H.~Lu, X.~Chen, X.~Liu, X.~Huang, Z.~Li, Y.~Hong, Y.~Jiang, J.~Peng, S.~Liu, M.~Shen, C.~Liu, Q.~Li, Y.~Yuan, X.~Wei, H.~Zheng, W.~Feng, Z.~Wang, Y.~Liu, Z.~Wang, Y.~Yang, H.~Xiang, L.~Han, B.~Qin, P.~Guo, G.~Lai, P.~Muñoz-Cánoves, P.~H. Maxwell, J.~P. Thiery, Q.-F. Wu, F.~Zhao, B.~Chen, M.~Li, X.~Dai, S.~Wang, H.~Kuang, J.~Hui, L.~Wang, J.-F. Fei, O.~Wang, X.~Wei, H.~Lu, B.~Wang, S.~Liu, Y.~Gu, M.~Ni, W.~Zhang, F.~Mu, Y.~Yin, H.~Yang, M.~Lisby, R.~J. Cornall, J.~Mulder, M.~Uhlén, M.~A. Esteban, Y.~Li, L.~Liu, X.~Xu, and J.~Wang.
\newblock Spatiotemporal transcriptomic atlas of mouse organogenesis using {DNA} nanoball-patterned arrays.
\newblock \emph{Cell}, 185\penalty0 (10):\penalty0 1777--1792.e21, May 2022.
\newblock ISSN 0092-8674.

\bibitem[Cui et~al.(2024)Cui, Wang, Zeng, Cui, Zhu, Nakai, Ye, Sakurai, and Wei]{cui_DenoiseSTDualchannelUnsupervised_2024}
Y.~Cui, R.~Wang, X.~Zeng, Y.~Cui, Z.~Zhu, K.~Nakai, X.~Ye, T.~Sakurai, and L.~Wei.
\newblock {DenoiseST}: {A} dual-channel unsupervised deep learning-based denoising method to identify spatial domains and functionally variable genes in spatial transcriptomics, Mar. 2024.
\newblock Pages: 2024.03.04.583438 Section: New Results.

\bibitem[DeVore and Sharpley(1993)]{devore1993besov}
R.~A. DeVore and R.~C. Sharpley.
\newblock Besov spaces on domains in rd.
\newblock \emph{Transactions of the American Mathematical Society}, 335\penalty0 (2):\penalty0 843--864, 1993.

\bibitem[Dong and Zhang(2022)]{dong_DecipheringSpatialDomains_2022}
K.~Dong and S.~Zhang.
\newblock Deciphering spatial domains from spatially resolved transcriptomics with an adaptive graph attention auto-encoder.
\newblock \emph{Nature Communications}, 13\penalty0 (1):\penalty0 1739, Apr. 2022.
\newblock ISSN 2041-1723.
\newblock Publisher: Nature Publishing Group.

\bibitem[Eraslan et~al.(2019)Eraslan, Simon, Mircea, Mueller, and Theis]{eraslan_SinglecellRNAseqDenoising_2019}
G.~Eraslan, L.~M. Simon, M.~Mircea, N.~S. Mueller, and F.~J. Theis.
\newblock Single-cell {RNA}-seq denoising using a deep count autoencoder.
\newblock \emph{Nature Communications}, 10\penalty0 (1):\penalty0 390, Jan. 2019.
\newblock ISSN 2041-1723.
\newblock Number: 1 Publisher: Nature Publishing Group.

\bibitem[Facciolo et~al.(2009)Facciolo, Arias, Caselles, and Sapiro]{facciolo_exemplar-based_2009}
G.~Facciolo, P.~Arias, V.~Caselles, and G.~Sapiro.
\newblock Exemplar-{Based} {Interpolation} of {Sparsely} {Sampled} {Images}.
\newblock In D.~Cremers, Y.~Boykov, A.~Blake, and F.~R. Schmidt, editors, \emph{Energy {Minimization} {Methods} in {Computer} {Vision} and {Pattern} {Recognition}}, pages 331--344, Berlin, Heidelberg, 2009. Springer.
\newblock ISBN 978-3-642-03641-5.

\bibitem[Geer(2000)]{geer_EmpiricalProcessesMEstimation_2000}
S.~A. Geer.
\newblock \emph{Empirical {Processes} in {M}-{Estimation}}.
\newblock Cambridge University Press, Jan. 2000.
\newblock ISBN 978-0-521-65002-1.
\newblock Google-Books-ID: 2DYoMRz\_0YEC.

\bibitem[Green et~al.(2021)Green, Balakrishnan, and Tibshirani]{green_MinimaxOptimalRegression_2021}
A.~Green, S.~Balakrishnan, and R.~Tibshirani.
\newblock Minimax {Optimal} {Regression} over {Sobolev} {Spaces} via {Laplacian} {Regularization} on {Neighborhood} {Graphs}.
\newblock In \emph{Proceedings of {The} 24th {International} {Conference} on {Artificial} {Intelligence} and {Statistics}}, pages 2602--2610. PMLR, Mar. 2021.
\newblock ISSN: 2640-3498.

\bibitem[Heimberg et~al.(2016)Heimberg, Bhatnagar, El-Samad, and Thomson]{heimberg_LowDimensionalityGene_2016}
G.~Heimberg, R.~Bhatnagar, H.~El-Samad, and M.~Thomson.
\newblock Low {Dimensionality} in {Gene} {Expression} {Data} {Enables} the {Accurate} {Extraction} of {Transcriptional} {Programs} from {Shallow} {Sequencing}.
\newblock \emph{Cell Systems}, 2\penalty0 (4):\penalty0 239--250, Apr. 2016.
\newblock ISSN 2405-4712.
\newblock Publisher: Elsevier.

\bibitem[Hein(2006)]{hein_UniformConvergenceAdaptive_2006}
M.~Hein.
\newblock Uniform {Convergence} of {Adaptive} {Graph}-{Based} {Regularization}.
\newblock In G.~Lugosi and H.~U. Simon, editors, \emph{Learning {Theory}}, pages 50--64, Berlin, Heidelberg, 2006. Springer.
\newblock ISBN 978-3-540-35296-9.

\bibitem[Hiriart-Urruty and Lemaréchal(2001)]{hiriart-urruty_FundamentalsConvexAnalysis_2001}
J.-B. Hiriart-Urruty and C.~Lemaréchal.
\newblock \emph{Fundamentals of {Convex} {Analysis}}.
\newblock Springer Berlin Heidelberg, Berlin, Heidelberg, 2001.
\newblock ISBN 978-3-540-42205-1 978-3-642-56468-0.

\bibitem[Hu et~al.(2023)Hu, Zhao, Schunk, Ma, Derr, and Zhou]{hu_ADEPTAutoencoderDifferentially_2023}
Y.~Hu, Y.~Zhao, C.~T. Schunk, Y.~Ma, T.~Derr, and X.~M. Zhou.
\newblock {ADEPT}: {Autoencoder} with differentially expressed genes and imputation for robust spatial transcriptomics clustering.
\newblock \emph{iScience}, 26\penalty0 (6), June 2023.
\newblock ISSN 2589-0042.
\newblock Publisher: Elsevier.

\bibitem[Huang et~al.(2018)Huang, Wang, Torre, Dueck, Shaffer, Bonasio, Murray, Raj, Li, and Zhang]{huang_SAVERGeneExpression_2018}
M.~Huang, J.~Wang, E.~Torre, H.~Dueck, S.~Shaffer, R.~Bonasio, J.~I. Murray, A.~Raj, M.~Li, and N.~R. Zhang.
\newblock {SAVER}: gene expression recovery for single-cell {RNA} sequencing.
\newblock \emph{Nature Methods}, 15\penalty0 (7):\penalty0 539--542, July 2018.
\newblock ISSN 1548-7105.
\newblock Publisher: Nature Publishing Group.

\bibitem[Jiao et~al.(2024)Jiao, Lu, Zeng, Wang, Wang, Dong, and Peng]{jiao_DiffuSTLatentDiffusion_2024}
S.~Jiao, D.~Lu, X.~Zeng, T.~Wang, Y.~Wang, Y.~Dong, and J.~Peng.
\newblock {DiffuST}: a latent diffusion model for spatial transcriptomics denoising, June 2024.
\newblock Pages: 2024.06.19.599672 Section: New Results.

\bibitem[Kubal(2023)]{kubal_TheoryAlgorithmsSpatial_2023}
S.~Kubal.
\newblock Theory and algorithms for spatial transcriptomics denoising.
\newblock Master's thesis, University of British Columbia, 2023.

\bibitem[Lang et~al.(2008)Lang, Plagemann, and Burgard]{lang_AdaptiveNonStationaryKernel_}
T.~Lang, C.~Plagemann, and W.~Burgard.
\newblock Adaptive {Non}-{Stationary} {Kernel} {Regression} for {Terrain} {Modeling}.
\newblock In \emph{Robotics: {Science} and {Systems} {III}}. The MIT Press, 2008.

\bibitem[Lause et~al.(2021)Lause, Berens, and Kobak]{lause_AnalyticPearsonResiduals_2021}
J.~Lause, P.~Berens, and D.~Kobak.
\newblock Analytic {Pearson} residuals for normalization of single-cell {RNA}-seq {UMI} data.
\newblock \emph{Genome Biology}, 22\penalty0 (1):\penalty0 258, Sept. 2021.
\newblock ISSN 1474-760X.

\bibitem[Leoni(2017)]{leoni2017first}
G.~Leoni.
\newblock \emph{A first course in Sobolev spaces: 2nd edition}.
\newblock American Mathematical Society, 2017.

\bibitem[Li and Li(2018)]{li_AccurateRobustImputation_2018}
W.~V. Li and J.~J. Li.
\newblock An accurate and robust imputation method {scImpute} for single-cell {RNA}-seq data.
\newblock \emph{Nature Communications}, 9\penalty0 (1):\penalty0 997, Mar. 2018.
\newblock ISSN 2041-1723.
\newblock Number: 1 Publisher: Nature Publishing Group.

\bibitem[Marx(2021)]{marx_MethodYearSpatially_2021}
V.~Marx.
\newblock Method of the {Year}: spatially resolved transcriptomics.
\newblock \emph{Nature Methods}, 18\penalty0 (1):\penalty0 9--14, Jan. 2021.
\newblock ISSN 1548-7105.

\bibitem[Micchelli and Pontil(2005)]{micchelli_LearningVectorValuedFunctions_2005}
C.~A. Micchelli and M.~Pontil.
\newblock On {Learning} {Vector}-{Valued} {Functions}.
\newblock \emph{Neural Computation}, 17\penalty0 (1):\penalty0 177--204, Jan. 2005.
\newblock ISSN 0899-7667, 1530-888X.

\bibitem[Middendorf and Nagel(2002)]{middendorf_EmpiricallyConvergentAdaptive_2002}
M.~Middendorf and H.-H. Nagel.
\newblock Empirically {Convergent} {Adaptive} {Estimation} of {Grayvalue} {Structure} {Tensors}.
\newblock In L.~Van~Gool, editor, \emph{Pattern {Recognition}}, pages 66--74, Berlin, Heidelberg, 2002. Springer.
\newblock ISBN 978-3-540-45783-1.

\bibitem[Mongia et~al.(2019)Mongia, Sengupta, and Majumdar]{mongia_McImputeMatrixCompletion_2019}
A.~Mongia, D.~Sengupta, and A.~Majumdar.
\newblock {McImpute}: {Matrix} {Completion} {Based} {Imputation} for {Single} {Cell} {RNA}-seq {Data}.
\newblock \emph{Frontiers in Genetics}, 10, Jan. 2019.
\newblock ISSN 1664-8021.
\newblock Publisher: Frontiers.

\bibitem[Novak et~al.(2018)Novak, Ullrich, Wo{\'z}niakowski, and Zhang]{novak2018reproducing}
E.~Novak, M.~Ullrich, H.~Wo{\'z}niakowski, and S.~Zhang.
\newblock Reproducing kernels of sobolev spaces on rd and applications to embedding constants and tractability.
\newblock \emph{Analysis and Applications}, 16\penalty0 (05):\penalty0 693--715, 2018.

\bibitem[Pang and Cheung(2017)]{pang_GraphLaplacianRegularization_2017}
J.~Pang and G.~Cheung.
\newblock Graph {Laplacian} {Regularization} for {Image} {Denoising}: {Analysis} in the {Continuous} {Domain}.
\newblock \emph{IEEE Transactions on Image Processing}, 26\penalty0 (4):\penalty0 1770--1785, Apr. 2017.
\newblock ISSN 1941-0042.
\newblock Conference Name: IEEE Transactions on Image Processing.

\bibitem[Paris et~al.(2009)Paris, Kornprobst, Tumblin, and Durand]{paris_BilateralFilteringTheory_2009}
S.~Paris, P.~Kornprobst, J.~Tumblin, and F.~Durand.
\newblock Bilateral {Filtering}: {Theory} and {Applications}.
\newblock \emph{Foundations and Trends® in Computer Graphics and Vision}, 4\penalty0 (1):\penalty0 1--73, Aug. 2009.
\newblock ISSN 1572-2740, 1572-2759.
\newblock Publisher: Now Publishers, Inc.

\bibitem[Patruno et~al.(2021)Patruno, Maspero, Craighero, Angaroni, Antoniotti, and Graudenzi]{patruno_ReviewComputationalStrategies_2021}
L.~Patruno, D.~Maspero, F.~Craighero, F.~Angaroni, M.~Antoniotti, and A.~Graudenzi.
\newblock A review of computational strategies for denoising and imputation of single-cell transcriptomic data.
\newblock \emph{Briefings in Bioinformatics}, 22\penalty0 (4):\penalty0 bbaa222, July 2021.
\newblock ISSN 1477-4054.

\bibitem[Peng et~al.(2023)Peng, He, Peng, Li, and Zhang]{peng_STGNNksIdentifyingCell_2023}
L.~Peng, X.~He, X.~Peng, Z.~Li, and L.~Zhang.
\newblock {STGNNks}: {Identifying} cell types in spatial transcriptomics data based on graph neural network, denoising auto-encoder, and k{\textless}math{\textgreater}{\textless}mi is="true"{\textgreater}k{\textless}/mi{\textgreater}{\textless}/math{\textgreater}-sums clustering.
\newblock \emph{Computers in Biology and Medicine}, 166:\penalty0 107440, Nov. 2023.
\newblock ISSN 0010-4825.

\bibitem[Perona and Malik(1990)]{perona_ScalespaceEdgeDetection_1990}
P.~Perona and J.~Malik.
\newblock Scale-space and edge detection using anisotropic diffusion.
\newblock \emph{IEEE Transactions on Pattern Analysis and Machine Intelligence}, 12\penalty0 (7):\penalty0 629--639, July 1990.
\newblock ISSN 1939-3539.
\newblock Conference Name: IEEE Transactions on Pattern Analysis and Machine Intelligence.

\bibitem[Peyré et~al.(2011)Peyré, Bougleux, and Cohen]{peyre_NonlocalRegularizationInverse_2011a}
G.~Peyré, S.~Bougleux, and L.~D. Cohen.
\newblock Non-local {Regularization} of {Inverse} {Problems}.
\newblock \emph{Inverse Problems and Imaging}, 5\penalty0 (2):\penalty0 511--530, 2011.
\newblock Publisher: AIMS American Institute of Mathematical Sciences.

\bibitem[Porcu et~al.(2023)Porcu, Bevilacqua, Schaback, and Oates]{porcu_MaternModelJourney_2023}
E.~Porcu, M.~Bevilacqua, R.~Schaback, and C.~J. Oates.
\newblock The {Matérn} {Model}: {A} {Journey} through {Statistics}, {Numerical} {Analysis} and {Machine} {Learning}, Mar. 2023.
\newblock arXiv:2303.02759 [math].

\bibitem[Rasmussen and Williams(2006)]{rasmussen_GaussianProcessesMachine_2006}
C.~E. Rasmussen and C.~K.~I. Williams.
\newblock \emph{Gaussian processes for machine learning}.
\newblock Adaptive computation and machine learning. MIT Press, Cambridge, Mass, 2006.
\newblock ISBN 978-0-262-18253-9.
\newblock OCLC: ocm61285753.

\bibitem[Rodriques et~al.(2019)Rodriques, Stickels, Goeva, Martin, Murray, Vanderburg, Welch, Chen, Chen, and Macosko]{rodriques_SlideseqScalableTechnology_2019}
S.~G. Rodriques, R.~R. Stickels, A.~Goeva, C.~A. Martin, E.~Murray, C.~R. Vanderburg, J.~Welch, L.~M. Chen, F.~Chen, and E.~Z. Macosko.
\newblock Slide-seq: {A} scalable technology for measuring genome-wide expression at high spatial resolution.
\newblock \emph{Science}, 363\penalty0 (6434):\penalty0 1463--1467, Mar. 2019.
\newblock Publisher: American Association for the Advancement of Science.

\bibitem[Rudin et~al.(1992)Rudin, Osher, and Fatemi]{rudin_NonlinearTotalVariation_1992}
L.~I. Rudin, S.~Osher, and E.~Fatemi.
\newblock Nonlinear total variation based noise removal algorithms.
\newblock \emph{Physica D: Nonlinear Phenomena}, 60\penalty0 (1-4):\penalty0 259--268, Nov. 1992.
\newblock ISSN 01672789.

\bibitem[Schwartz(1964)]{schwartz_SousespacesHilbertiensDespaces_1964}
L.~Schwartz.
\newblock Sous-espaces hilbertiens d’espaces vectoriels topologiques et noyaux associés ({Noyaux} reproduisants).
\newblock \emph{Journal d’Analyse Mathématique}, 13\penalty0 (1):\penalty0 115--256, Dec. 1964.
\newblock ISSN 1565-8538.

\bibitem[Shang and Zhou(2022)]{shang_SpatiallyAwareDimension_2022}
L.~Shang and X.~Zhou.
\newblock Spatially aware dimension reduction for spatial transcriptomics.
\newblock \emph{Nature Communications}, 13\penalty0 (1):\penalty0 7203, Nov. 2022.
\newblock ISSN 2041-1723.
\newblock Publisher: Nature Publishing Group.

\bibitem[St\r{a}hl et~al.(2016)St\r{a}hl, Salmén, Vickovic, Lundmark, Navarro, Magnusson, Giacomello, Asp, Westholm, Huss, Mollbrink, Linnarsson, Codeluppi, Borg, Pontén, Costea, Sahlén, Mulder, Bergmann, Lundeberg, and Frisén]{stahl_VisualizationAnalysisGene_2016}
P.~L. St\r{a}hl, F.~Salmén, S.~Vickovic, A.~Lundmark, J.~F. Navarro, J.~Magnusson, S.~Giacomello, M.~Asp, J.~O. Westholm, M.~Huss, A.~Mollbrink, S.~Linnarsson, S.~Codeluppi, r.~Borg, F.~Pontén, P.~I. Costea, P.~Sahlén, J.~Mulder, O.~Bergmann, J.~Lundeberg, and J.~Frisén.
\newblock Visualization and analysis of gene expression in tissue sections by spatial transcriptomics.
\newblock \emph{Science}, 353\penalty0 (6294):\penalty0 78--82, July 2016.
\newblock Publisher: American Association for the Advancement of Science.

\bibitem[Takeda et~al.(2007)Takeda, Farsiu, and Milanfar]{takeda_KernelRegressionImage_2007}
H.~Takeda, S.~Farsiu, and P.~Milanfar.
\newblock Kernel {Regression} for {Image} {Processing} and {Reconstruction}.
\newblock \emph{IEEE Transactions on Image Processing}, 16\penalty0 (2):\penalty0 349--366, Feb. 2007.
\newblock ISSN 1941-0042.
\newblock Conference Name: IEEE Transactions on Image Processing.

\bibitem[Townes and Engelhardt(2023)]{townes_NonnegativeSpatialFactorization_2023}
F.~W. Townes and B.~E. Engelhardt.
\newblock Nonnegative spatial factorization applied to spatial genomics.
\newblock \emph{Nature Methods}, 20\penalty0 (2):\penalty0 229--238, Feb. 2023.
\newblock ISSN 1548-7105.
\newblock Publisher: Nature Publishing Group.

\bibitem[Townes et~al.(2019)Townes, Hicks, Aryee, and Irizarry]{townes_FeatureSelectionDimension_2019}
F.~W. Townes, S.~C. Hicks, M.~J. Aryee, and R.~A. Irizarry.
\newblock Feature selection and dimension reduction for single-cell {RNA}-{Seq} based on a multinomial model.
\newblock \emph{Genome Biology}, 20\penalty0 (1):\penalty0 295, Dec. 2019.
\newblock ISSN 1474-760X.

\bibitem[Tschumperle and Deriche(2005)]{tschumperle_VectorvaluedImageRegularization_2005}
D.~Tschumperle and R.~Deriche.
\newblock Vector-valued image regularization with {PDEs}: a common framework for different applications.
\newblock \emph{IEEE Transactions on Pattern Analysis and Machine Intelligence}, 27\penalty0 (4):\penalty0 506--517, Apr. 2005.
\newblock ISSN 1939-3539.
\newblock Conference Name: IEEE Transactions on Pattern Analysis and Machine Intelligence.

\bibitem[Tsybakov(2003)]{tsybakov2003introduction}
A.~B. Tsybakov.
\newblock \emph{Introduction {\`a} l'estimation non param{\'e}trique}.
\newblock Math{\'e}matiques et Applications. Springer, 2003.
\newblock ISBN 9783540405924.

\bibitem[Van~Dijk et~al.(2018)Van~Dijk, Sharma, Nainys, Yim, Kathail, Carr, Burdziak, Moon, Chaffer, Pattabiraman, Bierie, Mazutis, Wolf, Krishnaswamy, and Pe’er]{vandijk_RecoveringGeneInteractions_2018}
D.~Van~Dijk, R.~Sharma, J.~Nainys, K.~Yim, P.~Kathail, A.~J. Carr, C.~Burdziak, K.~R. Moon, C.~L. Chaffer, D.~Pattabiraman, B.~Bierie, L.~Mazutis, G.~Wolf, S.~Krishnaswamy, and D.~Pe’er.
\newblock Recovering {Gene} {Interactions} from {Single}-{Cell} {Data} {Using} {Data} {Diffusion}.
\newblock \emph{Cell}, 174\penalty0 (3):\penalty0 716--729.e27, July 2018.
\newblock ISSN 00928674.

\bibitem[Vershynin(2018)]{vershynin_HighDimensionalProbability_2018}
R.~Vershynin.
\newblock \emph{High-{Dimensional} {Probability}}.
\newblock Cambridge {Series} in {Statistical} and {Probabilistic} {Mathematics}. Cambridge University Press, University of California, Irvine, Oct. 2018.
\newblock ISBN 978-1-108-24625-5.

\bibitem[Wang et~al.(2018)Wang, Huang, Torre, Dueck, Shaffer, Murray, Raj, Li, and Zhang]{wang_GeneExpressionDistribution_2018}
J.~Wang, M.~Huang, E.~Torre, H.~Dueck, S.~Shaffer, J.~Murray, A.~Raj, M.~Li, and N.~R. Zhang.
\newblock Gene expression distribution deconvolution in single-cell {RNA} sequencing.
\newblock \emph{Proceedings of the National Academy of Sciences}, 115\penalty0 (28):\penalty0 E6437--E6446, July 2018.
\newblock Publisher: Proceedings of the National Academy of Sciences.

\bibitem[Wang et~al.(2022{\natexlab{a}})Wang, Maletic-Savatic, and Liu]{wang_RegionspecificDenoisingIdentifies_2022}
L.~Wang, M.~Maletic-Savatic, and Z.~Liu.
\newblock Region-specific denoising identifies spatial co-expression patterns and intra-tissue heterogeneity in spatially resolved transcriptomics data.
\newblock \emph{Nature Communications}, 13\penalty0 (1):\penalty0 6912, Nov. 2022{\natexlab{a}}.
\newblock ISSN 2041-1723.
\newblock Publisher: Nature Publishing Group.

\bibitem[Wang et~al.(2022{\natexlab{b}})Wang, Song, Wang, Chen, Xie, Xiao, Wang, and Wang]{wang_SprodDenoisingSpatially_2022}
Y.~Wang, B.~Song, S.~Wang, M.~Chen, Y.~Xie, G.~Xiao, L.~Wang, and T.~Wang.
\newblock Sprod for de-noising spatially resolved transcriptomics data based on position and image information.
\newblock \emph{Nature Methods}, 19\penalty0 (8):\penalty0 950--958, Aug. 2022{\natexlab{b}}.
\newblock ISSN 1548-7105.
\newblock Number: 8 Publisher: Nature Publishing Group.

\bibitem[Wedin(1973)]{wedin_PerturbationTheoryPseudoinverses_1973}
P.-r. Wedin.
\newblock Perturbation theory for pseudo-inverses.
\newblock \emph{BIT}, 13\penalty0 (2):\penalty0 217--232, June 1973.
\newblock ISSN 0006-3835, 1572-9125.
\newblock Publisher: Springer Science and Business Media LLC.

\bibitem[Weickert()]{weickert_AnisotropicDiffusionImage_}
J.~Weickert.
\newblock Anisotropic {Diﬀusion} in {Image} {Processing}.

\bibitem[Xu et~al.(2024)Xu, Fu, Long, Ang, Sethi, Chong, Li, Uddamvathanak, Lee, Ling, Chen, Shao, Liu, and Chen]{xu_UnsupervisedSpatiallyEmbedded_2024}
H.~Xu, H.~Fu, Y.~Long, K.~S. Ang, R.~Sethi, K.~Chong, M.~Li, R.~Uddamvathanak, H.~K. Lee, J.~Ling, A.~Chen, L.~Shao, L.~Liu, and J.~Chen.
\newblock Unsupervised spatially embedded deep representation of spatial transcriptomics.
\newblock \emph{Genome Medicine}, 16\penalty0 (1):\penalty0 12, Jan. 2024.
\newblock ISSN 1756-994X.

\bibitem[Xu and Yin(2013)]{xu_BlockCoordinateDescent_2013}
Y.~Xu and W.~Yin.
\newblock A {Block} {Coordinate} {Descent} {Method} for {Regularized} {Multiconvex} {Optimization} with {Applications} to {Nonnegative} {Tensor} {Factorization} and {Completion}.
\newblock \emph{SIAM Journal on Imaging Sciences}, 6\penalty0 (3):\penalty0 1758--1789, 2013.
\newblock Num Pages: 32 Place: Philadelphia, United States Publisher: Society for Industrial and Applied Mathematics.

\bibitem[Yang et~al.(2024)Yang, Wang, Liu, and Zheng]{yang_GraphPCAFastInterpretable_2024}
J.~Yang, L.~Wang, L.~Liu, and X.~Zheng.
\newblock {GraphPCA}: a fast and interpretable dimension reduction algorithm for spatial transcriptomics data.
\newblock \emph{Genome Biology}, 25\penalty0 (1):\penalty0 287, Nov. 2024.
\newblock ISSN 1474-760X.

\bibitem[Zhang et~al.(2020)Zhang, Ntranos, and Tse]{zhang_DeterminingSequencingDepth_2020}
M.~J. Zhang, V.~Ntranos, and D.~Tse.
\newblock Determining sequencing depth in a single-cell {RNA}-seq experiment.
\newblock \emph{Nature Communications}, 11\penalty0 (1):\penalty0 774, Feb. 2020.
\newblock ISSN 2041-1723.
\newblock Number: 1 Publisher: Nature Publishing Group.

\bibitem[Zhao et~al.(2022)Zhao, Zhang, and Liu]{zhao_AETPGGNovelAutoencoderbased_2022}
S.~Zhao, L.~Zhang, and X.~Liu.
\newblock {AE}-{TPGG}: a novel autoencoder-based approach for single-cell {RNA}-seq data imputation and dimensionality reduction.
\newblock \emph{Frontiers of Computer Science}, 17\penalty0 (3):\penalty0 173902, Oct. 2022.
\newblock ISSN 2095-2236.

\bibitem[Zhao et~al.(2023)Zhao, Wang, and Hu]{zhao_DISTSpatialTranscriptomics_2023}
Y.~Zhao, K.~Wang, and G.~Hu.
\newblock {DIST}: spatial transcriptomics enhancement using deep learning.
\newblock \emph{Briefings in Bioinformatics}, 24\penalty0 (2):\penalty0 bbad013, Mar. 2023.
\newblock ISSN 1477-4054.

\end{thebibliography}

\newpage
\appendix

\section{Background information on reproducing kernel Hilbert spaces (RKHS)}
\label{appsec: theory-details}
Fix a ground set $\mathcal{Q}$. An RKHS $(\mathcal{H}, \langle \cdot,\cdot \rangle_{\mathcal{H}})$ is a Hilbert space of functions $f:\mathcal{Q}\to \mathbb{R}$ in which pointwise evaluation $f \mapsto f(q)$ is continuous for any $q \in \mathcal{Q}$.  Given the RKHS, there exists a unique positive semidefinite kernel function $\mathcal{K}:\mathcal{Q}\times \mathcal{Q}\to \mathbb{R}$ such that (1) the reproducing property holds i.e. $f(q)=\langle f, \mathcal{K}(q,\cdot) \rangle_{\mathcal{H}}$, and (2) the space of finite linear combinations
\begin{equation*}
\left\{ \textstyle\sum_{k=1}^K \beta_{k}\mathcal{K}(q_{k},\cdot) \mid K \in \mathbb{N},\, q_{1},\dots,q_{K} \in \mathcal{Q},\, \beta_{1},\dots,\beta_{K} \in \mathbb{R} \right\}
\end{equation*}
is dense in $\mathcal{H}$. Conversely, starting from a positive semidefinite kernel $\mathcal{K}$, there exists a unique RKHS satisfying (1) and (2).

\subsection{Linking Sobolev spaces and Matérn kernels}
\label{appsubsec: matern_sobolev}

Let $\mathcal{K}$ be a Matérn kernel defined as
\begin{equation}
\label{eq:matern_kernel}
\mathcal{K}(q,q') = \frac{2^{1-\nu}}{\Gamma(\nu)} \left( \sqrt{ \frac{2\nu}{l} } \euc{q-q'} \right)^{\nu} K_{\nu}\left( \sqrt{ \frac{2\nu}{l} } \euc{q-q'} \right)
\end{equation}
where $\nu>0$ is a smoothness parameter, $l>0$ is a length-scale, and $K_{\nu}$ is a modified Bessel function; see \citet[pg. 84]{rasmussen_GaussianProcessesMachine_2006}. Setting $\nu= 1 /2$ in \eqref{eq:matern_kernel} gives the exponential kernel
\begin{equation*}
\mathcal{K}(q,q')=\exp(- \euc{q-q'} /l).
\end{equation*}
Letting $\nu \to \infty$ on the other hand yields a Gaussian kernel.

If $\mathcal{Q}=\mathbb{R}^n$, then the RKHS generated by the Matérn kernel $\mathcal{K}$ coincides with the Sobolev space $H^s(\mathbb{R}^n)$ with $s=n /2 + \nu$ (possibly up to equivalent norms) \cite{rasmussen_GaussianProcessesMachine_2006, novak2018reproducing, porcu_MaternModelJourney_2023}. The same goes for vector-valued RKHSes and Sobolev spaces respectively.
Letting $n=2$ and $\nu=1 /2$, we conclude that the exponential kernel on $\mathbb{R}^2$ generates $H^{3/2}(\mathbb{R}^2)$ (and consequently, $H^{3/2}(\mathbb{R}^2; \mathbb{R}^d)$).

\subsection{Restricting to subdomains}

In \Cref{subsec: variational_formulation}, we consider $H^s(\mathcal{Q}; \mathbb{R}^d)$ for a subdomain $\mathcal{Q} \subseteq \mathbb{R}^2$, and assert that it is still generated by a Matérn kernel provided $s=1+\nu$ with $\nu >0$. Thus, it remains to verify that the link between Matérn kernels and Sobolev spaces is preserved under restriction.

In the case of an RKHS $\mathcal{H}$ over $\mathbb{R}^n$, restricting the kernel to values in a subdomain $\Omega\subseteq \mathbb{R}^n$ gives rise to an RKHS $\tilde{\mathcal{H}}$ over $\Omega$ \cite[Section 1.4]{berlinet_ReproducingKernelHilbert_2004} with norm 
\begin{align*}
    \Vert \tilde{f} \Vert_{\tilde{\mathcal{H}}}=\min_{f\in\mathcal{H}:f\upharpoonright_\Omega = \tilde{f}} \Vert f \Vert_{\mathcal{H}}.
\end{align*}
On the other hand, for domains $\Omega$ which are ``extension domains", it holds that there exists a constant $C>0$ depending on $\Omega,n,s$ such that 
\begin{align*}
    \Vert \tilde{f} \Vert_{H^s(\Omega)}\leq\min_{f\in\mathcal{H}:f\upharpoonright_\Omega = \tilde{f}} \Vert f \Vert_{H^s(\mathbb{R}^n)}\leq C \Vert \tilde{f} \Vert_{H^s(\Omega)}.
\end{align*}
In particular, \citet[Theorem 13.17]{leoni2017first} verifies that any $\Omega$ which is bounded and has Lipschitz boundary is an extension domain, when $s>0$ is an integer. The case of general $s>0$ is then obtained from \citet[p. 560]{leoni2017first}, who establishes the corresponding extension theorem for Besov spaces by interpolation (see also \cite{brezis2018gagliardo}), together with \cite[Theorem 6.7]{devore1993besov} which shows that the class of Besov spaces contains the class of fractional Sobolev spaces (under assumptions on $\Omega$ which are satisfied in this case). Altogether this shows that the RKHS norm $\Vert \cdot \Vert_{\tilde{\mathcal{H}}}$ is equivalent to the Sobolev norm $\Vert \cdot \Vert_{H^s(\Omega)}$ in our case where $\Omega=\mathcal{Q}$, with the same therefore also holding in the vector-valued case.

\section{Deferred proofs}
\label{appsec: deferred-proofs}

\begin{proof}[\hypertarget{loc:proof_(error_in_the_rkhs_norm)}Proof of \Cref{loc:statement_(error_in_the_rkhs_norm)}]

We start by orthogonally decomposing our RKHS into $M=M(q_{1},\dots,q_{m})$ and $M^\perp$, writing $\mathcal{H}^{d}=M \oplus M^{\perp}$. The ground truth $F^\star$ in particular can be decomposed as
\begin{equation*}
F^\star = \underbrace{ F^\star_{M} }_{ \in M } + \underbrace{ F^\star_{M^\perp} }_{ \in M^\perp },
\end{equation*}
where $F^\star_M$ is the orthogonal projection of $F^\star$ onto $M$, with $\|F^\star_{M}-F^\star\|_{\mathcal{H}^d} = \mathcal{A}(F^\star; \{ q_{1},\dots,q_{m} \})$. Note also that $F^\star_{M^\perp}(q_{i})= \sum_{j=1}^d\langle \mathcal{K}(q_{i}, \cdot)e_{j}\,,\, F^\star_{M^\perp} \rangle_{\mathcal{H}^d} \,e_{j}=0$, implying $F^\star(q_{i})=F^\star_{M}(q_{i})$ and $\mathbf{F}^\star = \mathbf{F}^\star_{M}$.

Let $(\bar{F}, \bar{W}) \in \mathrm{Stat}(J)$ be any stationary point of $J$ over $\mathcal{H}^d \times \mathcal{C}_\tau$. By the triangle inequality,
\begin{equation} \label{eq: corr-triangle}
\begin{aligned}
\|\bar{F}-F^\star\|_{\mathcal{H}^d} &\leq \|\bar{F}-F^\star_{M}\|_{\mathcal{H}^d} + \|F^\star_{M}-F^\star\|_{\mathcal{H}^d} \\
&= \|\bar{F}-F^\star_{M}\|_{\mathcal{H}^d} + \mathcal{A}(F^\star; \{ q_{1},\dots,q_{m} \}).
\end{aligned}
\end{equation}
Since $\bar{F}-F^\star_{M} \in M$, one can express
\begin{equation}\label{eq: err-in-subspace}
\begin{aligned}
\|\bar{F}-F^\star_{M}\|_{\mathcal{H}^d}^2 &= \frac{1}{m} \mathrm{tr}\left[ (\bar{\mathbf{F}}-\mathbf{F}^\star_{M})^T \mathbf{K}^+ (\bar{\mathbf{F}}-\mathbf{F}^\star_{M}) \right] \\
&= \frac{1}{m} \mathrm{tr}\left[ (\bar{\mathbf{F}}-\mathbf{F}^\star)^T \mathbf{K}^+ (\bar{\mathbf{F}}-\mathbf{F}^\star) \right] \\
&= \frac{1}{m} \langle (\bar{\mathbf{F}}-\mathbf{F}^\star), \mathbf{K}^+ (\bar{\mathbf{F}}-\mathbf{F}^\star) \rangle_{F} \\
&\leq \frac{1}{m} \op{\mathbf{K}^+} \fro{(\bar{\mathbf{F}}-\mathbf{F}^\star)}^2 = \op{\mathbf{K}^+} \|\bar{F}-F^\star\|^2_{L^2_{m}}.
\end{aligned}
\end{equation}
Putting \eqref{eq: corr-triangle} and \eqref{eq: err-in-subspace} together gives
$$
\|\bar{F}-F^\star\|_{\mathcal{H}^d} \leq \mathcal{A}(F^\star; \{ q_{1},\dots,q_{m} \}) + \op{\mathbf{K}^+}^{1/2}\,\|\bar{F}-F^\star\|_{L^2_{m}}.
$$
Taking a supremum over $(\bar{F}, \bar{W}) \in \mathrm{Stat}(J)$ followed by an expectation, and using \Cref{loc:statement_(asymptotic_rate_with_respect_to_the_reads)} gives the desired inequality.
\end{proof}

\begin{proof}[\hypertarget{loc:proof_(representer_theorem)}Proof of \Cref{loc:statement_(representer_theorem)}]

Recall the notation
\begin{align*}
    F_{\boldsymbol{\theta}}(\cdot) = \frac{1}{\sqrt{ m }}\sum_{i=1}^m \mathcal{K}(q_{i},\cdot)\theta_{i}
\end{align*}
for any $\boldsymbol{\theta}\in\mathbb{R}^{m\times d}$ with rows $\theta_{1},\dots,\theta_{m} \in \mathbb{R}^d$. We proceed via the following steps.

\paragraph{Reducing to finite dimensions}

Consider the finite-dimensional subspace $M=\{ F_{\boldsymbol{\theta}} \mid \boldsymbol{\theta} \in \mathbb{R}^{m \times d} \}$ of $\mathcal{H}^d$, and recall the orthogonal decomposition $\mathcal{H}^{d}=M \oplus M^{\perp}$. For any $F \in \mathcal{H}^d$, writing
\begin{equation*}
F= \underbrace{ F_{M} }_{ \in M } + \underbrace{ F_{M^\perp} }_{ \in M^\perp },
\end{equation*}
and noting $F_{M^\perp}(q_{i})= \sum_{j=1}^d\langle \mathcal{K}(q_{i}, \cdot)e_{j}\,,\, F_{M^\perp} \rangle_{\mathcal{H}^d} \,e_{j}=0$ gives, by the Pythagorean theorem, that 
\begin{equation*}
\begin{aligned}
J(F, W) &= \frac{1}{m}\sum_{i=1}^{m} \euc{Y_{i}-F(q_{i})}^2 + \lambda\|F\|_{\mathcal{H}^d}^2 + \omega\left( \frac{1}{2m}\sum_{ik}W_{ik}\euc{F(q_{i})-F(q_{k})}^2 + E(W)  \right) \\
&= \frac{1}{m}\sum_{i=1}^{m} \euc{Y_{i}-F_{M}(q_{i})}^2 + \lambda\|F\|_{\mathcal{H}^d}^2 + \omega\left( \frac{1}{2m}\sum_{ik}W_{ik}\euc{F_{M}(q_{i})-F_{M}(q_{k})}^2 + E(W)  \right) \\
&= J(F_{M}, W) + \lambda\|F_{M^\perp}\|_{\mathcal{H}^d}^2.
\end{aligned}
\end{equation*}
In particular, $J(F, W) \geq J(F_{M},W)$ for all $F \in \mathcal{H}^d$, where equality holds if and only if $F_{M^\perp}=0$. As a consequence, any minimizer of $J(\cdot,W)$ over $\mathcal{H}^d$ must belong to $M$ so that
\begin{equation*}
\argmin_{F \in \mathcal{H}^d}\, J(F, W) = \argmin_{F \in M}\,J(F, W).
\end{equation*}

\paragraph{Handling the finite-dimensional problem}

Because the finite-dimensional subspace $M$ is parametrized by $\boldsymbol{\theta}\in \mathbb{R}^{m\times d}$, we know that $F_{\hat{\boldsymbol{\theta}}} \in \argmin_{F \in M}\,J(F,W)$ if and only if $\hat{\boldsymbol{\theta}} \in \argmin_{\boldsymbol{\theta} \in \mathbb{R}^{m\times d}}\, J(F_{\boldsymbol{\theta}}, W)$.
By strong convexity, any minimizer of $J(\cdot, W)$ is unique. Next, observe that
\begin{equation*}
\begin{aligned}
J(F_{\boldsymbol{\theta}},W) &= \frac{1}{m}\sum_{i=1}^{m} \euc{Y_{i}-F_{\boldsymbol{\theta}}(q_{i})}^2 + \lambda\|F_{\boldsymbol{\theta}}\|_{\mathcal{H}^d}^2 + \omega\left( \frac{1}{2m}\sum_{ik}W_{ik}\euc{F_{\boldsymbol{\theta}}(q_{i})-F_{\boldsymbol{\theta}}(q_{k})}^2 + E(W)  \right)\\
&= \frac{1}{m} \fro{\mathbf{Y}-\sqrt{ m }\mathbf{K}\boldsymbol{\theta}}^2 + \lambda\mathrm{tr}(\boldsymbol{\theta}^T\mathbf{K}\boldsymbol{\theta}) + \omega\left( \frac{1}{m}\mathrm{tr}\left[ (\sqrt{ m }\mathbf{K}\boldsymbol{\theta})^T \,\overline{\mathbf{L}}_W\,(\sqrt{ m }\mathbf{K}\boldsymbol{\theta}) \right] + E(W) \right) \\
&= \frac{1}{m} \fro{\mathbf{Y}-\sqrt{ m }\mathbf{K}\boldsymbol{\theta}}^2 + \lambda\mathrm{tr}(\boldsymbol{\theta}^T\mathbf{K}\boldsymbol{\theta}) + \omega\left( \mathrm{tr}(\boldsymbol{\theta}^T \mathbf{K} \overline{\mathbf{L}}_{W}\mathbf{K}\boldsymbol{\theta}) + E(W) \right) \\
&= \mathbf{J}(\boldsymbol{\theta}, W).
\end{aligned}
\end{equation*}
We conclude this step by showing that restricting $\boldsymbol{\theta}$ to the subspace $\mathbf{K}\mathbb{R}^{m\times d}$ is possible, and additionally yields uniqueness. Firstly, if $\boldsymbol{\alpha} \in (\mathbf{K}\mathbb{R}^{m\times d})^\perp$, then $\langle \mathbf{K}\boldsymbol{\alpha}, \mathbf{X} \rangle_{F} = \mathrm{tr}(\boldsymbol{\alpha}^T\mathbf{K}\mathbf{X}) = \langle \boldsymbol{\alpha}, \mathbf{K}\mathbf{X} \rangle_{F}=0$ for all $\mathbf{X}\in \mathbb{R}^{m\times d}$ implying that $\mathbf{K}\boldsymbol{\alpha}=0$, and consequently, $\mathbf{J}(\boldsymbol{\theta}+\boldsymbol{\alpha}, W)=\mathbf{J}(\boldsymbol{\theta},W)$. As a result, we can always replace $\boldsymbol{\theta}$ by its (Euclidean) projection onto $\mathbf{K}\mathbb{R}^{m\times d}$. On the other hand, suppose $\hat{\boldsymbol{\theta}}, \hat{\boldsymbol{\theta}}' \in \mathbf{K}\mathbb{R}^{m\times d}$  are minimizers of $\mathbf{J}(\cdot,W)$. Then $F_{\hat{\boldsymbol{\theta}}}$ and $F_{\hat{\boldsymbol{\theta}}'}$ are minimizers of $J(\cdot, W)$ over $\mathcal{H}^d$, and hence, must be equal by uniqueness. As a result,
\begin{equation*}
\begin{aligned}
0=\|F_{\hat{\boldsymbol{\theta}}}-F_{\hat{\boldsymbol{\theta}}'}\|_{\mathcal{H}^{d}}^2 &= \mathrm{tr}\left[ (\hat{\boldsymbol{\theta}}'- \hat{\boldsymbol{\theta}})^T \mathbf{K} (\hat{\boldsymbol{\theta}}'- \hat{\boldsymbol{\theta}})\right]
\end{aligned}
\end{equation*}
implying that $(\hat{\boldsymbol{\theta}}'- \hat{\boldsymbol{\theta}}) \in \mathbf{K}\mathbb{R}^{m\times d} \cap (\mathbf{K}\mathbb{R}^{m\times d})^\perp = \{ 0 \}$.

\paragraph{Obtaining the explicit expression}

Noting the gradient
\begin{equation*}
\begin{aligned}
\nabla_{\boldsymbol{\theta}}\mathbf{J}(\boldsymbol{\theta}, W) &= \frac{2}{m} \sqrt{ m }\mathbf{K}(\sqrt{ m }\mathbf{K}\boldsymbol{\theta}-\mathbf{Y}) + 2\lambda \mathbf{K}\boldsymbol{\theta} + 2\omega \mathbf{K}\overline{\mathbf{L}}_{W}\mathbf{K}\boldsymbol{\theta},
\end{aligned}
\end{equation*}
we have that $\boldsymbol{\theta}$ minimizes $\mathbf{J}(\cdot, W)$ over $\mathbb{R}^{m\times d}$ if and only if it solves
\begin{equation}\label{eq: theta_min_system}
(\mathbf{K}^2+\lambda \mathbf{K} + \omega \mathbf{K}\overline{\mathbf{L}}_{W}\mathbf{K}) \boldsymbol{\theta} = \frac{1}{\sqrt{ m }}\mathbf{K}\mathbf{Y}.
\end{equation}
Notice that the matrices $\mathbf{K}$, $(\mathbf{K}^2+\lambda \mathbf{K}+\omega \mathbf{K}\overline{\mathbf{L}}_{W}\mathbf{K})$ and $(\mathbf{K}^2+\lambda \mathbf{K}+\omega \mathbf{K}\overline{\mathbf{L}}_{W}\mathbf{K})^+$ share the same null space and range (due to symmetry and positive semidefiniteness). As a result,
\begin{align}\label{eq: theta_hat_min}
\hat{\boldsymbol{\theta}} = \frac{1}{\sqrt{ m }}(\mathbf{K}^2+\lambda \mathbf{K}+\omega \mathbf{K}\overline{\mathbf{L}}_{W}\mathbf{K})^+ \mathbf{K}\mathbf{Y}
\end{align}
is a solution, and additionally, belongs to $\mathbf{K}\mathbb{R}^{m\times d}$.
\end{proof}

\begin{proof}[\hypertarget{loc:proof_of_W_min_solution}Proof of \Cref{prop: W_min_solution}]

Ignoring terms that depend only on $F$, we can write
\begin{equation*}
\begin{aligned}
\argmin_{W \in \mathcal{C}_{\tau}}\, J(F, W) = \argmin_{W \in\mathcal{C}_{\tau}}\,\tilde{J}_{F}(W),
\end{aligned}
\end{equation*}
where
\begin{equation*}
\begin{aligned}
\tilde{J}_{F}(W) = \sum_{(i,k) \in \mathcal{E}_{\tau}} \left\{ W_{ik}\euc{F(q_{i})-F(q_{k})}^2 + s_{1}^2 \,W_{ik}\left( \log W_{ik}-1 \right) + \frac{s_{1}^2}{s_{2}^2}\,W_{ik} \euc{q_{i}-q_{k}}^2 \right\}.
\end{aligned}
\end{equation*}
First, we show that $\tilde{J}_{F}$ is strongly convex over the set
\begin{equation*}
\mathcal{B}_\tau:= \left\{ W \in \mathbb{R}^{\mathcal{E}_{\tau}} \mid 0 \leq W_{ik} \leq 1,\,\forall (i,k)\in \mathcal{E}_{\tau} \right\}.
\end{equation*}
The function $\tilde{J}_{F}$ is differentiable on $\mathrm{int}(\mathcal{B}_\tau)$, and at all $W \in \mathrm{int}(\mathcal{B}_\tau)$, the Hessian $H \in \mathbb R^{\mathcal{E}_\tau \times \mathcal{E}_\tau}$ is a diagonal matrix with entries $(H)_{ll}$, where $l = (i,k) \in \mathcal{E}_\tau$, given by
\begin{equation*}
(H)_{ll} = \frac{\partial^2 \tilde{J}_{F}(W)}{\partial W_{ik}^2 } = \frac{1}{W_{ik}} \geq 1.
\end{equation*}
In other words, $H \succeq I_{|\mathcal{E}_{\tau}|}$, and thus $\tilde{J}_{F}$ is 1-strongly convex on $\mathrm{int}(\mathcal B_\tau)$. By continuity of $\tilde{J}_{F}$ on $\mathcal{B}_\tau$, this strong convexity extends to all of $\mathcal{B}_\tau$. 

In order to fully account for the constraint $W \in \mathcal C_{\tau}$, we define the following affine set
\begin{equation}
\begin{aligned}
\label{eq:a_affine_set}
\mathcal A_\tau := \left\{ W \in \mathbb R^{\mathcal E_\tau} \mid  \textstyle\sum_{k \colon (i,k) \in \mathcal E_\tau } W_{ik} = 1\,\, \forall i \in [m] \right\}
\end{aligned}
\end{equation}
and the associated convex function
\begin{equation}\label{eq:a_indicator}
\delta_{\mathcal A_\tau}(W):= \begin{cases} 
0 & \text{ if } W \in \mathcal{A_\tau}; \\
+ \infty & \text{ otherwise.}
\end{cases}
\end{equation}
Then, $\tilde{J}_{F} + \delta_{\mathcal A_\tau}$ is $1$-strongly convex on $\mathcal{B}_\tau$, and in particular, $\tilde{J}_{F}$ is $1$-strongly convex on $\mathcal C_\tau=\mathcal{A_\tau}\cap\mathcal{B}_\tau$. Consequently, $\mathcal{C}_{\tau} \ni W \mapsto J(F,W)= (\omega /2m) \tilde{J_{F}}(W) + (\text{terms depending only on }F$) is $(\omega /2m)$-strongly convex.

Now, we show that $\hat{W}$ as defined in the statement of \Cref{prop: W_min_solution} satisfies the (necessary and sufficient) conditions for $\hat{W} \in \argmin_{W \in \mathcal{C}_{\tau}} \,\tilde{J}_{F}(W)$, namely
\begin{equation*}
\begin{aligned}
\hat{W} &\in \mathcal{C}_{\tau} \\
-\nabla_{W}\tilde{J}_{F}(\hat{W}) &\in N_{\mathcal{C}_{\tau}}(\hat{W}),
\end{aligned}
\end{equation*}
where $N_{\mathcal{C}_{\tau}}(\hat{W})$ denotes the normal cone to $\mathcal{C}_{\tau}$ at $\hat{W}$. The feasibility condition $\hat{W} \in \mathcal{C}_{\tau}$ is clear because $\tilde{W}_{ik}$ is strictly positive for $(i,k) \in \mathcal{E}_{\tau}$, and row-normalizing yields row-sums of 1. For optimality, we first note that $\hat{W} \in \mathrm{int}(\mathcal{B})$, where $\tilde{J}_{F}$ is differentiable with gradient
\begin{equation*}
[\nabla_{W}\tilde{J}_{F}(\hat{W})]_{ik} = \euc{F(q_{i})-F(q_{k})}^2 + s_{1}^2 \log \hat{W}_{ik} + \frac{s_{1}^2}{s_{2}^2}\euc{q_{i}-q_{k}}^2, \quad (i,k)\in \mathcal{E}_{\tau}.
\end{equation*}
On the other hand, the normal cone at $\hat{W}$ can be described as
\begin{equation*}
N_{\mathcal{C}_{\tau}}(\hat{W}) = \left\{ M \in \mathbb{R}^{\mathcal{E}_{\tau}} \mid M_{ik}=\lambda_{i},\,\,\forall(i,k) \in \mathcal{E}_{\tau}\,\text{ for some } \lambda_{1},\dots,\lambda_{m} \in \mathbb{R} \right\},
\end{equation*}
because none of the inequality constraints are active at $\hat{W}$ (see \cite[Section A.5.2]{hiriart-urruty_FundamentalsConvexAnalysis_2001}). Finally, recalling \eqref{eq: W_update} and that $\hat{W}_{ik} = \tilde{W}_{ik}/\sum_{\ell}\tilde{W}_{i\ell}$, we verify
\begin{align*}
& \left[ \nabla_{W} \tilde{J}_{F}(\hat{W}) \right]_{ik}  \\
&= \euc{F(q_{i})-F(q_{k})}^2 + s_{1}^2 \log \hat{W}_{ik} + \frac{s_{1}^2}{s_{2}^2}\euc{q_{i}-q_{k}}^2 \\
&= -s_{1}^2 \log \left( \textstyle \sum_{\ell}\tilde{W}_{i\ell} \right) =: -\lambda_{i}.
\end{align*}
\end{proof}

\begin{proof}[\hypertarget{prf: bounding-non-stationarity}Proof of \Cref{res: bounding-non-stationarity}]
The first statement is immediate by using the first-order optimality conditions for $W^t = \argmin_{W \in \mathcal{C}_{\tau}} J(F^t, W)$. Proving the second statement requires computing gradients in the RKHS $\mathcal{H}^d$. Define the evaluation maps $T_{i}:\mathcal{H}^d \to \mathbb{R}^d$ by $T_{i}F := F(q_{i})$. These are bounded linear maps from $\mathcal{H}^d$ to $\mathbb{R}^d$, with adjoint
\begin{equation*}
T_{i}^*X = \mathcal{K}(q_{i},\cdot)X, \qquad X \in \mathbb{R}^d.
\end{equation*}
We compute 
\begin{equation*}
\nabla_{F}\euc{T_{i}F - Y_{i}}^2 = 2\, T_{i}^*(T_{i}F - Y_{i}),
\end{equation*}
and similarly,
\begin{equation*}
\nabla_{F} \euc{T_{i}F - T_{k}F}^2 = 2\,(T_{i}-T_{k})^*(T_{i}-T_{k})F.
\end{equation*}
One also has the simple identity $\nabla_{F}\|F\|_{\mathcal{H}^d}^2 = 2F$. Putting these together gives
\begin{equation}
\begin{aligned}
\label{eq:rkhs_grad}
\nabla_{F}J(F, W^t) = A^tF - \frac{2}{m}\sum_{i=1}^mT_{i}^*Y_{i}
\end{aligned}
\end{equation}
where the bounded linear operator
\begin{equation*}
A^t := \frac{2}{m}\sum_{i=1}^m T_{i}^*T_{i} + 2\lambda + \frac{\omega}{m}\sum_{(i,k)\in \mathcal{E}_{\tau}}W_{ik}^t(T_{i}-T_{k})^*(T_{i}-T_{k}).
\end{equation*}
Because $0 \leq W_{ik}^t \leq 1$ for all $(i,k)$ and $t$, the sequence $(A^t)_{t \in \mathbb{N}}$ is uniformly bounded in operator norm i.e.
\begin{equation*}
\|A^t\|_{\mathcal{H}^d \to \mathcal{H}^d} \leq B,
\end{equation*}
for a constant $B$ that does not depend on $t$. 

To conclude the proof, recall that $F^{t+1}= \argmin_{F \in \mathcal{H}^d} J(F, W^t)$ and use first-order optimality conditions together with \eqref{eq:rkhs_grad} to obtain
\begin{equation*}
A^tF^{t+1} = \frac{2}{m}\sum_{i=1}^m T_{i}^*Y_{i}.
\end{equation*}
Evaluating the gradient at $F^t$ now gives
\begin{equation*}
\begin{aligned}
\nabla_{F}J(F^t, W^t) = A^t F^t - \frac{2}{m}\sum_{i=1}^m T_{i}^*Y_{i} = A^tF^t - A^tF^{t+1},
\end{aligned}
\end{equation*}
so that the norm
\begin{align*}
\|\nabla_{F}J(F^t, W^t)\|_{\mathcal{H}^d} = \|A^t(F^t-F^{t+1})\|_{\mathcal{H}^d} \leq B \|F^t - F^{t+1}\|_{\mathcal{H}^d}.
\end{align*}
\end{proof}

\begin{proof}[\hypertarget{prf: binomial_concentration}Proof of \Cref{loc:bounds_for_rmin_via_binomial_and_poisson_mgfs.statement_(chernoff_bound_for_binomial_or_poisson_random_variables)}]
First suppose $X \sim \mathrm{Bin}(R,p)$. By the Chernoff inequality for left tails (see \citet[Exercise 2.3.2]{vershynin_HighDimensionalProbability_2018}),
\begin{equation*}
\begin{aligned}
\mathbb{P}\{ X \leq \tau Rp \} &\leq e^{ -Rp }\left( \frac{e}{\tau } \right)^{\tau Rp} \\
&= e^{ -Rp }\left( e^{ 1 +\log (1/\tau) } \right)^{\tau Rp} \\
&= \exp\left( -Rp + \tau Rp + \tau Rp \log \frac{1}{\tau} \right),
\end{aligned}
\end{equation*}
as desired. 

Now let $X \sim \mathrm{Poi}(\lambda)$ for $\lambda=Rp$ and define $X_{N} \sim \mathrm{Bin}(N, \lambda /N)$. By the Poisson limit theorem,
\begin{equation*}
\begin{aligned}
\mathbb{P}\left\{ X \leq \tau\lambda \right\} &= \lim_{N \to \infty} \mathbb{P}\left\{ X_{N} \leq \tau\lambda \right\} \\
&=\lim_{N \to \infty} \mathbb{P}\left\{ X_{N} \leq \tau \cdot N (\lambda /N) \right\}  \\
&\leq \lim_{N \to \infty} \exp\left( -\lambda\left( 1-\tau-\tau \log \frac{1}{\tau} \right) \right) \\
&= \exp\left( -\lambda\left( 1-\tau-\tau \log \frac{1}{\tau} \right) \right),
\end{aligned}
\end{equation*}
where the second inequality uses the bound for Binomial random variables that we just obtained.
\end{proof}

\begin{proof}[\hypertarget{prf: bounded_iterates}Proof of \Cref{lem: bounded_iterates}]

Since $\mathcal C_\tau$ is a bounded set and $W^t \in \mathcal C_\tau \ \forall t$, it is clear that $(W^t)_{t \in \mathbb N}$ is bounded. For proving boundedness of $(\boldsymbol \theta^t)_{t \in \mathbb N}$, by \eqref{eq:theta_update} we observe that 
\begin{equation*}
    \begin{aligned}
        \| \boldsymbol{\theta}^{t+1} \|_F  &= \| \tfrac{1}{\sqrt{ m }}(\mathbf{K}^2+\lambda \mathbf{K}+\omega \mathbf{K}\overline{\mathbf{L}}_{W^t}\mathbf{K})^+ \mathbf{K}\mathbf{Y} \|_F \\
        & \leq  \tfrac{1}{\sqrt{m}} ||\mathbf{K}\|_{\text{op}} \| \mathbf Y \|_F \| \left( \mathbf{K}^2+\lambda \mathbf{K}+\omega \mathbf{K}\overline{\mathbf{L}}_{W^t}\mathbf{K} \right)^+\|_{\text{op}} \\
        &\leq  \tfrac{1}{\sqrt{m}}  ||\mathbf{K}\|_{\text{op}} \| \mathbf Y \|_F \| \left( \mathbf{K}^2+\lambda \mathbf{K} \right)^+\|_{\text{op}},
    \end{aligned}
\end{equation*}
where the last inequality uses the fact that $\mathbf{K}\overline{\mathbf{L}}_{W^t}\mathbf{K}$ is positive semidefinite together with properties of the pseudoinverse.
We further get that $(F^t)_{t \in \mathbb N}$ is bounded since for any $\boldsymbol{\theta}$, 
\begin{align*}
        \| F_{\boldsymbol{\theta}}\|_{\mathcal H^d}^2 & = \tr \left( \boldsymbol{\theta}^T \mathbf K \boldsymbol{\theta}\right) = \langle \boldsymbol{\theta}, \mathbf{K} \boldsymbol{\theta} \rangle_F \leq \| \boldsymbol{\theta} \|_F^2 \| \mathbf K \|_{\text{op}}.
\end{align*}
\end{proof}

\begin{proof}[\hypertarget{prf: limit_pts_critical}Proof of \Cref{lem: limit_pts_critical}]
We start by showing that
\begin{align}
\label{eq: square-summ}
    \sum_{t = 1}^\infty \| (\boldsymbol{\theta}^{t+1}, W^{t+1}) - (\boldsymbol{\theta}^{t}, W^{t})\|^2 < \infty.
\end{align}
To do this, we use the extended-valued function $\Jinf: \mathbb{R}^{m \times d} \times \mathbb{R}^{\mathcal{E}_\tau} \to \mathbb{R}\cup \{+\infty\}$, which coincides with $\mathbf{J}$ if $W \in \mathcal{C}_\tau$ and outputs $+\infty$ otherwise. 

By \Cref{loc:remark_(representer_theorem)}, the function $\Jinf(\cdot, W^t)$ is $\mu$-strongly convex on $\mathbf{K}\mathbb{R}^{m \times d}$. This guarantees that
\begin{equation}\label{eq: theta-strongly-convex-inequality}
\Jinf(\boldsymbol{\theta}^{t+1}, W^{t}) \geq \Jinf(\boldsymbol{\theta}^t, W^{t}) + \langle \nabla_{\theta} \Jinf(\boldsymbol{\theta}^{t+1},W^t), \boldsymbol{\theta}^{t+1} - \boldsymbol{\theta}^{t} \rangle + \frac{\mu}{2} \|\boldsymbol{\theta}^{t+1}- \boldsymbol{\theta}^{t} \|_F^2.
\end{equation}
By \eqref{eq: finite-dim-iterates} and \Cref{loc:remark_(representer_theorem)}, $\boldsymbol \theta^{t+1} \in \argmin_{\boldsymbol{\theta}   \in \mathbf K \mathbb R^{m \times d}} \Jinf ({\boldsymbol{\theta}}, W^t ) \subseteq \argmin_{\boldsymbol{\theta}   \in  \mathbb R^{m \times d}} \Jinf ({\boldsymbol{\theta}}, W^t )$ and
so $\nabla_{\theta} \Jinf(\boldsymbol{\theta}^{t+1},W^t) = 0$. Combining this with strong-convexity \eqref{eq: theta-strongly-convex-inequality}, we get
\begin{equation*}
    \frac{\mu}{2} \| \boldsymbol{\theta}^{t+1} - \boldsymbol{\theta}^t \|_F^2 \leq  \Jinf(\boldsymbol{\theta}^t, W^{t}) - \Jinf(\boldsymbol{\theta}^{t+1}, W^t).
\end{equation*}

Similarly, 
by \Cref{prop: W_min_solution}, for each $\boldsymbol \theta^t \in \mathbb R^{m \times d}$, the function $\Jinf(\boldsymbol \theta^t, \cdot )$ is $(\omega /2m)$-strongly convex. For any $Z \in \partial_W \Jinf(\boldsymbol \theta^{t+1}, W^{t+1})$
\begin{equation}\label{eq: W-strongly-convex-inequality}
\Jinf(\boldsymbol{\theta}^{t+1}, W^{t+1}) \geq \Jinf(\boldsymbol{\theta}^{t+1}, W^{t}) + \langle Z, W^{t+1} - W^t\rangle_F + \frac{\omega}{4m} \| W^{t+1} - W^t \|_F^2.
\end{equation}
Since $W^{t+1} \in \argmin_{W} \Jinf (\boldsymbol \theta^{t+1}, W )$, we have $0 \in \partial_W \Jinf(\boldsymbol \theta^{t+1}, W^{t+1})$. Now,
\begin{equation*}
    \frac{\omega}{4m} \| W^{t+1} - W^t \|_F^2 \leq \Jinf(\boldsymbol{\theta}^{t+1}, W^{t}) - \Jinf(\boldsymbol{\theta}^{t+1}, W^{t+1}).
\end{equation*}
Thus, there exists $C>0$ such that for all $t \in \mathbb N$,
\begin{equation}\label{eq: sufficient_decrease}
\| (\boldsymbol{\theta}^{t+1}, W^{t+1}) - (\boldsymbol{\theta}^{t}, W^{t})\|^2  \leq C ( \Jinf(\boldsymbol{\theta}^{t}, W^{t}) - \Jinf(\boldsymbol{\theta}^{t+1}, W^{t+1}) ).
\end{equation}
Summing up over all $t \in \mathbb N$, we get
\begin{align*}
    \sum_{t = 1}^\infty \| (\boldsymbol{\theta}^{t+1}, W^{t+1}) - (\boldsymbol{\theta}^{t}, W^{t})\|^2 & \leq C \sum_{t=1}^\infty \left[\Jinf(\boldsymbol{\theta}^{t}, W^{t}) - \Jinf(\boldsymbol{\theta}^{t+1}, W^{t+1})  \right]\\
    & = C \left[ \Jinf(\boldsymbol{\theta}^{1}, W^{1}) - \lim_{t \to \infty} \Jinf(\boldsymbol{\theta}^{t}, W^{t}) \right] < \infty,
\end{align*}
because the iterates $(\boldsymbol{\theta}^t, W^t)_{t \in \mathbb{N}}$ are feasible, and $\Jinf$ is bounded below.

Now, let $(\bar{\boldsymbol \theta}, \bar W)$ denote an arbitrary limit point of $(\boldsymbol{\theta}^t, W^t)_{t \in \mathbb{N}}$  and let $(\boldsymbol{\theta}^{t_s}, W^{t_s})_{s \in \mathbb N}$ be a subsequence converging to $(\bar{\boldsymbol{\theta}}, \bar W)$.
By \eqref{eq: square-summ}, we know that
$\boldsymbol{\theta}^{t_s +1} \to \bar{\boldsymbol{\theta}}$ and $W^{t_s + 1} \to \bar{W}$ as $s \to \infty$. Again by \eqref{eq: finite-dim-iterates} and \Cref{loc:remark_(representer_theorem)},
\begin{equation*}
\begin{aligned}
\mathbf J(\boldsymbol{\theta}^{t_s+1}, W^{t_s}) & \leq \mathbf J(\boldsymbol{\theta}, W^{t_s})  \ \forall \boldsymbol{\theta} \in \mathbb R^{m \times d} \\
\mathbf J(\boldsymbol{\theta}^{t_s+1}, W^{t_s+1}) & \leq \mathbf J(\boldsymbol{\theta}^{t_s+1}, W)  \ \forall W \in \mathcal C_\tau.
\end{aligned}
\end{equation*}
Taking the limit of both sides and using the fact that $\mathbf J$ is continuous, we have that
\begin{align*}
\mathbf J(\bar{\boldsymbol{\theta}}, \bar{W}) & \leq \mathbf J(\boldsymbol{\theta}, \bar W)   \ \forall \boldsymbol{\theta} \in \mathbb R^{m \times d} \\
\mathbf J(\bar{\boldsymbol{\theta}}, \bar{W}) & \leq \mathbf J(\bar{\boldsymbol{\theta}}, W)   \ \forall W \in \mathcal C_\tau.
\end{align*}
 
\end{proof}

\begin{proof}[\hypertarget{prf: iterates-bdd-away-simplex}Proof of \Cref{res: iterates-bdd-away-simplex}]

Recall the $W$-update $W^{t+1} = \text{RowNormalize}(\tilde{W}^{t+1})$, where
\begin{equation*}
\tilde{W}^t_{ik} = \exp \left( - \tfrac{1}{s_1^2} |\mathbf{F}^{t}_{k\cdot}-\mathbf{F}^{t}_{i\cdot}|^2 \right) \exp \left( - \tfrac{1}{s_2^2} \euc{q_k - q_i}^2\right)\indicator_{\{ |q_{k}-q_{i}| \leq \tau \}}.
\end{equation*}
Noting that $\mathbf{F}^t = \sqrt{ m }\mathbf{K}\boldsymbol{\theta}^t$ and using the uniform bound on $(\boldsymbol{\theta}^t)_{t \in \mathbb{N}}$ from the proof of \Cref{lem: bounded_iterates} gives $\euc{\mathbf{F}_{k\cdot}^t - \mathbf{F}_{i\cdot}^t} \leq \euc{\mathbf{F}_{i\cdot}^t} + \euc{\mathbf{F}_{k\cdot}^t} \leq 2 \fro{\mathbf{F}^t} \leq 2 \op{(\mathbf{K}^2 + \lambda \mathbf{K})^+}\op{\mathbf{K}}^2 \fro{\mathbf{Y}}$. Consequently, for $(i,k) \in \mathcal{E}_{\tau}$ (so that $\euc{q_{k}-q_{i}}\leq \tau$), we have
\begin{equation*}
\begin{aligned}
\tilde{W}^t_{ik} \geq \exp \left( - \tfrac{2}{s_{1}^2} \op{(\mathbf{K}^2 + \lambda \mathbf{K})^+}\op{\mathbf{K}}^2 \fro{\mathbf{Y}} \right) \exp \left( - \tfrac{\tau^2}{s_{2}^2} \right) =: \tilde{\nu} >0.
\end{aligned}
\end{equation*}
Finally, because $0 \leq \tilde{W} \leq 1$ (entry-wise), we conclude that
\begin{align*}
W^t_{ik} = \frac{\tilde{W}^t_{ik}}{\sum_{\ell}\tilde{W}^t_{i\ell}} \geq \frac{\tilde{\nu}}{m} =: \nu >0.
\end{align*}
\end{proof}

\section{Supplementary information on numerical experiments}
\label{appsec: exp-details}

\subsection{Hyperparameter settings for STARK}
\label{appsubsec: hyperparameters}

Our method STARK depends on several hyperparameters: the length scale $\lsc$ of the kernel function, the values $s_1$, $s_2$ and $\tau$ involved in the graph construction, the regularization strengths $\lambda$ and $\omega$, and the {number of iterations $N$} of the block coordinate descent scheme. 
We employ useful heuristics that automatically fix many of these hyperparameters so that minimal additional tweaking is required.
The choices involved are based on various numerical experiments and parameter sweeps.

The kernel length scale $\lsc>0$ is set to be a neighbourhood radius on the set of pixel locations such that a pixel has 7 neighbours on average that are at a distance at most $\lsc$ from it. 
Naturally, this length scale is also meaningful in constructing the weighted graph on the pixels. Set $s_{2}=\lsc$ and $\tau=\frac{3}{2}\lsc$. As \eqref{eq: W_update} suggests, $\tau$ sets a hard cutoff on the graph weights whereas $s_{2}$ defines a softer threshold, justifying $s_{2}<\tau$.

The parameter $s_{1}>0$ represents a length scale in gene expression space instead. A strategy for choosing it as follows.
\begin{itemize}
\item Compute the first iterate $F^1$ in Algorithm 3.1, which does not require $s_{1}$. This is already a decent estimate of the ground truth $F^\star$ (see the performance of the \textit{spatial} variant in \Cref{fig:spatial_oracle_comparisons}).
\item Set $s_{1}$ to be the 0.75-th quantile of the set $\{ \euc{\mathbf{F}_{i\cdot}^1 - \mathbf{F}^1_{k\cdot}} \mid (i,k) \in \mathcal{E}_{\tau},\, i \neq k \}$.
\end{itemize}

For regularization strengths $\lambda$ and $\omega$, it is convenient to set $\lambda=\alpha \lambda_{\mathrm{rel}}$, $\omega=\alpha \omega_{\mathrm{rel}}$, where $\alpha>0$ quantifies an `overall' regularization strength, and $\lambda_{\mathrm{rel}}, \omega_{\mathrm{rel}}>0$ determine the relative strengths of the ridge regularization and the graph Laplacian regularization terms in \eqref{eq: objective} respectively. We fix $\lambda_{\mathrm{rel}}=\op{\mathbf{K}}$ as a natural scale, and tune $\omega_{\mathrm{rel}}=6$.

The overall strength $\alpha$ should be set according to the noise level in the measurements; noisier data typically require more regularization. By \eqref{eq: expected norm noise}, we know that
\begin{equation*}
\mathbb{E} \left[ \mathrm{Fit}(F^\star) \mid R_{1},\dots,R_{m} \right]  :=\mathbb{E}\left[ \frac{1}{m} \sum_{i=1}^m \euc{Y_{i}-F^\star(q_{i})}^2 \,\Big|\, R_{1},\dots,R_{m} \right]  \simeq \frac{1}{m} \sum_{i=1}^m \min\left( \frac{1}{R_{i}}, 1 \right).
\end{equation*}
Therefore, it is reasonable to adjust the strength $\alpha$ such that our estimate $\bar{F}$ exhibits a comparable noise level i.e. $\mathrm{Fit}(\bar{F}) \simeq \mathbb{E}\,\mathrm{Fit}(F^\star)$. To avoid running the full iterative algorithm multiple times while adjusting $\alpha$ however, we actually use the first iterate $F^1$ as a proxy for $\bar{F}$. Briefly, we employ a bisection search to find a value for $\alpha$ such that
\begin{equation*}
\mathrm{Fit}(F^1) \simeq p \cdot \frac{1}{m} \sum_{i=1}^m \min\left( \frac{1}{R_{i}}, 1 \right),
\end{equation*}
where $p$ is a value close to 1. Tuning $p$ between 0.6 and 0.8 worked well in experiments.

Finally, the number of iterations $N$ is set to 7. A large number of iterations is not required here because the first iterate $F^1$, as discussed earlier, is already a reasonable estimate of the ground truth. Additionally, {we also observe fairly quick convergence of the iterates}.

\subsection{Details on computing label transfer accuracy and kNN overlap}
\label{appsubsec: metrics_details}

We need to following transformations on the original and denoised gene expression matrices $\mathbf{F}_0$ and $\bar{\mathbf{F}}$ in order to compute both metrics:
\begin{itemize}
    \item Transform the rows $(\mathbf{F}_{0})_{1\cdot},\dots,(\mathbf{F}_{0})_{m\cdot} \in \mathbb{R}^d$ of $\mathbf{F}_{0}$ elementwise by the function $t \mapsto \log(1+10^4\,t)$, center the resulting vectors by their mean, and perform principal component analysis (PCA) with $r$ components to yield eigenvectors $w_{1},\dots,w_{r} \in \mathbb{R}^d$, and principal component scores $Z_{1},\dots,Z_{m} \in \mathbb{R}^r$ for the pixels $i=1,\dots,m$.
    \item Transform the rows $\bar{\mathbf{F}}_{1\cdot},\dots,\bar{\mathbf{F}}_{m\cdot} \in \mathbb{R}^d$ of $\bar{\mathbf{F}}$ elementwise by the function $t \mapsto \log(1+10^4\,t)$, center the resulting vectors by their mean, and compute their corresponding scores $\bar{Z}_{1},\dots,\bar{Z}_{m} \in \mathbb{R}^r$ with respect to the eigenvectors $w_{1},\dots,w_{r}$ from the previous step. Keeping the eigenvectors the same across these two steps allows for meaningfully computing distances between $\{ Z_{i} \}_{i=1}^m$ and $\{ \bar{Z}_{i} \}_{i=1}^m$. 
\end{itemize}
We set $r=30$ in our experiments.

\paragraph{Computing the label transfer accuracy} 
The precise steps are as follows.
\begin{enumerate}
\item At each pixel $i$, the original score $Z_{i}$ is paired with a cell type label $l_{i}$. Train a $k$-nearest neighbour classifier on the pairs $\{ (Z_{i}, l_{i})\}_{i=1}^m$, using the Euclidean metric. 
\item Test this classifier on the scores $\bar{Z}_{1},\dots,\bar{Z}_{m}$ resulting from the denoised matrix to generate predicted labels $\bar{l}_{1},\dots,\bar{l}_{m}$. 
\item Compute the label transfer accuracy as the proportion of correctly predicted labels: 
\begin{align*}
\frac{1}{m} |\{ i \in [m] \mid l_{i}=\bar{l}_{i} \}|.
\end{align*}
\end{enumerate}
We use $k=9$ for the $k$-nearest neighbour classifier.

\paragraph{Computing the kNN graphs}
The precise steps are as follows.
\begin{enumerate}
\item Construct a directed graph on the points $Z_{1},\dots,Z_{m}$ by connecting each $Z_{i}$ to $k$ other $Z_{i'}$ that are closest to it in Euclidean distance $\euc{Z_{i}-Z_{i'}}$. Define the adjacency matrix $A_0 \in \{ 0,1 \}^{m\times m}$ such that $(A_0)_{ii'}=1$ if $Z_{i'}$ is one of the $k$ closest vectors to $Z_{i}$, and $(A_0)_{ii'}=0$ otherwise.
\item Repeat this construction on $\bar{Z}_{1},\dots, \bar{Z}_{m}$, and compute the corresponding adjacency matrix $\bar{A}$. 
\item Compute the overlap 
\begin{align*}
    \frac{1}{m} \langle A_0, \bar{A}\rangle_F.
\end{align*}
\end{enumerate}
We set $k=50$ for this metric, which is reasonable following \citet{ahlmann-eltze_ComparisonTransformationsSinglecell_2023}.

\subsection{Hyperparameters and transformations for SPROD, GraphPCA and STAGATE}
\label{appsubsec: hyperparam-competing}

We sweep over the following hyperparameters to assess and optimize the performance of SPROD, GraphPCA and STAGATE.
\begin{itemize}
    \item \textbf{SPROD}. The spot neighborhood radius ratio \texttt{sprod\_R}, the dimension of the latent space \texttt{sprod\_latent\_dim}, the regularization term for the similarity graph construction \texttt{sprod\_graph\_reg}, and the regularization term for the denoising weights \texttt{sprod\_weight\_reg}; see \url{https://github.com/yunguan-wang/SPROD} for further details. 
    \item \textbf{GraphPCA}. The number of principal components \texttt{n\_components}, the number of nearest neighbours per pixel \texttt{n\_neighbors}, and the regularization strength \texttt{\_lambda}; see \url{https://github.com/YANG-ERA/GraphPCA}.
    \item \textbf{STAGATE}. The cutoff radius for the neighbourhood graph \texttt{rad\_cutoff}; see \url{https://stagate.readthedocs.io/en/latest/T6_Denoising.html}. Another tuneable hyperparameter is \texttt{alpha}, which controls the strength of the cell-type-aware module. However, the authors \citet{dong_DecipheringSpatialDomains_2022} do not recommend a non-zero value of \texttt{alpha} for high spatial resolution datasets such as MOSTA.
\end{itemize}

\subsubsection*{Hyperparameter tuning process and final choices at test time}

For each downsampling level $R$ tested, we generate a noisy matrix $\mathbf{Y}_{\text{val}}(R)$ from the original counts $\mathbf{C}_0$ (as in \Cref{subsec: denoising_interpolation_tests}), and this is treated as validation data. $\mathbf{Y}_{\text{val}}(R)$ is fed into the denoising methods being tuned, and the performance metrics of the denoised outputs are tracked over various hyperparameter combinations. The best-performing hyperparameters with respect to label transfer accuracy are then selected for this downsampling level (one could also similarly optimize for kNN overlap or relative error). These hyperparameter choices, reported in \Cref{tab: sprod-hyperparams}, \Cref{tab: GPCA-hyperparams} and \Cref{tab: STAGATE-hyperparams}, are used for the final comparison tests presented in \Cref{fig:denoising_test_comparisons} and \Cref{fig:denoising_comparison_images}.
Note that in these final tests, the noisy matrices $\mathbf{Y}$ are generated by downsampling $\mathbf{C}_0$ \emph{independently} from the validation data $\mathbf{Y}_{\text{val}}(R)$. 

\begin{table}[h]
    \centering
    \csvreader[tabular=c|c c c c, table head= Reads per pixel & \texttt{sprod\_R} & \texttt{sprod\_latent\_dim} & \texttt{sprod\_graph\_reg} & \texttt{sprod\_weight\_reg}\\\hline, late after last line=\\\hline]{Tables/params-250720-vanilla-nonorm-sprod_clopt.csv}{}{%
        \csvcoliii & \csvcoliv & \csvcolv & \csvcolvi & \csvcolvii %
    }
    \caption{\textbf{Hyperparameters for SPROD}. The first column quantifies the downsampling level in terms of reads per pixel. The reported hyperparameter combinations are the ones that produced the highest label transfer accuracies on the validation data $\mathbf{Y}_{\mathrm{val}}(R)$ at the various downsampling levels.} 
    \label{tab: sprod-hyperparams}
\end{table}

\begin{table}[h]
    \centering
    \csvreader[tabular=c|c c c, table head= Reads per pixel & \texttt{\_lambda} & \texttt{n\_neighbors} & \texttt{n\_components}\\\hline, late after last line=\\\hline]{Tables/params-250729-GraphPCA_clopt.csv}{}{%
        \csvcoliii & \csvcoliv & \csvcolv & \csvcolvi %
    }
    \caption{\textbf{Hyperparameters for GraphPCA}. The first column quantifies the downsampling level in terms of reads per pixel. The reported hyperparameter combinations are the ones that produced the highest label transfer accuracies on the validation data $\mathbf{Y}_{\mathrm{val}}(R)$ at the various downsampling levels.} 
    \label{tab: GPCA-hyperparams}
\end{table}

\begin{table}[h]
    \centering
    \csvreader[tabular=c|c, table head= Reads per pixel & \texttt{rad\_cutoff}\\\hline, late after last line=\\\hline]{Tables/params-250729-STAGATE_clopt.csv}{}{%
        \csvcoliii & \csvcoliv %
    }
    \caption{\textbf{Hyperparameters for STAGATE}. The left column quantifies the downsampling level in terms of reads per pixel. The reported hyperparameters are the ones that produced the highest label transfer accuracies on the validation data $\mathbf{Y}_{\mathrm{val}}(R)$ at the various downsampling levels.} 
    \label{tab: STAGATE-hyperparams}
\end{table}

\subsubsection*{Transformations on the data}

In the current data pipeline of denoising followed by evaluation, data transformation by the function $\texttt{log1p}$ appears as a step after denoising and before computing label transfer accuracy and kNN overlap (henceforth referred to as ``transform after denoising"). 
As discussed by \citet{ahlmann-eltze_ComparisonTransformationsSinglecell_2023}, such transformations are often beneficial for downstream analyses. 
The authors of SPROD, GraphPCA and STAGATE in fact suggest applying $\texttt{log1p}$ or analytic Pearson residuals~\cite{lause_AnalyticPearsonResiduals_2021} even before denoising (``transform before denoising"). In our experiments however, we
observed significantly worse label transfer accuracies
if these transformations are done too early;
see the following paragraph for details.
Consequently, for the comparison tests presented in \Cref{fig:denoising_test_comparisons} and \Cref{fig:denoising_comparison_images}, we provide (scaled) counts matrices as inputs to these methods, without any nonlinear transformations.

For a fixed downsampling level of $\sim 97$ reads per pixel, we generate a noisy matrix $\mathbf{Y}$. SPROD, GraphPCA and STAGATE are run with various hyperpameter combinations under both scenarios ``transform before denoising" and ``transform after denoising". The resulting label transfer accuracies are presented in \Cref{fig:sprod-pre-post} for SPROD, \Cref{fig:gpca-pre-post} for GraphPCA, and \Cref{fig:stagate-pre-post} for STAGATE. Note that the ``transform before denoising" scenario for GraphPCA employs analytic Pearson residuals as recommended by \citet{yang_GraphPCAFastInterpretable_2024}. The remaining cases simply use the standard \texttt{log1p} transformation. 

\begin{figure}
    \centering
    \includegraphics[width=\textwidth]{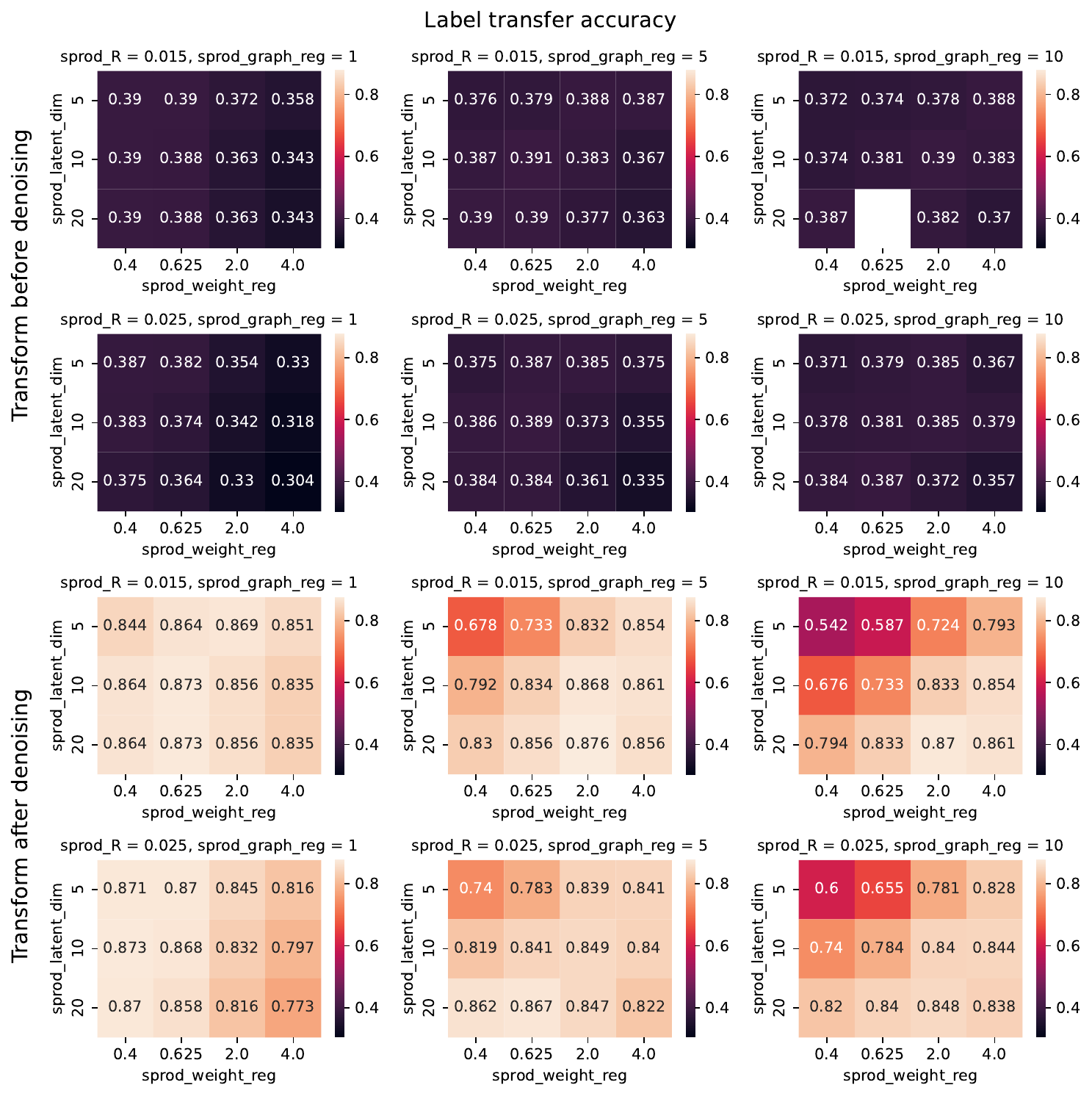}
    \caption{Comparing the denoising performance of SPROD under the scenarios ``transform before denoising" and ``transform after denoising", in terms of label transfer accuracy, over various hyperparameter combinations. Transforming after denoising achieves much higher label transfer accuracies.}
    \label{fig:sprod-pre-post}
\end{figure}

\begin{figure}
    \centering
    \includegraphics[width=\textwidth]{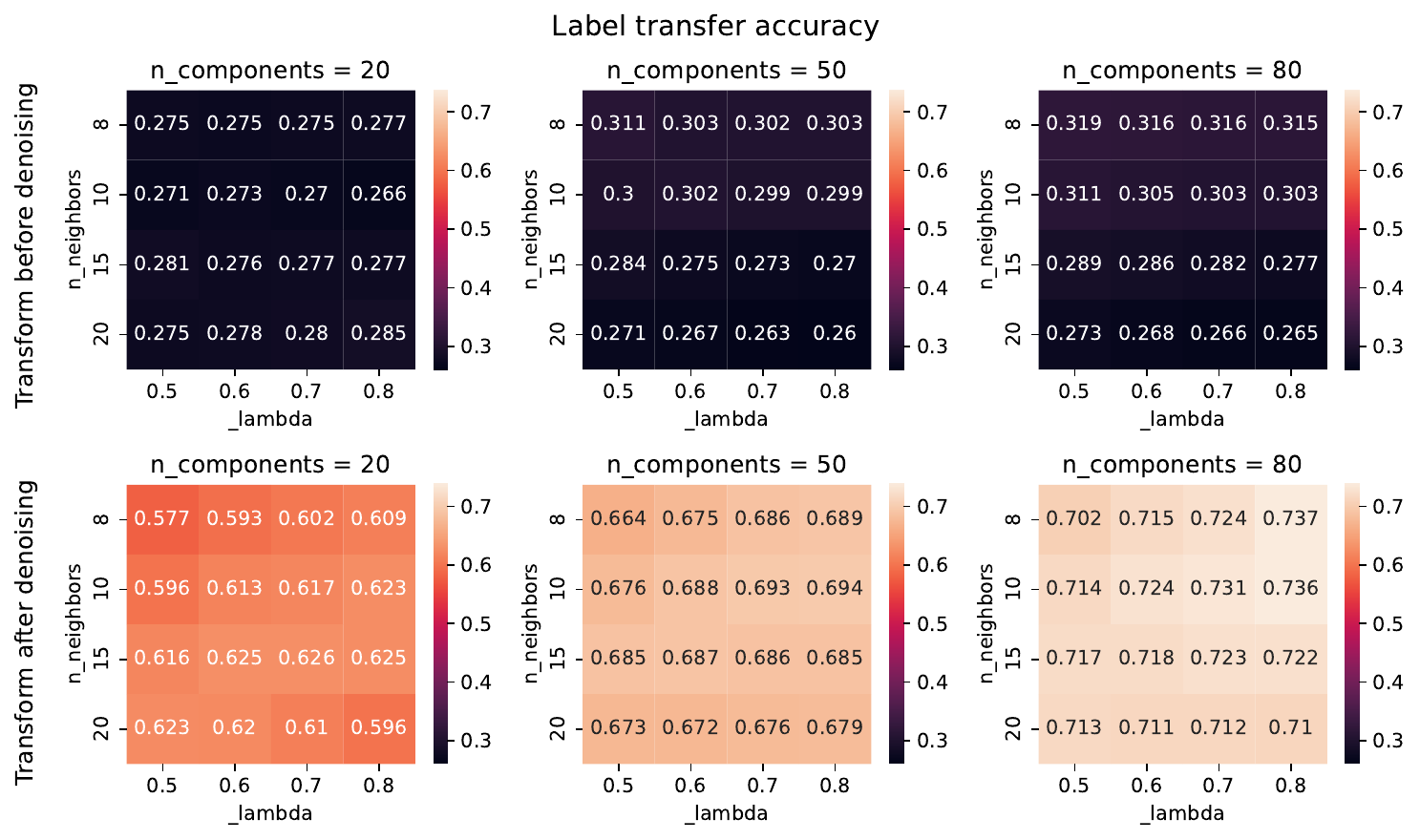}
    \caption{Comparing the denoising performance of GraphPCA under the scenarios ``transform before denoising" and ``transform after denoising", in terms of label transfer accuracy, over various hyperparameter combinations. Transforming after denoising achieves much higher label transfer accuracies.}
    \label{fig:gpca-pre-post}
\end{figure}

\begin{figure}
    \centering
    \includegraphics[width=0.4\textwidth]{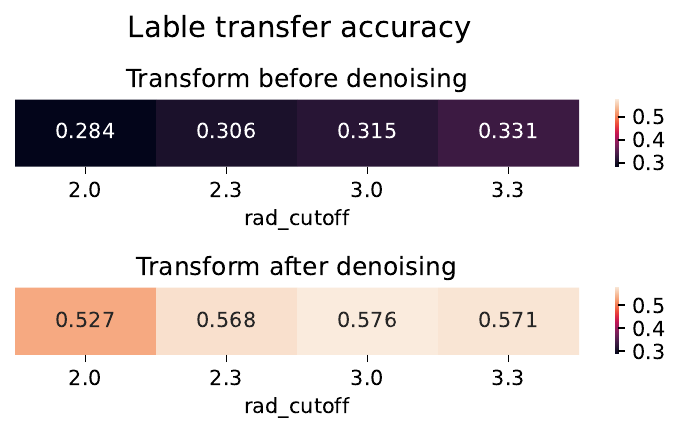}
    \caption{Comparing the denoising performance of STAGATE under the scenarios ``transform before denoising" and ``transform after denoising", in terms of label transfer accuracy, over different values of \texttt{rad\_cutoff}. Transforming after denoising achieves higher label transfer accuracies.}
    \label{fig:stagate-pre-post}
\end{figure}

\end{document}